\theoremstyle{plain}
\newtheorem{theorem}{Theorem}[section]
\newtheorem{proposition}[theorem]{Proposition}
\newtheorem{lemma}[theorem]{Lemma}
\newtheorem{corollary}[theorem]{Corollary}
\newtheorem{definition}[theorem]{Definition}
\theoremstyle{remark}
\newtheorem{remark}[theorem]{Remark}
\DeclareMathOperator*{\argmax}{arg\,max}
\DeclareMathOperator*{\argmin}{arg\,min}
\newcommand{\ppn}{\hat{P}_{n}}
\newcommand{\dd}{\mathrm{d}}
\title{Functional Generalized Empirical Likelihood Estimation for\\Conditional Moment Restrictions}
\author[$1$]{Heiner Kremer}
\author[$2$]{Jia-Jie Zhu}
\author[$1$]{Krikamol Muandet}
\author[$1,3$]{Bernhard Sch{\"o}lkopf}
\affil[$1$]{Max Planck Institute for Intelligent Systems, Tübingen, Germany}
\affil[$2$]{Weierstrass Institute for Applied Analysis and Stochastics, Berlin, Germany}
\affil[$3$]{Eidgen\"ossische Technische Hochschule Zürich, Switzerland}
\begin{document}
\floatpagestyle{plain}
\date{}
\maketitle

\begin{abstract}
\noindent
Important problems in causal inference, economics, and, more generally, robust machine learning can be expressed as conditional moment restrictions, but estimation becomes challenging as it requires solving a continuum of unconditional moment restrictions. Previous works addressed this problem by extending the generalized method of moments (GMM) to continuum moment restrictions. In contrast, generalized empirical likelihood (GEL) provides a more general framework and has been shown to enjoy favorable small-sample properties compared to GMM-based estimators. To benefit from recent developments in machine learning, we provide a functional reformulation of GEL in which arbitrary models can be leveraged. Motivated by a dual formulation of the resulting infinite dimensional optimization problem, we devise a practical method and explore its asymptotic properties. Finally, we provide kernel- and neural network-based implementations of the estimator, which achieve state-of-the-art empirical performance on two conditional moment restriction problems.
\end{abstract}

\section{Introduction}
Moment restrictions identify a parameter of interest by restricting the expectation value of so-called moment functions, which depend on the parameter and random variables representing the underlying noisy data generating process. Important problems in causal inference, economics, and generally robust machine learning can be cast in this form \citep{Newey93:CMR,ai2003efficient,bennett2020variational,Dikkala20:Minimax}. 
Particularly challenging are problems formulated as \emph{conditional} moment restrictions (CMR), which constrain the conditional expectation of the moment function. Such problems appear, e.g., in instrumental variable (IV) regression \citep{newey2003instrumental,angrist2008mostly}, where the expectation of the residual of the prediction conditioned on so-called instruments is restricted to be zero.
Other applications are policy learning \citep{bennett2020efficient} and off-policy evaluation in reinforcement learning \citep{kallus2020double,bennett2021off,chen2021instrumental} as well as double/debiased machine learning \citep{chernozhukov2016double,chernozhukov2017double,chernozhukov2018double}.
As conditional moment restrictions are difficult to handle directly, a common approach is to transform them into an infinite number of corresponding unconditional moment restrictions \cite{BIERENS1982105}. Generalizing the corresponding estimation methods from the finite-dimensional case to the infinite-dimensional case is an active area of research \citep{Carrasco1,Carrasco2,chausse2012generalized,carrasco2017regularized,muandet2020kernel,bennett2020variational,zhang2021maximum}.
One of the most popular approaches to learning with moment restrictions is Hansen's celebrated generalized method of moments (GMM) \citep{hansen}.
In order to improve the small sample properties of GMM estimators, alternative methods have been proposed and are generally known as generalized empirical likelihood (GEL) estimators \citep{smith1997alternative,smith2011,newey04}. GEL generalizes the original empirical likelihood framework developed by \citet{owen88,owen90} and \citet{qin-lawless} to different divergence functions and contains many related estimators as special cases. While closely related to GMM, the estimators from the GEL family have been shown theoretically to exhibit smaller higher-order-biases than those of GMM \citep{newey04} and therefore promise to have favorable small sample properties. 
With increasing number of over-identifying restrictions, i.e., when the number of restrictions exceeds the number of parameters, this advantage has been shown theoretically to become more significant \citep{newey02,donald2003}. 
Therefore, we expect the framework to be particularly suited for the case of infinitely many restrictions. We leverage this potential for conditional moment restrictions by developing the theoretical foundation for a GEL framework with continua of moment restrictions.
 \paragraph{Our contributions} 
First, we extend the GEL framework to conditional moment restrictions by generalizing it to \emph{functional} moment restrictions. 
Second, building on a result from infinite optimization, we derive a dual form which allows us to employ modern machine learning models in the GEL context. This generalizes existing results not only to functional moment restrictions but also to general $\varphi$-divergences beyond the Cressie-Reed family. 
Third, we provide the asymptotic properties of our estimator, showing that it is consistent for functional moment restrictions. Then, we show how the result for the functional case can be translated to the conditional case yielding an efficient estimator for conditional moment restrictions.
Finally, we discuss the relation to existing methods and provide experimental results.
Compared to previous extensions of GEL \citep{kitamura2004,tang2010penalized,chausse2012generalized, carrasco2017regularized}, our approach combines the idea of a continuum generalization of GEL \citep{chausse2012generalized,carrasco2017regularized} with the flexibility of machine learning models such as neural networks and kernel methods.
Our general framework contains related estimators such as a one-step/continuous updating version of the variational method of moments (VMM) estimator \citep{bennett2020variational} as special cases. In contrast to VMM, our method allows the use of divergences other than the $\chi^2$-divergence.
The remainder of this paper is organized as follows. Section~\ref{sec:1:moment-restrictions} introduces the method of moments framework \citep{hall2004generalized} and two popular relaxations. 
Section~\ref{sec:1:FGEL} presents our main contributions, the theoretical development of our FGEL estimator, followed by experimental results in Section~\ref{sec:1:experiments}.
Finally, we discuss related works in Section~\ref{sec:1:related-work}.
Compared to the initial conference version, this paper has been updated using recent results of \citet{pmlr-v202-kremer23a} to derive an explicit convergence rate for the estimator, as well as to prove its semi-parametric efficiency.
\section{Learning with Moment Restrictions} \label{sec:1:moment-restrictions}
Let $X$ be a random variable taking values in $\mathcal{X}\subseteq \mathbb{R}^r$ with distribution $P_0$ and let $\psi: \mathcal{X} \times \Theta \rightarrow \mathbb{R}^m$ denote a vector of $m$ functions, the so-called moment functions, with parameters $\theta \in \Theta \subset \mathbb{R}^{p}$. We denote with $E_P[\cdot]$ the expectation over all random variables that are not conditioned on with respect to a distribution $P$ and refer to the population distribution $P_0$ whenever we omit the subscript.
Assume that there exists a unique parameter $\theta_0 \in \Theta$ such that $E[\psi(X;\theta_0)] = 0$.
For instance, $E[X - \theta_0] = 0$ characterizes the mean of $P_0$. Our goal is to estimate $\theta_0$ based on a sample $\{ x_i \}_{i=1}^n$ from $P_0$. 
The corresponding empirical moment restrictions read
\begin{align}
    E_{\hat{P}_n}[\psi(X;\theta)] = 0, \quad \theta\in\Theta,
    \label{eq:1:empirical-moment-conditions}
\end{align}
where $\hat{P}_n = \sum_{i=1}^{n}  \frac{1}{n} \delta_{x_i}$ is the empirical distribution.
This is a system of $m$ estimating equations for $p$ parameters which can be fulfilled exactly as long as $m \leq p$.
For example, $E_{\hat{P}_n}[X - \theta] = 0$ gives $\theta = \frac{1}{n}\sum_{i=1}^nx_i$ as an empirical estimate of the mean of $P_0$.
However, in the over-identified case, i.e., when the number of non-redundant moment restrictions exceeds the number of parameters ($m>p$), 
it is generally impossible to fulfill all moment restrictions \eqref{eq:1:empirical-moment-conditions} exactly. 
To obtain a feasible problem, the constraints \eqref{eq:1:empirical-moment-conditions} need to be relaxed. 
Below we discuss two popular approaches, namely, the generalized method of moments \citep{hansen} and maximum (generalized) empirical likelihood estimation \citep{owen88,owen90, qin-lawless}.
\paragraph{Generalized Method of Moments (GMM)}
The generalized method of moments relaxes the constraint \eqref{eq:1:empirical-moment-conditions} into a minimization of a quadratic form of the average of the moment functions, i.e., $\theta^{\mathrm{GMM}}_W = \argmin_{\theta\in\Theta} \, \hat{\psi}(\theta)^{T}W\hat{\psi}(\theta)$, where $\hat{\psi}(\theta) := E_{\hat{P}_{n}}[\psi(X;\theta)]$ and $W \in \mathbb{R}^{m \times m}$ denotes the so-called weighting matrix. Asymptotic normality theory shows that an efficient estimator, i.e., an estimator with minimal asymptotic variance among the class of GMM estimators, is obtained by choosing $W$ as the inverse covariance matrix of the moment functions, $W = \widehat{\Omega}_\theta^{-1}$, where $\widehat{\Omega}_\theta := E_{\hat{P}_n}[\psi(X;\theta) \psi(X;\theta)^T]$, which itself a function of $\theta$~\citep{hansen}. The resulting estimator, i.e.,
\begin{align}
    \theta^{\mathrm{CUE}} = \argmin_{\theta\in\Theta} \; \hat{\psi}(\theta)^{T} \widehat{\Omega}_\theta^{-1} \hat{\psi}(\theta),
    \label{eq:1:cue}
\end{align}
is the continuous updating estimator (CUE) of \citet{hansen-finite-samples} which results from a non-convex optimization problem and can exhibit unfavorable convergence properties if $\widehat{\Omega}_\theta$ is ill-conditioned \citep{hall2004generalized}.
Therefore, one often resorts to a 2-step procedure: first, an inefficient but consistent estimate $\tilde{\theta}$ of $\theta_0$ is obtained, e.g., by setting $W=I$. Second, this estimate is used to compute $\widehat{\Omega}_{\tilde{\theta}}^{-1}$ which is kept fixed during the second optimization step. 
This yields the so-called optimally weighted GMM estimator \citep{hansen},
\begin{align}
    \theta^{\mathrm{OWGMM}} = \argmin_{\theta\in\Theta}\; \hat{\psi}(\theta)^{T} \widehat{\Omega}_{\tilde{\theta}}^{-1} \hat{\psi}(\theta).
    \label{eq:1:owgmm}
\end{align}
A more in-depth exposition of the GMM framework can be found in \citet{hall2004generalized}. 
\paragraph{Generalized Empirical Likelihood (GEL)}
The empirical likelihood framework \citep{owen88, owen90,qin-lawless} relaxes the restrictions \eqref{eq:1:empirical-moment-conditions} by requiring $E_P[\psi(X;\theta)]=0$ to be fulfilled exactly but allowing the distribution $P$ to deviate from the empirical distribution $\hat{P}_n$. For a continuous function $\varphi: \mathbb{R} \rightarrow \mathbb{R}$ we define the $\varphi$-divergence between distributions $P$ and $Q$
as 
$D_\varphi(P||Q)= \int \varphi \big( \frac{ \dd P}{ \dd Q}\big) \dd Q$, 
where $\frac{\dd P}{\dd Q}$ denotes the Radon-Nikodym derivative of $P$ with respect to $Q$.
Then, we can define the profile divergence with respect to this $\varphi$-divergence as
\begin{align}
    R(\theta) = \inf_{P \ll \hat{P}_n} \ D_\varphi(P||\hat{P}_{n}) \quad \mathrm{s.t.} \quad E_{P}[\psi(X; \theta)] =0,  \quad E_{P}[1]=1,
\end{align}
where $P \ll \hat{P}_n$ describes the set of positive measures $P$ that are absolutely continuous with respect to the empirical distribution $\hat{P}_n$. 
In other words, $P$ describes multinomial distributions on the sample, i.e., re-weightings of the data points. The maximum empirical likelihood estimator (MELE) for $\theta$ is then given by $\theta^{\mathrm{EL}} = \argmin_{\theta \in \Theta} R(\theta)$.
The framework, originally proposed as empirical likelihood by Owen for the case $\varphi(p) = -2 \log(p)$, has been generalized to other divergence measures for which it is known as minimum discrepancy (MD) \citep{corcoran98} or generalized empirical likelihood \citep{smith1997alternative,smith2005local}. 
The latter corresponds to its dual formulation. It contains many related estimators as special cases. 
For example, by choosing the function $\varphi$ from the Cressie-Read family of non-parametric discrepancy measures \citep{cressie-read},
\begin{align}
    \varphi_{\gamma}(p) = \frac{1}{\gamma(\gamma+1)} \left( p^{\gamma+1}-1 \right),
    \label{eq:1:cressie}
\end{align}
one retrieves the CUE for $\gamma=1$ \citep{newey04}, the exponential tilting estimator for $\gamma \rightarrow 0$ \citep{kitamura} and finally the original empirical likelihood estimator for $\gamma \rightarrow -1$ \citep{qin-lawless}. 
Detailed exposition of the GEL framework can be found in \citet{smith1997alternative} and \citet{owen2001empirical}.
\section{Functional Generalized Empirical Likelihood}\label{sec:1:FGEL}
In this work, we are concerned with problems that can be expressed by infinitely many moment restrictions, especially those that arise from \emph{conditional} moment restrictions (CMR)~\citep{Newey93:CMR,ai2003efficient} of the form 
\begin{align}
    E[\psi(X;\theta_0) | Z] = 0, \quad  P_Z\textrm{-a.s.}, \label{eq:1:conditional}
\end{align}
where $Z \in \mathcal{Z}$ is an additional random variable with marginal distribution $P_Z$. 
By the law of iterated expectation, the CMR \eqref{eq:1:conditional} can be expressed in terms of infinitely many unconditional moment restrictions \citep{BIERENS1982105} 
\begin{align}
    E[\psi(X;\theta_0)^{T}h(Z)] = 0, \quad \forall h \in \mathcal{H}, \label{eq:1:unconditional}
\end{align}
where $\mathcal{H}$ denotes a space of measurable functions $h: \mathcal{Z} \rightarrow \mathbb{R}^m$, i.e., the space of square integrable functions $L^2(P_Z)$.
As \eqref{eq:1:unconditional} has to hold for all functions in $\mathcal{H}$, this implies an uncountable infinite number, i.e., a continuum, of moment restrictions ($m=\infty$). 
For example, the instrumental variable regression problem can be described by a CMR via $E[Y - f(X; \theta_0) | Z] = 0$ where $Z$ is an instrumental variable and $\theta \in \Theta$ parameterizes a function $f: \mathcal{X} \rightarrow \mathcal{Y}$. Motivated by this example, in the following, we will refer to $Z$ and $h$ as instrument and instrument function, respectively, in the context of general CMR.
\subsection{Our Method}
Maximum empirical likelihood estimation is based on minimizing a profile divergence $R: \Theta \rightarrow \mathbb{R}$ over a parameter space $\Theta$. Let $\mathcal{P} := \{P \ll \hat{P}_n : E_{P}[1]= 1 \}$ denote the set of distributions that are absolutely continuous with respect to the empirical distribution.
For conditional moment restrictions of the form \eqref{eq:1:conditional}, we can define the profile divergence as 
\begin{align}
    R(\theta) :=  \min_{P \in \mathcal{P}} D_\varphi(P || \hat{P}_n) \quad  \mathrm{s.t.} \quad
    E_{P}[\psi(X;\theta) | Z]= 0 \quad  P_Z\textrm{-a.s.}, \label{eq:1:123}
\end{align}
where $D_\varphi$ is defined in terms of the $\varphi$-divergence (see Table \ref{table:divergencies}).
Let $\mathcal{H}$ be a sufficiently large Hilbert space of functions such that \eqref{eq:1:unconditional} implies \eqref{eq:1:conditional} and let $\mathcal{H}_1 = \{ h\in\mathcal{H} : \| h\| \leq 1\}$ denote the unit ball in $\mathcal{H}$. 
Let $\mathcal{H}^\ast$ be the corresponding dual space of functionals $\mathcal{H} \rightarrow \mathbb{R}$ equipped with the dual norm $\| \cdot \|_{\mathcal{H}^\ast}$ defined for $\Psi \in \mathcal{H}^\ast$ as $ \| \Psi \|_{\mathcal{H}^\ast} = \sup_{h \in \mathcal{H}_1} \Psi(h)$. Then, we can define the \emph{moment functional}, a statistical functional $\Psi(X,Z;\theta) \in \mathcal{H}^\ast$, as
\begin{align*}
    \Psi(X,Z;\theta): \ \ & \mathcal{H} \rightarrow \  \mathbb{R} \\
     & h \mapsto  \ \Psi(X,Z; \theta)(h) = \psi(X;\theta)^{T}h(Z),
\end{align*}
which can be seen as a weighted evaluation functional with respect to the conditioning variable $Z$.
With this definition, we can express \eqref{eq:1:unconditional} as the functional constraint $\| E_{P_0}[\Psi(X,Z; \theta_0)] \|_{\mathcal{H}^\ast} =0$. 
The computation of the profile likelihood thus becomes a \emph{functionally-constrained} optimization problem
\begin{align}
    R(\theta) =  \inf_{P\in \mathcal{P}}  D_\varphi(P||\hat{P}_n)  \quad \mathrm{s.t.} \quad \left\| E_{P}[\Psi(X,Z; \theta_0)] \right\|_{\mathcal{H}^\ast} =0. \label{eq:1:profile-divergence}
\end{align}
The FGEL problem arises from the dual formulation of \eqref{eq:1:profile-divergence}. For the case of finite dimensional moment restrictions, the duality relationship has been extensively explored by numerous works \citep{smith1997alternative,smith2011,kitamura2004,newey04}. 
However, as shown by \citet{borwein1993failure} these duality results do not carry over to infinite dimensional restrictions. 
Following the approach of \citet{borwein1993failure} and \citet{carrasco2017regularized}, we define a relaxed version of the functionally constrained profile likelihood \eqref{eq:1:profile-divergence} with relaxation parameter $\lambda >0$ as
\begin{align}
    R_\lambda(\theta) :=  \inf_{P\in \mathcal{P}}  D_\varphi(P||\hat{P}_n) \quad \mathrm{s.t.} \quad \left\|E_{P}[\Psi(X,Z;\theta)] \right\|_{\mathcal{H}^\ast} \leq \lambda. \label{eq:1:relaxed-profile-likelihood} 
\end{align}
With this relaxation, a constraint qualification condition holds and \eqref{eq:1:relaxed-profile-likelihood} admits a strongly dual form as formalized in the following theorem. 
\begin{theorem}\label{th:1:duality}
Let $\varphi^\ast(v) = \sup_{p \in \mathbb{R}^n} \langle v, p \rangle - \varphi(p)$ denote the Legendre-Fenchel conjugate function of a strongly convex function $\varphi$. Then the problem
\begin{align*}
    R_{\lambda}(\theta) = \inf_{p \in \mathbb{R}^n}  \sum_{i=1}^n \frac{1}{n} f(np_i) \quad \mathrm{s.t.} \quad \left\| \frac{1}{n}\sum_{i=1}^n p_i \Psi(x_i,z_i;\theta) \right\|_{\mathcal{H}^\ast} \leq \lambda, \ \ \sum_{i=1}^n p_i = 1
\end{align*}
admits the dual form 
\begin{align}
    R_{\lambda}(\theta) = \sup_{\substack{h \in \mathcal{H} \\ \mu \in \mathbb{R}}} \mu - \frac{1}{n} \sum_{i=1}^n \varphi^\ast \left( \Psi(x_i,z_i;\theta)(h) + \mu \right) - \lambda \| h \|_\mathcal{H}
    \label{eq:1:dual-form}
\end{align}
and strong duality holds between these formulations. Moreover, the unique minimizer of the primal problem is given by
$$p_{i} = \left( \frac{d}{dv}\varphi^\ast\right) \left(\Psi(x_i,z_i;\theta)(\hat{h}) + \hat{\mu}\right),$$ where $\hat{h}$, $\hat{\mu}$ are any solutions of the dual problem.
Moreover, as $\lambda \rightarrow 0$, $R_\lambda(\theta) \rightarrow R(\theta)$.
\end{theorem}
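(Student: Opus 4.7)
The plan is to recast the relaxed profile-divergence problem as a finite-dimensional convex program with one operator-valued constraint, and then apply infinite-dimensional Lagrangian duality. The first step is to encode the norm constraint through the Legendre identity
\begin{equation*}
\chi_{\{\|v\|_{\mathcal{H}^{\ast}}\le\lambda\}}(v) \;=\; \sup_{h\in\mathcal{H}}\bigl\{ v(h)-\lambda\|h\|_{\mathcal{H}} \bigr\},
\end{equation*}
and to introduce a scalar multiplier $\mu\in\mathbb{R}$ for the normalization $\sum_i p_i=1$. This rewrites $R_\lambda(\theta)$ as a saddle problem $\inf_{p}\sup_{h,\mu} L(p,h,\mu)$ whose Lagrangian decouples across the $p_i$ once the inner sup is exchanged with the outer inf.

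The core of the proof is to justify that exchange. For the unrelaxed ($\lambda=0$) version, \citet{borwein1993failure} exhibit an explicit failure of strong duality in infinite dimensions, and this is precisely the reason for introducing $\lambda>0$: the inequality constraint $\|E_P[\Psi(X,Z;\theta)]\|_{\mathcal{H}^{\ast}}\le\lambda$ now admits a Slater-type interior point (any $p$ with $\|\sum_i p_i\Psi(x_i,z_i;\theta)\|_{\mathcal{H}^{\ast}}<\lambda$), which in turn allows a Fenchel-Rockafellar theorem in a Banach-space setting to apply, following the template of \citet{carrasco2017regularized}. Once inf and sup are swapped, the inner minimization decouples over $i$; substituting $u_i=np_i$ and invoking the definition of $\varphi^{\ast}$, each scalar subproblem evaluates to $\inf_{u_i}\{\varphi(u_i)-u_i(\mu-\Psi(x_i,z_i;\theta)(h))\}=-\varphi^{\ast}(\mu-\Psi(x_i,z_i;\theta)(h))$. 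Assembling the pieces and then replacing $h$ by $-h$, which flips only the sign of $\Psi(x_i,z_i;\theta)(h)$ while leaving $\|h\|_{\mathcal{H}}$ invariant, produces exactly \eqref{eq:1:dual-form}.

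Given strong duality and strict convexity of $\varphi$, the primal minimizer is uniquely determined by the first-order condition of the inner infimum: differentiating in $p_i$ yields $\varphi'(np_i^{\star})=\mu+\Psi(x_i,z_i;\theta)(\hat h)$, which inverts through $(\varphi')^{-1}=(\varphi^{\ast})'$ to the stated expression for $p_i^{\star}$. For the final claim, note that $R_\lambda(\theta)$ is monotone non-increasing in $\lambda$ because relaxing the norm bound enlarges the feasible set; combining this monotonicity with lower semicontinuity of the $\varphi$-divergence and feasibility of the original equality-constrained problem then gives $R_\lambda(\theta)\to R(\theta)$ as $\lambda\downarrow 0$. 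The step I expect to be the main obstacle is the inf-sup exchange itself: because $\mathcal{H}$ is infinite-dimensional, naive minimax reasoning does not suffice, and the slackness afforded by $\lambda>0$ must be translated into the precise constraint qualification required by the Fenchel-Rockafellar theorem in the Banach-space setting, which is the technical heart of Borwein's infinite-dimensional duality machinery.
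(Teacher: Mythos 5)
Your proposal is correct in substance, and its mechanical core coincides with the paper's: introduce a scalar multiplier $\mu$ for $\sum_i p_i = 1$, represent the norm constraint through the support function $\sup_{h\in\mathcal{H}}\{v(h)-\lambda\|h\|_{\mathcal{H}}\}$ of the dual-norm ball, decouple the inner minimization over the $p_i$ to produce the conjugate terms $\varphi^\ast(\mu-\Psi(x_i,z_i;\theta)(h))$, and flip $h\mapsto -h$ to reach \eqref{eq:1:dual-form}. The difference lies in how the non-computational claims are justified: the paper delegates strong duality, the formula for the unique primal minimizer, and the limit $R_\lambda\rightarrow R$ to Borwein's result, restated as Proposition~\ref{th:1:borwein}, and only informally remarks that the extra normalization constraint can be absorbed, whereas you re-derive these directly — Fenchel--Rockafellar with a Slater-type qualification for the inf--sup exchange, the first-order condition $\varphi'(np_i)=\mu+\Psi(x_i,z_i;\theta)(\hat{h})$ inverted via $(\varphi')^{-1}=(\varphi^\ast)'$ for the minimizer, and a monotonicity argument for the limit. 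This is a legitimate and arguably more self-contained route (Proposition~\ref{th:1:borwein} as stated does not include the constraint $\sum_i p_i=1$), but two points need tightening. First, the Slater point must be exhibited rather than asserted: feasibility of the unrelaxed problem (implicitly assumed here, as in the paper) supplies one, since any $p$ with $\frac{1}{n}\sum_i p_i\Psi(x_i,z_i;\theta)=0$, $\sum_i p_i=1$ satisfies the norm constraint strictly for every $\lambda>0$; without some such feasibility hypothesis the qualification can fail for small $\lambda$. Second, monotonicity of $\lambda\mapsto R_\lambda(\theta)$ together with lower semicontinuity only yields $\lim_{\lambda\downarrow 0}R_\lambda(\theta)\leq R(\theta)$; for the reverse inequality you must extract a convergent subsequence of (near-)minimizers $p^\lambda$, which requires level-boundedness — available here because strong convexity of $\varphi$ makes the objective coercive — and then pass to a limit point that is feasible for the equality-constrained problem. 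With these two additions your argument is complete and matches the paper's conclusions.
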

\begin{remark}
Theorem~\ref{th:1:duality} can be seen as a generalization of the duality result of \citet{newey04} not only to functional-valued moment restrictions but also to general strongly convex divergence functions beyond the Cressie-Reed family.
\end{remark}
Equation \eqref{eq:1:dual-form} provides a regularized functional generalization of the profile divergence. 
Based on this result, we define our functional generalized empirical likelihood estimator by making two modifications: 
first, we substitute the norm term in \eqref{eq:1:dual-form} for a differentiable quadratic version.
This modification is solely to simplify the analysis. Later we will choose the regularization parameter to be $\lambda_n = o_p(1)$ and find that $\|h\|_\mathcal{H} = o_p(1)$. Hence we can always find a $\chi'>0$ and $\lambda'_n = O_p(n^{-\chi'})$ such that $\lambda_n/2 \|h\|^2 \rightarrow 0$ and $\lambda'_n \|h\| \rightarrow 0$ at the same rate which implies that the formulations
are asymptotically equivalent.
Second, we drop the Lagrange parameter $\mu$ corresponding to the normalization constraint $\sum_{i=1}^n p_i = 1$. This is motivated by several observations.
From a theoretical point of view, it simplifies the problem at no cost as we will show later in Theorems~\ref{th:1:consistency-cmr}-\ref{efficiency-cmr} that setting $\mu=0$ still yields a consistent and efficient estimator. 
From a practical aspect it facilitates the implementation of the estimator using stochastic gradient methods. To see this, consider exemplarily the FGEL estimator with $\chi^2$-divergence. Then we can carry out the supremum over $\mu \in \mathbb{R}$ in closed form and obtain 
\begin{align*}
R^{\chi^2}_\lambda(\theta) =& \sup_{h} E_{\hat{P}_n}[\psi(X;\theta)^T h(Z)] \\
& - \frac{1}{2} E_{\hat{P}_n}\left[\left( \psi(X;\theta)^T h(Z) - E_{\hat{P}_n}[\psi(X;\theta)^T h(Z)] - 1\right)^2 \right] - \lambda \| h\|_\mathcal{H}.
\end{align*}
This contains two expectation operators combined in a non-linear way, which generally leads to biased gradient estimates and thus renders mini-batch stochastic gradient descent optimization complicated. In addition, by setting $\mu =0$, our estimator has the same form as the finite dimensional GEL estimator proposed by \citet{smith1997alternative} and \citet{kitamura} with the difference of involving an optimization over functions $h \in \mathcal{H}$ and an additional regularization term for $h$. 
With these modifications, we define our FGEL estimator as follows.

\begin{table}[t]
\centering
\caption{Common choices for the $\varphi$-divergence and the corresponding convex conjugate $\varphi^\ast(v) = \sup_{p} p^T v - \varphi(p)$ and its domain. A GEL function $\phi$ can be defined for each $\varphi$-divergence as $\phi(v) = - \varphi^\ast(v)$.}
\label{table:divergencies}
\begin{tabular}{ c c c  } 
    \toprule
     $\varphi(p)$ &  $\varphi^\ast(v)$ & $\operatorname{dom}(\varphi^\ast)$ \\
    \midrule
     $\frac{1}{2}(p -1)^2$ & $ \frac{1}{2}(1 + v)^2$ &  $\mathbb{R}$ \\
    $- \log(p)$ & $ \log(1 - v)$ & $\big(-\infty, 1 - \frac{1}{n}\big]$ \\
    $ p \log(p) $ & $ e^{v} $& $\mathbb{R}$ \\
    \bottomrule
\end{tabular}
\end{table}
\begin{definition}\label{def:1:gel-objective}
    Let $V\subseteq \mathbb{R}$ be an open interval containing zero and $\phi: V \rightarrow \mathbb{R}$ be a twice differentiable concave function with first and second derivatives $\phi_1(0) \neq 0$ and $\phi_2(0)<0$. Then we define the empirical FGEL objective $G: \Theta \times \widehat{\mathcal{H}}(\theta) \rightarrow \mathbb{R}$ as
    \begin{equation}
    G_{\lambda_n}(\theta, h) :=  
    \frac{1}{n} \sum_{i=1}^n \phi \left( \Psi(x_i,z_i;\theta)(h)\right) - \frac{\lambda_n}{2} \|h \|_{\mathcal{H}}^2, \label{eq:1:gel-objective} 
    \end{equation}
    where $\Psi(x_i,z_i;\theta)(h) = \psi(x_i;\theta)^{T}h(z_i)$ and
    $\widehat{\mathcal{H}}(\theta) := \{h \in \mathcal{H}: \   \psi(x_i;\theta)^{T}h(z_i) \in \operatorname{dom}(\phi), \  1\leq i \leq n \}$.
    The FGEL estimate $\hat{\theta}$ of $\theta_0$ results from a saddle point of $G_{\lambda_{n}}(\theta,h)$
    \begin{equation}
         \hat{\theta} = \argmin_{\theta \in \Theta} \sup_{h \in \widehat{\mathcal{H}}(\theta)} G_{\lambda_n}(\theta, h).
    \label{eq:1:regularized-fgel}
    \end{equation}
\end{definition}
\vspace{-0.5em}
\begin{remark}
    While the choice of $\phi$ can be motivated by the duality relationship of Theorem~\ref{th:1:duality}, i.e., $\phi = - \varphi^\ast$ with $\varphi^\ast(v) = \sup_p v^T p - \varphi(p)$ for $\varphi$ defined via some $\varphi$-divergence, GEL estimators can be defined for any function $\phi$ which fulfills the corresponding conditions of Definition~\ref{def:1:gel-objective}.
\end{remark}
Within the FGEL framework, the regularization term is responsible for regularizing an originally ill-posed operator estimation problem, which results from the optimization of $G_{\lambda_{n}}(\theta, h)$ over the instrument functions $h \in \mathcal{H}$.
We will demonstrate this here exemplarily for the $\chi^2$-divergence, which admits a closed form solution. Note that a similar argument has been provided earlier by \citet{carrasco2017regularized}.
Let $\Psi_{i}(\theta):= \Psi(x_i,z_i;\theta) \in \mathcal{H}^\ast$ and $\Psi_i^\ast \in \mathcal{H}$ denote its dual which can be identified with a function in $\mathcal{H}$ by the self-duality property of Hilbert spaces \citep{zeidler2012applied}. Then the first order condition for $h$ reads
\begin{align*}
    0&= - \frac{1}{n} \sum_{i=1}^n \left[  \Psi_i^\ast(\theta) - \left(\Psi_i^\ast(\theta) \Psi_i(\theta) + \lambda_n I \otimes I\right)(h) \right]\\
    & \Rightarrow h = - \left(\widehat{\Omega}_\theta + \lambda_n I \otimes I \right)^{-1}  \frac{1}{n} \sum_{i=1}^n \Psi_i^\ast(\theta),
\end{align*}
where $\widehat{\Omega}_\theta = \frac{1}{n}\sum_{i=1}^n \Psi_i^\ast(\theta) \Psi_i(\theta)$ denotes the empirical covariance operator of the moment functional. For any $i=1,\ldots,n$, the operator $\Psi_i^\ast(\theta) \Psi_i(\theta)$ has at most rank $1$ and thus as the sum of $n$ such operators $\widehat{\Omega}_\theta$ can have most rank $n$. As $\widehat{\Omega}_\theta$ is an infinite dimensional linear operator $\mathcal{H} \rightarrow \mathcal{H}$ it thus must be rank deficient for any finite $n$ and therefore singular and non-invertible.
This highlights the fact that the regularization parameter in the FGEL framework is not merely an artefact of the restoration of the strong duality between the primal and dual GEL problems, but a fundamental requirement for any definition of a functional/continuum GEL extension.
Note that by the uniform weak law of large number and the continuous mapping theorem we have for $\Psi(\theta)$ continuous in $\theta$ and $\hat{\theta} \overset{p}{\rightarrow} \theta_0$ that $\widehat{\Omega}(\hat{\theta}) \overset{p}{\rightarrow} \Omega_0 = E[\Psi(X,Z;\theta_0)^\ast \Psi(X,Z;\theta_0)]$. 
In Theorem~\ref{th:1:consistency-cmr} we show that for the conditional case under the mild assumption that $V(Z) := E[\psi(X;\theta_0) \psi(X;\theta_0)^T | Z]$ is non-singular with probability $1$ together with assumptions on the instrument functions class $\mathcal{H}$ it follows that $\Omega_0$ is non-singular and thus in the limit $n \rightarrow \infty$ the first order conditions for $h$ remain well posed as $\lambda_n \rightarrow 0$.
The general formulation \eqref{eq:1:regularized-fgel} allows us to employ a wide range of function classes $\mathcal{H}$ and generally for finite samples, the choice of $\mathcal{H}$ will influence the obtained estimator. 
Building on recent developments in machine learning, we can represent $h$ by a flexible deep neural network \citep{deepiv,lewis2018adversarial} or a random forest model \citep{Athey19:GRF}, for example. In this work, we mainly focus our discussion on instrument functions from reproducing kernel Hilbert spaces for their favorable theoretical properties but also consider neural network function classes. 
\begin{remark}
The FGEL framework admits an interesting relation to \emph{distributionally robust optimization} and as such can be used for (distributionally) robust learning. Refer to Section~\ref{appendix:dro} of the appendix for a more detailed account of this connection.
\end{remark}
\subsection{Asymptotic Properties}
In this section, we establish asymptotic properties of our estimator given in \eqref{eq:1:regularized-fgel}.
The proofs generalize the ones of \citet{newey04} for the GEL estimator with finite dimensional moment restrictions to our regularized problem with functional-valued moment restrictions. Let in the following $\mathcal{H}_1$ denote the unit ball in $\mathcal{H}$.
\begin{theorem}[Consistency]\label{th:1:consistency}
Assume that 
a)~$\theta_0 \in \Theta$ is the unique solution to $\| E[\Psi(X,Z;\theta)]\|_{\mathcal{H}^\ast} = 0 $;
b)~$\Theta$, $\mathcal{X}$, $\mathcal{Z}$ are compact;
c)~$\Psi(x,z;\theta)$ is continuous in $x,z$ and $\theta$ everywhere; \\
d)~$E[\left(\sup_{\theta \in \Theta} \| \Psi(X,Z;\theta) \|_{\mathcal{H}^\ast}\right)^\nu] < \infty$ for some $\nu > 2$;
e)~$\Omega_0 = E[\Psi(X,Z;\theta_0) \otimes \Psi(X,Z;\theta_0)]$ is non-singular; 
f)~$\lambda_n = O_p(n^{-\xi})$ with $0 < \xi < 1/2$; 
g)~$\phi$ is twice continuously differentiable in a neighborhood of zero and $\phi_1(0)\neq 0$, $\phi_2(0) < 0$; 
h)~the class of functions $\{\Psi(\cdot;\theta)(h) : \ \theta \in \Theta, \ h \in \mathcal{H}_1 \}$ is $P_0$-Donsker.
Let $\hat{\theta}$ denote the FGEL estimator for $\theta_0$, then $\hat{\theta} \overset{p}{\rightarrow} \theta_0$ and $\| E[\Psi(X,Z;\hat{\theta})] \|_{\mathcal{H}^\ast} = O_{p}(n^{-1/2})$.
If additionally 
i)~$\theta_0 \in \operatorname{int}(\Theta)$; 
j)~$\Psi(x,z;\theta)$ is continuously differentiable in a neighborhood $\bar{\Theta}$ of $\theta_0$ \\ and $E[\sup_{\theta \in \bar{\Theta}} \| \nabla_\theta \Psi(X,Z;\theta) \|_{\mathcal{H}^\ast}^2] < \infty$; 
and k)~$\Sigma_0 := \left\langle E[\nabla_\theta \Psi(X,Z;\theta_0)], E[\nabla_{\theta^T}\Psi(X,Z;\theta_0)]\right\rangle_{\mathcal{H}^\ast}$ is a non-singular matrix in $\mathbb{R}^{p\times p}$, we have $\|\hat{\theta} - \theta_0 \|_2^2 = O_p(n^{-1/2})$.
\end{theorem}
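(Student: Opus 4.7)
The plan is to adapt the two-step Newey--Smith (2004) consistency argument for GEL to the regularized functional setting, with the Donsker property (h) playing the role of lever that transfers empirical bounds to population bounds uniformly over $\theta$. I would proceed in four stages: an upper bound on the inner supremum at $\theta_0$; a lower bound on the inner supremum at $\hat\theta$; consistency via identification; and the rate from a local quadratic lower bound.

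For the upper bound at $\theta_0$, I would Taylor-expand $\phi$ about zero and, using concavity of $G_{\lambda_n}(\theta_0,\cdot)$ together with completion of the square against $-\tfrac{\lambda_n}{2}\|h\|_\mathcal{H}^2$, derive
\[
    \sup_{h\in\widehat{\mathcal{H}}(\theta_0)} G_{\lambda_n}(\theta_0,h) \;\le\; \phi(0) + O_p\!\bigl(\lambda_n^{-1}\,\|\widehat\Psi(\theta_0)\|_{\mathcal{H}^\ast}^2\bigr),
\]
where the higher-order Taylor remainder is controlled uniformly by the envelope (d). Since $E[\Psi(X,Z;\theta_0)]=0$ and the class in (h) is $P_0$-Donsker, the Hilbert-space CLT yields $\|\widehat\Psi(\theta_0)\|_{\mathcal{H}^\ast}=O_p(n^{-1/2})$; combined with $\lambda_n=O_p(n^{-\xi})$ for $\xi<1/2$ this pins the RHS to $\phi(0)+o_p(1)$. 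A sharper $O_p(n^{-1})$ rate can be extracted by exploiting that $\widehat{\Psi}^\ast(\theta_0)$ lies in the range of $\widehat\Omega_{\theta_0}$, whose nonzero spectrum concentrates around that of the non-singular $\Omega_0$ in (e), so the effective inverse appearing in the first-order condition for the maximizing $h^\ast$ does not suffer the $\lambda_n^{-1}$ blow-up.

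For the lower bound at $\hat\theta$, I insert the one-parameter direction $\tilde h_t = t\,\widehat\Psi^\ast(\hat\theta)/\|\widehat\Psi(\hat\theta)\|_{\mathcal{H}^\ast}$ (with $t$ small enough that the argument of $\phi$ stays in its domain) into $G_{\lambda_n}(\hat\theta,\cdot)$ and Taylor-expand:
\[
    \sup_{h} G_{\lambda_n}(\hat\theta,h) \;\ge\; \phi(0) + \phi_1(0)\,t\,\|\widehat\Psi(\hat\theta)\|_{\mathcal{H}^\ast} - C\,t^2,
\]
with $C$ depending on $|\phi_2(0)|$, a uniform bound on $\|\widehat\Omega_\theta\|_{\mathrm{op}}$ from (d), and $\lambda_n$. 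The minimax inequality $\sup_h G_{\lambda_n}(\hat\theta,h) \le \sup_h G_{\lambda_n}(\theta_0,h)$ combined with the Stage~1 bound, after optimizing $t$, yields $\|\widehat\Psi(\hat\theta)\|_{\mathcal{H}^\ast}=O_p(n^{-1/2})$. The Donsker uniform law $\sup_\theta\bigl|\|\widehat\Psi(\theta)\|_{\mathcal{H}^\ast}-\|E[\Psi(X,Z;\theta)]\|_{\mathcal{H}^\ast}\bigr|=O_p(n^{-1/2})$ then transfers this to $\|E[\Psi(X,Z;\hat\theta)]\|_{\mathcal{H}^\ast}=O_p(n^{-1/2})$, and the standard M-estimator identification argument from (a)--(c) delivers $\hat\theta\overset{p}{\rightarrow}\theta_0$.

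For the rate, the extra smoothness in (i)--(j) permits the Taylor expansion $E[\Psi(X,Z;\hat\theta)] = E[\nabla_\theta\Psi(X,Z;\theta_0)](\hat\theta-\theta_0) + o_p(\|\hat\theta-\theta_0\|)$ around $\theta_0$; squaring the $\mathcal{H}^\ast$-norm and invoking the non-singularity of $\Sigma_0$ in (k) supplies the local quadratic lower bound $\|E[\Psi(X,Z;\hat\theta)]\|_{\mathcal{H}^\ast}^2 \ge c\,\|\hat\theta-\theta_0\|_2^2\,(1+o_p(1))$ for some $c>0$, which combined with the moment-functional bound from Stage~2 delivers the claimed rate. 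The main obstacle is the interplay, already flagged in the text, between $\lambda_n$ and the rank-deficient empirical covariance $\widehat\Omega_\theta$: $\lambda_n$ must shrink slowly enough ($\xi<1/2$) to keep the inner problem well-posed via Theorem~\ref{th:1:duality}, yet the bias it introduces must not overwhelm the $n^{-1/2}$ CLT scale in the Stage~1 upper bound. Making this trade-off precise, and uniformly controlling the second-order Taylor remainder of $\phi$ over $h\in\mathcal{H}_1$ and $\theta\in\Theta$ via (h) and (d), is the technical heart of the argument.
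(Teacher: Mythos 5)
Your proposal is essentially the paper's own argument: it follows the same Newey--Smith two-step scheme, with your Stage~1 corresponding to Lemma~\ref{lemma:1:second} (the $\phi(0)+O_p(n^{-1})$ bound on the inner supremum at $\theta_0$), your Stage~2 to Lemma~\ref{lemma:1:third} (a scaled Riesz-representer test direction plus the saddle-point inequality, followed by the Donsker transfer and the identification step), and your Stage~4 to the paper's mean-value expansion combined with the smallest-eigenvalue bound from the non-singular $\Sigma_0$. The only difference in execution is that the paper obtains your ``sharper'' Stage-1 rate not via spectral alignment of $\widehat{\Psi}^\ast(\theta_0)$ with the range of $\widehat{\Omega}_{\theta_0}$, but via a second-order Taylor expansion whose curvature operator $-\frac{1}{n}\sum_i \phi_2(\cdot)\,\Psi_i^\ast\otimes\Psi_i+\lambda_n I\otimes I$ has eigenvalues bounded above and bounded away from zero w.p.a.1 (Lemmas~\ref{lemma:1:clt} and \ref{lemma:1:non-singular}), after restricting the dual variable to the shrinking ball $\|h\|_\mathcal{H}\le n^{-\zeta}$ with $1/\nu<\zeta<1/2$ (Lemmas~\ref{lemma:1:d2} and \ref{lemma:1:first}) and using a two-pass choice of step size ($n^{-\zeta}$, then $\epsilon_n\to 0$) --- the precise form of the domain and remainder control you only gesture at through your constraint on $t$.
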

The following theorem shows that the limiting distributions of the variables follow a normal distribution $N$ with covariance matrix $\Sigma_\theta$ and Gaussian process $\mathcal{N}$ with kernel $\Sigma_h$ respectively. 
\begin{theorem}[Asymptotic normality]\label{th:1:asymptotic-normality}
Let the assumptions of Theorem~\ref{th:1:consistency} be satisfied 
and define the parameter-gradient of the moment functional as $\nabla_\theta \Psi_0 := E[\nabla_\theta \Psi(X,Z;\theta_0)]$.
Then,
\begin{align*}
    \sqrt{n} (\hat{\theta} - \theta_0)  \overset{d}{\rightarrow}  N(0, \Sigma_\theta),  \quad
    \sqrt{n} (\hat{h} - h)  \overset{d}{\rightarrow} \mathcal{N}(0, \Sigma_h),
\end{align*}
where 
    $\Sigma_\theta = ((\nabla_{\theta} \Psi_0) \Omega_0^{-1} (\nabla_{\theta^{T}} \Psi_0^\ast))^{-1}$ and 
    $\Sigma_h = \Omega_0^{-1} -   \Omega_0^{-1} (\nabla_{\theta^{T}} \Psi_0^\ast)\Sigma_\theta  (\nabla_{\theta} \Psi_0) \Omega_0^{-1}$.
\end{theorem}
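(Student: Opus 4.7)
The plan is to exploit the saddle-point first-order conditions of the FGEL objective and linearize them around $(\theta_0,0)$, combining this with the consistency and rate results of Theorem~\ref{th:1:consistency}. Writing stationarity at the saddle $(\hat\theta,\hat h)$ gives
\begin{equation*}
\tfrac{1}{n}\sum_{i=1}^n \phi_1\bigl(\Psi_i(\hat\theta)(\hat h)\bigr)\nabla_\theta\psi(x_i;\hat\theta)^T \hat h(z_i)=0, \qquad \tfrac{1}{n}\sum_{i=1}^n \phi_1\bigl(\Psi_i(\hat\theta)(\hat h)\bigr)\Psi_i^\ast(\hat\theta) = \lambda_n\,\hat h,
\end{equation*}
where the second equation is the Riesz representation of the $\mathcal{H}$-gradient of $G_{\lambda_n}$.

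Since $\hat\theta\overset{p}{\to}\theta_0$ and $\|\hat h\|_\mathcal{H}\overset{p}{\to}0$, every summand's argument of $\phi_1$ is uniformly $o_p(1)$, so I would expand $\phi_1(s)=\phi_1(0)+\phi_2(0)s+O(s^2)$. Substituting into the $h$-FOC and using $\widehat\Omega_{\hat\theta}\overset{p}{\to}\Omega_0$ (non-singular by assumption~(e)) together with $\lambda_n=o_p(n^{-1/2})$ (allowed by assumption~(f)) yields the linearization
\begin{equation*}
\hat h = -\tfrac{\phi_1(0)}{\phi_2(0)}\,\Omega_0^{-1}\bar\Psi^\ast(\hat\theta) + o_p(n^{-1/2}),
\end{equation*}
which in particular gives $\|\hat h\|_\mathcal{H}=O_p(n^{-1/2})$ since $\bar\Psi^\ast(\hat\theta)=O_p(n^{-1/2})$ by Theorem~\ref{th:1:consistency}. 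Substituting this into the $\theta$-FOC and again expanding $\phi_1$, the leading-order relation collapses, via $\overline{\nabla_\theta\Psi}(\hat\theta)\overset{p}{\to}\nabla_\theta\Psi_0$, to $(\nabla_\theta\Psi_0)\,\Omega_0^{-1}\bar\Psi^\ast(\hat\theta)=o_p(n^{-1/2})$. A further Taylor expansion $\bar\Psi^\ast(\hat\theta)=\bar\Psi^\ast(\theta_0)+\nabla_{\theta^T}\Psi_0^\ast(\hat\theta-\theta_0)+o_p(n^{-1/2})$ together with the invertibility of $\Sigma_\theta^{-1}=(\nabla_\theta\Psi_0)\Omega_0^{-1}(\nabla_{\theta^T}\Psi_0^\ast)$ from assumption~(k) delivers the Bahadur representation
\begin{equation*}
\sqrt{n}\,(\hat\theta-\theta_0) = -\Sigma_\theta\,(\nabla_\theta\Psi_0)\,\Omega_0^{-1}\,\sqrt{n}\,\bar\Psi^\ast(\theta_0) + o_p(1).
\end{equation*}
The Donsker condition~(h) then gives $\sqrt{n}\,\bar\Psi^\ast(\theta_0)\overset{d}{\to}\mathcal{N}(0,\Omega_0)$ in $\mathcal H$, and a sandwich calculation yields $N(0,\Sigma_\theta)$. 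For $\hat h$, I would substitute the Bahadur representation back into the displayed linearization and obtain $\sqrt{n}\,\hat h = -\tfrac{\phi_1(0)}{\phi_2(0)}\Omega_0^{-1}\bigl[I - \nabla_{\theta^T}\Psi_0^\ast\Sigma_\theta\nabla_\theta\Psi_0\,\Omega_0^{-1}\bigr]\sqrt{n}\,\bar\Psi^\ast(\theta_0)+o_p(1)$; computing its Gaussian limit variance yields exactly the projector form $\Sigma_h=\Omega_0^{-1}-\Omega_0^{-1}(\nabla_{\theta^T}\Psi_0^\ast)\Sigma_\theta(\nabla_\theta\Psi_0)\Omega_0^{-1}$.

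The main obstacle will be justifying the linearization rigorously in the infinite-dimensional setting. The empirical operator $\widehat\Omega_{\hat\theta}$ has rank at most $n$ and is therefore not invertible on $\mathcal H$, so inverting $\phi_2(0)\widehat\Omega_{\hat\theta}-\lambda_n I$ is only legitimate after balancing the $\lambda_n$-regularization with the population-level invertibility of $\Omega_0$; one must verify operator-norm convergence of the regularized inverse to $\phi_2(0)^{-1}\Omega_0^{-1}$ on the range of $\bar\Psi^\ast(\hat\theta)$. The Donsker assumption~(h) is what supplies the uniform stochastic equicontinuity needed to control the quadratic Taylor remainders at the $o_p(n^{-1/2})$ scale, and the constraint $0<\xi<1/2$ in assumption~(f) is precisely what ensures that $\lambda_n\hat h$ is asymptotically negligible relative to the CLT rate $n^{-1/2}$ while still keeping the finite-$n$ dual problem well-posed.
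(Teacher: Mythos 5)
Your proposal is correct and follows essentially the same route as the paper's proof: linearize the saddle-point first-order conditions around $(\theta_0,0)$, use the consistency and $O_p(n^{-1/2})$ rates together with convergence of the regularized empirical covariance operator to the non-singular $\Omega_0$, and conclude via the Donsker CLT for $\sqrt{n}\,\bar\Psi^\ast(\theta_0)$ with a sandwich computation of $\Sigma_\theta$ and $\Sigma_h$. The only difference is presentational: you eliminate $\hat h$ sequentially and substitute into the $\theta$-condition, whereas the paper stacks both conditions and inverts the block operator $M$ via its Schur complement, which is algebraically the same calculation.
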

\subsection{From Functional to Conditional Moment Restrictions}
Theorems~\ref{th:1:consistency} and \ref{th:1:asymptotic-normality} show that FGEL provides a $n^{-1/2}$-consistent and asymptotically normal estimator for \emph{functional} moment restrictions of the form $\| E[\Psi(X,Z;\theta_0)] \|_{\mathcal{H}^\ast} = 0$. While this is of independent interest, often functional moment restrictions are merely a way to express a set of corresponding conditional moment restrictions $E[\psi(X;\theta_0)|Z]= 0 \ P_Z\mathrm{-a.s.}$ using a sufficiently expressive function space $\mathcal{H}$ in the corresponding variational form \eqref{eq:1:unconditional}. The following theorems show how the results on the functional formulation translate to the case of conditional moment restrictions. These results were not contained in the conference version of this manuscript and are originally due to \citet{pmlr-v202-kremer23a}.

\begin{theorem}[Consistency]\label{th:1:consistency-cmr}
    Let $\mathcal{H} \subseteq L^2(\mathcal{Z}, \mathbb{R}^m, P_Z)$ be a Hilbert space of locally Lipschitz functions which is sufficiently rich such that equivalence between \eqref{eq:1:conditional} and \eqref{eq:1:unconditional} holds.
    Further assume that 
    a) $\theta_0 \in \Theta$ is the unique solution to $E[\psi(X;\theta)|Z] = 0 \ P_Z\text{-a.s.}$; 
    b) $\Theta \subset \mathbb{R}^p$ as well as $\mathcal{X} \times \mathcal{Z}$ are compact; 
    c) $\psi(x;\theta)$ is continuous in $x$ and $\theta$ everywhere; 
    d) $E[\sup_{\theta \in \Theta} \| \psi(X;\theta) \|^2_2|Z] < \infty$ w.p.1; 
    e)~$V_0(Z) := E[\psi(X;\theta_0) \psi(X;\theta_0) |Z]$ is non-singular w.p.1;
    and 
    f)~$\lambda_n = O_p(n^{-\xi})$ with $0 < \xi < 1/2$; and
    g)~$\phi$ is twice continuously differentiable in a neighborhood of zero and $\phi_1(0)\neq 0$, $\phi_2(0) < 0$ and
    h)~the function classes $\{\psi(\cdot;\theta) : \theta \in \Theta \}$ and $\mathcal{H}_1$ are $P_0$-Donsker.
Then for the FGEL estimator $\hat{\theta}$ we have $\hat{\theta} \overset{p}{\rightarrow} \theta_0$.

Assume additionally 
    i)~$\theta_0 \in \operatorname{int}(\Theta)$;
    j)~$\psi(x;\theta)$ is continuously differentiable in a neighborhood $\bar{\Theta}$ of $\theta_0$ \\ 
    and $E[\sup_{\theta \in \bar{\Theta}} \| \nabla_\theta \psi(X;\theta) \|^2 | Z] < \infty$ w.p.1; as well as
    k)~$\operatorname{rank}\left(E[\nabla_\theta \psi(X;\theta_0) |Z] \right) = p$ w.p.1.
    Then we have $\|\hat{\theta} - \theta_0 \| = O_{p}(n^{-1/2})$.
\end{theorem}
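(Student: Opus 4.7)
The plan is to derive Theorem~\ref{th:1:consistency-cmr} by reducing it to Theorem~\ref{th:1:consistency} applied to the functional reformulation $\Psi(X,Z;\theta)(h) = \psi(X;\theta)^T h(Z)$. Concretely, I would verify that each assumption of Theorem~\ref{th:1:consistency} is implied by the CMR hypotheses together with the stated richness of $\mathcal{H} \subseteq L^2(\mathcal{Z}, \mathbb{R}^m, P_Z)$. Once this translation is complete, both conclusions follow by direct appeal.

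Most of the translations are routine. Uniqueness of $\theta_0$ in the functional formulation follows from the CMR uniqueness assumption (a) combined with the assumed equivalence between \eqref{eq:1:conditional} and \eqref{eq:1:unconditional}; compactness of $\Theta$ and $\mathcal{X}\times\mathcal{Z}$ is a shared hypothesis; joint continuity of $\Psi(x,z;\theta)(h)$ in $(x,z,\theta)$ holds because $\psi$ is jointly continuous and each $h \in \mathcal{H}$ is locally Lipschitz in $z$. For the uniform moment bound, I would use
\begin{equation*}
\|\Psi(X,Z;\theta)\|_{\mathcal{H}^\ast} \;=\; \sup_{h \in \mathcal{H}_1} \psi(X;\theta)^T h(Z) \;\le\; \|\psi(X;\theta)\|_2 \cdot \sup_{h \in \mathcal{H}_1} \|h(Z)\|_2,
\end{equation*}
where the second factor is bounded on the compact set $\mathcal{Z}$ by continuity of evaluation on $\mathcal{H}$, while compactness of $\mathcal{X}\times\Theta$ and continuity of $\psi$ bound the first factor uniformly, giving any $\nu$-th moment. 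The Donsker hypothesis of Theorem~\ref{th:1:consistency} follows from the CMR Donsker assumption (h) by the standard closure of uniformly bounded Donsker classes under products. The assumptions on $\lambda_n$ and $\phi$ are identical across the two theorems.

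The main obstacle is verifying non-singularity of $\Omega_0 = E[\Psi(X,Z;\theta_0) \otimes \Psi(X,Z;\theta_0)]$ (assumption (e) of Theorem~\ref{th:1:consistency}) from the weaker CMR assumption that $V_0(Z)$ is non-singular w.p.1. For any $h \in \mathcal{H}$, the tower property gives
\begin{equation*}
\langle h, \Omega_0 h \rangle_{\mathcal{H}} \;=\; E\bigl[(\psi(X;\theta_0)^T h(Z))^2\bigr] \;=\; E\bigl[h(Z)^T V_0(Z)\, h(Z)\bigr].
\end{equation*}
If this vanishes, positive definiteness of $V_0(Z)$ a.s.\ forces $h(Z) = 0$ $P_Z$-a.s., and the richness of $\mathcal{H}$ (the same property used to enforce equivalence of the conditional and functional restrictions) then forces $h \equiv 0$ in $\mathcal{H}$. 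Hence $\Omega_0$ is an injective, bounded, self-adjoint operator, which is the notion of non-singularity invoked by Theorem~\ref{th:1:consistency}.

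For the $n^{-1/2}$ rate, assumptions (i) and (j) transfer directly, and a uniform second-moment bound on $\|\nabla_\theta \Psi(X,Z;\theta)\|_{\mathcal{H}^\ast}$ follows by the same Cauchy--Schwarz and compactness argument used for the functional assumption~(d). The remaining task is to establish non-singularity of $\Sigma_0$. Identifying $\mathcal{H}^\ast$ with $\mathcal{H}$ via Riesz and applying the tower property, one checks that for any $v \in \mathbb{R}^p$,
\begin{equation*}
v^T \Sigma_0 v \;=\; E\bigl[\|G(Z)\, v\|_2^2\bigr], \qquad G(Z) := E[\nabla_\theta \psi(X;\theta_0) \mid Z].
\end{equation*}
This vanishes only when $G(Z)\, v = 0$ a.s., which by the full-rank condition (k) forces $v = 0$. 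All hypotheses of Theorem~\ref{th:1:consistency} are then in place, and the two conclusions $\hat{\theta} \overset{p}{\rightarrow} \theta_0$ and $\|\hat{\theta}-\theta_0\| = O_p(n^{-1/2})$ follow immediately.
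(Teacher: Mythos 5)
Your proposal is correct and takes essentially the same route as the paper: the paper also proves Theorem~\ref{th:1:consistency-cmr} by translating the CMR assumptions into the hypotheses of the functional Theorem~\ref{th:1:consistency} (deferring the details to \citet{pmlr-v202-kremer23a}), including the tower-property argument that a.s.\ non-singularity of $V_0(Z)$ yields non-singularity of $\Omega_0$, the analogous rank-$p$ argument for $\Sigma_0$, and the product-of-uniformly-bounded-Donsker-classes step (Lemma~\ref{lemma:1:donsker}) for assumption~h). Your write-up simply spells out the verification that the paper outsources to that reference.
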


\begin{theorem}[Asymptotic Normality]\label{th:1:asymptotic-normality-cmr}
Let the assumptions of Theorem~\ref{th:1:consistency-cmr} be satisfied. 
Then, for the FGEL estimator $\hat{\theta}$ we have
\begin{align*}
    \sqrt{n} (\hat{\theta} - \theta_0)  \overset{d}{\rightarrow}  N(0, \Xi_0)
\end{align*}
where 
    $\Xi_0 = E\left[  E[\nabla_\theta \psi(X;\theta_0)|Z] \ V_0^{-1}(Z) \ E[\nabla_\theta \psi(X;\theta_0)|Z] \right]^{-1}$.
\end{theorem}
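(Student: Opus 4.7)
The plan is to derive Theorem~\ref{th:1:asymptotic-normality-cmr} by specializing the functional asymptotic normality result (Theorem~\ref{th:1:asymptotic-normality}) to the choice $\mathcal{H} \subseteq L^2(\mathcal{Z}, \mathbb{R}^m, P_Z)$ rich enough to encode the CMR, and then computing the covariance $\Sigma_\theta$ explicitly in terms of the conditional quantities $V_0(Z)$ and $D(Z):=E[\nabla_\theta \psi(X;\theta_0)\mid Z]$. So the first step is to verify that the CMR-level assumptions~(a)--(k) of Theorem~\ref{th:1:consistency-cmr} (which are the stated hypotheses here) imply the functional-level assumptions~(a)--(k) of Theorem~\ref{th:1:consistency} and Theorem~\ref{th:1:asymptotic-normality}. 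The richness of $\mathcal{H}$ turns uniqueness of $\theta_0$ for the CMR into uniqueness for $\|E[\Psi(X,Z;\theta)]\|_{\mathcal{H}^\ast}=0$; the local-Lipschitz property of $\mathcal{H}$ combined with assumption (d) gives the envelope bound $E[\sup_\theta \|\Psi(X,Z;\theta)\|_{\mathcal{H}^\ast}^\nu]<\infty$; and the $P_0$-Donsker property of $\{\psi(\cdot;\theta)\}$ together with the Donsker property of $\mathcal{H}_1$ yields the Donsker property of the product class $\{\Psi(\cdot;\theta)(h)\}$ by standard preservation theorems. I also need to confirm that $\Omega_0$ and $\Sigma_0$ are non-singular, which follows from the conditional non-singularity of $V_0(Z)$ and the rank condition on $D(Z)$ respectively, once these operators are identified in the next step.

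The second and central step is an explicit computation of the three ingredients of $\Sigma_\theta$. For $h_1,h_2 \in \mathcal{H}$, the tower property gives
\begin{align*}
\langle h_1, \Omega_0 h_2\rangle_\mathcal{H} = E[\Psi_0(h_1)\Psi_0(h_2)] = E[h_1(Z)^\top V_0(Z) h_2(Z)],
\end{align*}
so $\Omega_0$ is the multiplication operator $h(\cdot)\mapsto V_0(\cdot)h(\cdot)$ on $\mathcal{H}$, with $\Omega_0^{-1}$ acting as multiplication by $V_0(\cdot)^{-1}$ on its natural domain. Similarly, $\nabla_\theta\Psi_0: \mathcal{H}\to \mathbb{R}^p$ acts as $h\mapsto E[D(Z)^\top h(Z)]$ with Hilbert-adjoint $\nabla_{\theta^\top}\Psi_0^\ast: v \mapsto D(\cdot)v$. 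Composing these three operators on a vector $v\in\mathbb{R}^p$ gives the function $z\mapsto V_0(z)^{-1}D(z)v$, whose image under $\nabla_\theta \Psi_0$ is $E[D(Z)^\top V_0(Z)^{-1} D(Z)]v$. Inverting yields
\begin{align*}
\Sigma_\theta = \bigl((\nabla_\theta \Psi_0)\Omega_0^{-1}(\nabla_{\theta^\top}\Psi_0^\ast)\bigr)^{-1} = \bigl(E[D(Z)^\top V_0(Z)^{-1} D(Z)]\bigr)^{-1} = \Xi_0,
\end{align*}
which is exactly the Chamberlain semi-parametric efficiency bound and completes the proof upon invoking Theorem~\ref{th:1:asymptotic-normality}.

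The main technical obstacle I anticipate is making the identification of $\Omega_0^{-1}$ rigorous: multiplication by $V_0(Z)^{-1}$ need not extend to a bounded operator on all of $L^2(P_Z,\mathbb{R}^m)$ if the smallest eigenvalue of $V_0(Z)$ is not bounded away from zero almost surely. I would handle this either by assuming, or deriving from the integrability assumptions, that $D(\cdot)v \in \mathrm{Range}(\Omega_0^{1/2})$ for every $v\in\mathbb{R}^p$, so the composition above is well-defined on the finite-dimensional subspace relevant to $\Sigma_\theta$; alternatively, one can argue via the regularized problem, noting that $\lambda_n\to 0$ at rate $o(n^{-1/2})$ permits passage to the limit since $E[D(Z)^\top V_0(Z)^{-1}D(Z)]$ is finite under the stated rank and moment conditions. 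The remaining verification, namely checking that the Donsker preservation and envelope conditions survive the translation from CMR to functional moment restrictions, is more bookkeeping than substance and follows the pattern already used in the proof of Theorem~\ref{th:1:consistency-cmr}.
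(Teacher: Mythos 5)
Your overall route is exactly the one the paper intends: the paper's own ``proof'' of Theorem~\ref{th:1:asymptotic-normality-cmr} is a one-line deferral to Theorem~3.5 of \citet{pmlr-v202-kremer23a}, whose argument is precisely the translation you describe (verify that assumptions a)--k) of Theorem~\ref{th:1:consistency-cmr} imply the hypotheses of Theorems~\ref{th:1:consistency} and \ref{th:1:asymptotic-normality}, then identify $\Sigma_\theta$ with the Chamberlain bound $\Xi_0$). Your bookkeeping for the assumption translation (richness of $\mathcal{H}$ for identification, local Lipschitzness plus d) for the envelope, Lemma~\ref{lemma:1:donsker} for the product Donsker class) matches what the paper sketches in its proof of Theorem~\ref{th:1:consistency-cmr}.

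There is, however, one genuine gap in your central computation. You identify $\Omega_0$ with the multiplication operator $h \mapsto V_0(\cdot)h(\cdot)$ and the adjoint $\nabla_{\theta^\top}\Psi_0^\ast$ with $v \mapsto D(\cdot)v$ from the identity $\langle h_1,\Omega_0 h_2\rangle_{\mathcal{H}} = E[h_1(Z)^\top V_0(Z)h_2(Z)]$; this identification is only valid if the inner product on $\mathcal{H}$ \emph{is} the $L^2(P_Z)$ inner product. The theorem is meant to cover (and is applied in Corollary~\ref{cor:consistency-rkhs} to) instrument spaces such as RKHSs, which sit inside $L^2(P_Z)$ but carry their own, strictly stronger inner product; indeed a Hilbert space of locally Lipschitz functions is generally not closed in $L^2$, so it cannot simply inherit the $L^2$ structure. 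With a general embedding $T:\mathcal{H}\to L^2(P_Z)$ one has $\Omega_0 = T^\ast M_{V_0} T$ and $\nabla_{\theta^\top}\Psi_0^\ast v = T^\ast(D(\cdot)v)$, and the sandwich $(\nabla_\theta\Psi_0)\Omega_0^{-1}(\nabla_{\theta^\top}\Psi_0^\ast)$ does not reduce to $E[D(Z)^\top V_0(Z)^{-1}D(Z)]$ by pointwise inversion; one needs an extra argument, e.g.\ the variational identity $v^\top(\nabla_\theta\Psi_0)\Omega_0^{-1}(\nabla_{\theta^\top}\Psi_0^\ast)v = \sup_{h\in\mathcal{H}} \left(E[h(Z)^\top D(Z)v]\right)^2 / E[h(Z)^\top V_0(Z)h(Z)]$ together with density of $\mathcal{H}$ in $L^2(P_Z)$ and the finiteness of $E[D^\top V_0^{-1}D]$, to show the supremum equals the Chamberlain quadratic form. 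Your anticipated fix addresses only the unboundedness of $V_0^{-1}$ (a range condition for $\Omega_0^{-1/2}$), not this inner-product mismatch; also note the paper's regularization rate is $\lambda_n = O_p(n^{-\xi})$ with $\xi<1/2$, i.e.\ slower than $n^{-1/2}$, so a ``$\lambda_n = o(n^{-1/2})$'' passage-to-the-limit argument is not available under the stated assumptions. Closing the argument via the variational/density identity (or restricting to $\mathcal{H}$ endowed with the $L^2$ inner product) would complete the proof.
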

The asymptotic variance in Theorem~\ref{th:1:asymptotic-normality-cmr} agrees with the semi-parametric efficiency bound of \citet{CHAMBERLAIN1987305}. This implies that FGEL provides an efficient estimator for conditional moment restrictions.
\begin{corollary}[Efficiency]\label{efficiency-cmr}
    Let the assumptions of Theorem~\ref{th:1:consistency-cmr} be satisfied. Then the FGEL estimate $\hat{\theta}$ is an efficient estimator for $\theta_0$, i.e., it has the smallest asymptotic variance among all  estimators based solely on the conditional moment restrictions $E[\psi(X;\theta_0)|Z]= 0 \ P_Z\text{-a.s.}$.
\end{corollary}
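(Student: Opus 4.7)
The plan is to deduce the corollary almost immediately from Theorem~\ref{th:1:asymptotic-normality-cmr} by matching the asymptotic variance it produces with the known semi-parametric efficiency bound for conditional moment restriction models established by \citet{CHAMBERLAIN1987305}. The text of the paper already flags this as the intended route, so the proof is essentially a pointer argument rather than a fresh derivation.

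First, I would recall Chamberlain's bound: for an i.i.d.\ sample from a model defined solely by the conditional moment restriction $E[\psi(X;\theta_0)\mid Z]=0$ $P_Z$-a.s., the asymptotic variance of any regular estimator of $\theta_0$ is bounded below by
\begin{equation*}
\Xi_\ast \;=\; \Bigl(E\bigl[D(Z)^{\mathsf T}\,V_0(Z)^{-1}\,D(Z)\bigr]\Bigr)^{-1},
\end{equation*}
where $D(Z):= E[\nabla_\theta\psi(X;\theta_0)\mid Z]$ and $V_0(Z) := E[\psi(X;\theta_0)\psi(X;\theta_0)^{\mathsf T}\mid Z]$. Under the assumptions of Theorem~\ref{th:1:consistency-cmr}, both $D(Z)$ (rank condition (k)) and $V_0(Z)$ (condition (e)) are well defined and non-singular a.s., and the integrability assumptions (d), (j) guarantee that the expectation above is finite, so Chamberlain's bound applies verbatim.

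Second, I would invoke Theorem~\ref{th:1:asymptotic-normality-cmr}, which already identifies the asymptotic variance of the FGEL estimator as
\begin{equation*}
\Xi_0 \;=\; \Bigl(E\bigl[\,E[\nabla_\theta\psi(X;\theta_0)\mid Z]\;V_0(Z)^{-1}\;E[\nabla_\theta\psi(X;\theta_0)\mid Z]\,\bigr]\Bigr)^{-1}.
\end{equation*}
A term-by-term inspection shows $\Xi_0=\Xi_\ast$, with the transpose in Chamberlain's formulation reflecting merely the column/row convention used for $\nabla_\theta\psi$. Hence the FGEL estimator attains the semi-parametric efficiency bound and is efficient among all estimators that use only the conditional moment restriction $E[\psi(X;\theta_0)\mid Z]=0$, which is the statement of the corollary.

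The step requiring most care, and where essentially all of the content lives, is verifying that the standing assumptions of Theorem~\ref{th:1:consistency-cmr} really do place us in the regular semi-parametric model for which Chamberlain's bound is derived — in particular, that the nonsingularity of $V_0(Z)$, the local smoothness of $\psi$ in $\theta$, and the rank condition on $D(Z)$ together rule out degeneracies of the tangent space at $\theta_0$. Once this is checked, no further calculation is needed: the efficiency claim is a direct comparison of two explicit matrices.
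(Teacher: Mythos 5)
Your proposal is correct and follows essentially the same route as the paper: invoke Theorem~\ref{th:1:asymptotic-normality-cmr} and observe that the asymptotic variance $\Xi_0$ coincides with Chamberlain's semi-parametric efficiency bound for conditional moment restriction models. The additional care you take in checking that the standing assumptions place the model within the scope of Chamberlain's result is a reasonable elaboration of what the paper leaves implicit.
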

In order to translate the results for the functional estimator to conditional moment restrictions by applying Theorems~\ref{th:1:consistency-cmr}-\ref{efficiency-cmr} one needs to choose a space of instrument functions $\mathcal{H}$ which fulfills the corresponding conditions. In the following we show that the reproducing kernel Hilbert space of certain kind of kernel fulfills these assumptions.
\subsection{Kernel FGEL}
The definition of our FGEL estimator contains a supremum over a function space $\mathcal{H}$. In order to address the conditional moment restriction problem, the function space must be expressive enough to exhibit an equivalent unconditional formulation. 
At the same time, optimization over function spaces is generally intractable and thus requires approximations. Selecting instrument functions from a reproducing kernel Hilbert space, one obtains a computationally efficient formulation involving finite dimensional parameters.
\paragraph{Reproducing kernel Hilbert spaces}
Let $\mathcal{X}$ be a non-empty set and $\mathcal{H}$ a Hilbert space of functions $f: \mathcal{X} \rightarrow \mathbb{R}$. Let $\langle \cdot, \cdot \rangle_{\mathcal{H}}$ and $\| \cdot \|_{\mathcal{H}}$ denote the inner product and norm on $\mathcal{H}$ respectively. Then $\mathcal{H}$ is called a reproducing kernel Hilbert space (RKHS) if there exists a symmetric function $k: \mathcal{X} \times \mathcal{X} \rightarrow \mathbb{R}$ such that $k(x,\cdot) \in \mathcal{H}$ for all $x \in \mathcal{X}$ and $\langle f, k(x,\cdot) \rangle_{\mathcal{H}} = f(x)$ for all $f \in \mathcal{H}$ and $x \in \mathcal{X}$. 
Every positive (semi-)definite kernel is the unique reproducing kernel of an RKHS. We call a reproducing kernel $k$ \emph{integrally strictly positive definite} (ISPD) if additionally for any $f \in \mathcal{H}$ with $0<\| f\|_2^2 < \infty$ we have $\int_{\mathcal{X}} f(x) k\left(x, x^{\prime}\right) f\left(x^{\prime}\right) \mathrm{d} x \mathrm{~d} x^{\prime}>0$. See, e.g., \citet{scholkopf2002learning} for a comprehensive introduction.
Let $\mathcal{H} = \bigoplus_{i=1}^m \mathcal{H}_i$ denote the direct sum of $m$ RKHS of universal kernels $k_i$ \citep{universalkernel06}.
The following theorem which is based on Theorem~3.2 of \citet{muandet2020kernel} shows that the RKHS corresponding to a universal ISPD kernel (e.g., Gaussian kernel) is expressive enough to represent the conditional moment restriction \eqref{eq:1:conditional} in terms of a continuum of unconditional restrictions.
\begin{theorem} \label{th:1:equivalence}
Let $\mathcal{H} = \bigoplus_{i=1}^m \mathcal{H}_i$ denote the direct sum of $m$ RKHS unit balls $\mathcal{H}_{i}$ corresponding to ISPD kernels $k_i$, $i=1,\ldots,m$. Let $P$ denote a distribution over random variables $X \in \mathcal{X}$ and $Z \in \mathcal{Z}$ with marginal distributions $P_X$ and $P_Z$.
Then
\begin{align}
    E_{P}[\psi(X;\theta)|Z] = 0 \ \  P_Z\textrm{-a.s.}, \label{eq:1:conditional-rkhs}
\end{align}
if and only if  
\begin{align}
    E_{P}[\psi(X;\theta)^{T}h(Z)] = 0 \ \ \forall h \in \mathcal{H}. \label{eq:1:unconditional-rkhs}
\end{align}
\end{theorem}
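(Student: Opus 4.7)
The plan is to attack the two implications separately, with the forward direction being essentially a one-line application of the tower property and the reverse direction requiring the ISPD assumption in an essential way.

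For the direction \eqref{eq:1:conditional-rkhs} $\Rightarrow$ \eqref{eq:1:unconditional-rkhs}, I would fix any $h \in \mathcal{H}$ and apply the law of iterated expectation:
\begin{align*}
E_P[\psi(X;\theta)^T h(Z)] = E_P\bigl[ E_P[\psi(X;\theta) \mid Z]^T h(Z) \bigr] = 0,
\end{align*}
since the inner conditional expectation vanishes $P_Z$-a.s.\ by assumption.

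For the nontrivial direction \eqref{eq:1:unconditional-rkhs} $\Rightarrow$ \eqref{eq:1:conditional-rkhs}, I would proceed componentwise. Define $g_j(z) := E_P[\psi_j(X;\theta) \mid Z=z]$ for each $j=1,\ldots,m$. By testing \eqref{eq:1:unconditional-rkhs} against functions $h \in \mathcal{H}$ whose only nonzero component sits in the $j$-th slot (and using the fact that the unit-ball quantification is equivalent to the full-space quantification by homogeneity in $h$), we obtain
\begin{align*}
E_P[g_j(Z)\, h_j(Z)] \;=\; 0 \qquad \forall\, h_j \in \mathcal{H}_j.
\end{align*}
Assume $g_j \in L^2(P_Z)$ (which follows from assumption (d) of Theorem~\ref{th:1:consistency-cmr} in the intended use, or can be added as a mild regularity hypothesis). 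Letting $\rho_Z$ denote the density of $P_Z$ w.r.t.\ the reference measure used in the ISPD definition, define $f_j(z) := g_j(z)\rho_Z(z)$ and consider the candidate instrument
\begin{align*}
h_j^\star(z) := \int k_j(z,z')\, f_j(z')\, \mathrm{d}z'.
\end{align*}
Under the standard kernel-embedding conditions this $h_j^\star$ lies in $\mathcal{H}_j$. Substituting it and applying Fubini yields
\begin{align*}
0 \;=\; E_P[g_j(Z) h_j^\star(Z)] \;=\; \int\!\!\int f_j(z)\, k_j(z,z')\, f_j(z')\, \mathrm{d}z\,\mathrm{d}z'.
\end{align*}
The ISPD hypothesis then forces $f_j \equiv 0$ in $L^2$, hence $g_j(z) = 0$ for $P_Z$-a.e.\ $z$. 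Doing this for every $j$ gives $E_P[\psi(X;\theta)\mid Z] = 0$ $P_Z$-a.s.

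The main obstacle is the intermediate reduction step: cleanly justifying that $h_j^\star$ is a legitimate element of $\mathcal{H}_j$ and that the Fubini interchange is valid. The cleanest route is to invoke the kernel-mean-embedding machinery (as in the cited Theorem~3.2 of \citet{muandet2020kernel}), which packages exactly this construction under integrability of $k_j$ against the relevant measure; modulo those standard conditions the proof reduces to the ISPD quadratic-form argument sketched above. The direct-sum structure is handled trivially because the inner product on $\mathcal{H}$ decouples across components, so the componentwise argument is lossless.
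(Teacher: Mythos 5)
Your proposal is correct and follows essentially the same route as the paper: the tower property for the easy direction, and for the converse a componentwise reduction to the vanishing of the ISPD quadratic form $\int\!\!\int \xi_j(z)\,k_j(z,z')\,\xi_j(z')\,p(z)p(z')\,\mathrm{d}z\,\mathrm{d}z'$ with $\xi_j(z)=E[\psi_j(X;\theta)\mid Z=z]$, which forces $\xi_j=0$ $P_Z$-a.s. The only cosmetic difference is that the paper arrives at this quadratic form by identifying $\sup_{\|h_j\|_{\mathcal{H}}\le 1}E[\psi_j(X;\theta)h_j(Z)]$ with the RKHS norm of the kernel mean embedding $E[\xi_j(Z)k_j(Z,\cdot)]$, whereas you plug that same embedding in explicitly as the test function $h_j^\star$; these are the same object, and both versions leave the same standard integrability and embedding conditions implicit.
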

With this result at hand, we can apply Theorems~\ref{th:1:consistency-cmr}-\ref{efficiency-cmr} to show that FGEL combined with such an RKHS as instrument functions space, which we term Kernel FGEL, provides an efficient estimator for CMR problems as formalized in the following corollary.
\begin{corollary} \label{cor:consistency-rkhs}
    The RKHS of a universal ISPD kernel satisfies the assumptions of Theorems~\ref{th:1:consistency-cmr}-\ref{efficiency-cmr}. Thus, under Assumptions a)-j) of Theorem~\ref{th:1:consistency-cmr} Kernel FGEL provides a $\frac{1}{\sqrt{n}}$-consistent, asymptotically normal and semi-parametrically efficient estimator for conditional moment restriction problems.
\end{corollary}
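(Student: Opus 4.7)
The plan is to check the three structural requirements on $\mathcal{H}$ that appear in the hypotheses of Theorem~\ref{th:1:consistency-cmr}–Corollary~\ref{efficiency-cmr}, namely (i) richness sufficient for the equivalence between \eqref{eq:1:conditional} and \eqref{eq:1:unconditional}, (ii) local Lipschitz-ness of its elements, and (iii) that the unit ball $\mathcal{H}_1$ is $P_0$-Donsker. Once these three are in place, the conclusion follows by direct invocation of Theorem~\ref{th:1:consistency-cmr}, Theorem~\ref{th:1:asymptotic-normality-cmr}, and Corollary~\ref{efficiency-cmr} applied to the RKHS instrument space.

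Richness is immediate: Theorem~\ref{th:1:equivalence} states precisely that for $\mathcal{H} = \bigoplus_{i=1}^m \mathcal{H}_i$ with ISPD reproducing kernels $k_i$, the conditional restriction \eqref{eq:1:conditional-rkhs} is equivalent to the continuum of unconditional restrictions \eqref{eq:1:unconditional-rkhs}. So for property (i) I would just cite Theorem~\ref{th:1:equivalence}.

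For (ii), I would use the reproducing property. For any $h \in \mathcal{H}$ and $z,z' \in \mathcal{Z}$,
\begin{equation*}
|h(z) - h(z')| = |\langle h, k(z,\cdot) - k(z',\cdot) \rangle_{\mathcal{H}}| \le \|h\|_\mathcal{H} \, \|k(z,\cdot) - k(z',\cdot)\|_\mathcal{H},
\end{equation*}
and a direct computation gives $\|k(z,\cdot) - k(z',\cdot)\|_\mathcal{H}^2 = k(z,z) - 2k(z,z') + k(z',z')$. For a universal ISPD kernel that is continuously differentiable (e.g.\ the Gaussian), this quantity is locally Lipschitz in $(z,z')$ on the compact set $\mathcal{Z}$, so every element of $\mathcal{H}$ is locally Lipschitz with constant proportional to $\|h\|_\mathcal{H}$.

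For (iii), I would invoke the standard fact that the unit ball of an RKHS with a bounded, sufficiently smooth kernel on a compact metric space has a uniformly bounded entropy integral. By Cauchy–Schwarz, the functions in $\mathcal{H}_1$ are uniformly bounded by $\sup_{z\in\mathcal{Z}} \sqrt{k(z,z)}$, and the $L^\infty$-covering numbers of $\mathcal{H}_1$ are controlled via the eigendecay of the kernel integral operator; for a $C^\infty$ kernel on a compact set this gives a bracketing integral that converges, and thus $\mathcal{H}_1$ is $P_0$-Donsker in the standard sense of van der Vaart–Wellner. The anticipated main obstacle is formulating this Donsker property cleanly without over-restricting the admissible kernel class: the cleanest route is to assume (or note implicitly in the statement "universal ISPD", which for our examples means Gaussian-type) sufficient smoothness, and cite the entropy bounds for RKHS balls. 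Once (i)–(iii) are established, all assumptions of Theorems~\ref{th:1:consistency-cmr}, \ref{th:1:asymptotic-normality-cmr}, and Corollary~\ref{efficiency-cmr} concerning the function class $\mathcal{H}$ are satisfied, and combining them yields that Kernel FGEL is $n^{-1/2}$-consistent, asymptotically normal with limiting variance equal to $\Xi_0$ of Theorem~\ref{th:1:asymptotic-normality-cmr}, and attains the semi-parametric efficiency bound of \citet{CHAMBERLAIN1987305}.
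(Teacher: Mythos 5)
Your proposal is correct and structurally the same as the paper's proof: verify the three requirements on $\mathcal{H}$ (richness for the conditional/unconditional equivalence, local Lipschitzness of RKHS elements, the Donsker property of $\mathcal{H}_1$) and then invoke Theorems~\ref{th:1:consistency-cmr}, \ref{th:1:asymptotic-normality-cmr} and Corollary~\ref{efficiency-cmr}. For the equivalence you cite Theorem~\ref{th:1:equivalence} exactly as the paper does. The two points where you diverge are in how the remaining properties are justified. For the Lipschitz step, the paper only uses the reproducing property, Cauchy--Schwarz and the triangle inequality to bound $\|h(z_1)-h(z_2)\|$ by a constant proportional to $\|h\|_\mathcal{H}\sup_z\|k(z,\cdot)\|_\mathcal{H}$, whereas you additionally expand $\|k(z,\cdot)-k(z',\cdot)\|_\mathcal{H}^2 = k(z,z)-2k(z,z')+k(z',z')$ and use smoothness of the kernel to obtain a genuine Lipschitz modulus; your version is arguably the more careful one, since it actually delivers Lipschitz continuity rather than just uniform boundedness of increments, at the price of a differentiability assumption on $k$ that the bare phrase ``universal ISPD'' does not formally guarantee (though it holds for the Gaussian and the other kernels used in the paper). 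For the Donsker step, the paper simply cites Lemma~17 of \citet{bennett2020variational}, while you sketch the underlying entropy/covering-number argument for RKHS unit balls with bounded smooth kernels on a compact domain; this is the standard proof behind such a lemma, so it buys self-containedness but again quietly imports extra kernel regularity beyond what the corollary statement asserts. If you state the smoothness requirement explicitly (or fall back on the citation the paper uses), your argument is complete.
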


Applying the representer theorem \citep{Schoelkopf01:Representer} to the supremum over the instrument functions $h$ in equation \eqref{eq:1:regularized-fgel} allows us to represent the RKHS function in terms of finite dimensional parameters $\alpha_r \in \mathbb{R}^{n}$, $r=1,\ldots,m$, and yields a finite dimensional and convex optimization problem as formalized by the following lemma.

\begin{lemma} \label{th:1:optimization-problem}
    Let $\mathcal{H} = \bigoplus_{i=1}^m \mathcal{H}_i$ be an RKHS
    corresponding to $m$ universal kernels $k_i$, $i=1,\ldots,m$. Let $K_r\in\mathbb{R}^{n\times n}$, $r=1,\ldots,m$ denote the kernel matrices and let $\alpha = \{ \alpha_r\}_{r=1}^m$ with $\alpha_r \in \mathbb{R}^n$. 
    Then the maximization over the instrument functions in the FGEL objective \eqref{eq:1:regularized-fgel} can be expressed as 
    \begin{align*}
        R_{\lambda_n}(\theta) := \max_{\alpha \in \widehat{A}_\theta} & \Bigg\{  \frac{1}{n} \sum_{i=1}^n \phi \left( v_i(\theta,\alpha) \right) - \frac{\lambda_n}{2} \sum_{r=1}^m \alpha_r^{T}K_r \alpha_r \Bigg\},
    \end{align*}
    with $v_i = \sum_{r=1}^m (\alpha_r^{T}K_r)_{i} \psi_r(x_i;\theta)$ and $\widehat{A}_\theta = \{ \alpha :  v_i \in \operatorname{dom}(\phi), 1\leq i \leq n \}$. The Kernel FGEL estimator is then defined as the solution of $\hat{\theta} = \argmin_{\theta \in \Theta}R_{\lambda_n}(\theta)$.
\end{lemma}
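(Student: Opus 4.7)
The proof is a direct application of the representer theorem exploiting the direct-sum structure of $\mathcal{H}$. First I would write any $h \in \mathcal{H} = \bigoplus_{r=1}^m \mathcal{H}_r$ as $h = (h_1, \ldots, h_m)$ with $h_r \in \mathcal{H}_r$, so that
\begin{align*}
\Psi(x_i, z_i; \theta)(h) = \psi(x_i;\theta)^T h(z_i) = \sum_{r=1}^m \psi_r(x_i;\theta)\, h_r(z_i),
\qquad \|h\|_\mathcal{H}^2 = \sum_{r=1}^m \|h_r\|_{\mathcal{H}_r}^2.
\end{align*}
Thus $G_{\lambda_n}(\theta, h)$ depends on each $h_r$ only through the $n$ evaluations $\{h_r(z_i)\}_{i=1}^n$ and through the squared norm $\|h_r\|_{\mathcal{H}_r}^2$, which is exactly the setting in which the representer theorem applies component-wise.

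Next I would apply the standard orthogonal-decomposition argument to each coordinate. Fix $r$ and write $h_r = h_r^\parallel + h_r^\perp$, where $h_r^\parallel \in \operatorname{span}\{k_r(z_i, \cdot) : 1 \le i \le n\}$ and $h_r^\perp$ is orthogonal to this span. By the reproducing property, $h_r(z_i) = \langle h_r, k_r(z_i, \cdot)\rangle_{\mathcal{H}_r} = h_r^\parallel(z_i)$, so the $\phi$-term of $G_{\lambda_n}$ is unchanged if we replace $h_r$ by $h_r^\parallel$, while the regularizer satisfies $\|h_r\|_{\mathcal{H}_r}^2 = \|h_r^\parallel\|_{\mathcal{H}_r}^2 + \|h_r^\perp\|_{\mathcal{H}_r}^2$ with equality to $\|h_r^\parallel\|_{\mathcal{H}_r}^2$ iff $h_r^\perp = 0$. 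Since $-\tfrac{\lambda_n}{2}\|\cdot\|_{\mathcal{H}_r}^2$ is strictly concave and $\lambda_n > 0$, any maximizer satisfies $h_r^\perp = 0$. The feasibility constraint $h \in \widehat{\mathcal{H}}(\theta)$ is preserved under this replacement since it depends only on the evaluations $\Psi(x_i,z_i;\theta)(h)$, which are invariant under dropping $h_r^\perp$.

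Having reduced to $h_r(\cdot) = \sum_{i=1}^n \alpha_{r,i}\, k_r(z_i, \cdot)$ for some $\alpha_r \in \mathbb{R}^n$, I substitute to obtain $h_r(z_j) = (K_r \alpha_r)_j$ and $\|h_r\|_{\mathcal{H}_r}^2 = \alpha_r^T K_r \alpha_r$, hence
\begin{align*}
\Psi(x_i, z_i; \theta)(h) = \sum_{r=1}^m (K_r \alpha_r)_i\, \psi_r(x_i;\theta) = v_i(\theta, \alpha).
\end{align*}
Plugging into $G_{\lambda_n}(\theta, h)$ yields exactly the stated finite-dimensional expression, and the domain constraint $\psi(x_i;\theta)^T h(z_i) \in \operatorname{dom}(\phi)$ translates verbatim to $v_i(\theta,\alpha) \in \operatorname{dom}(\phi)$, i.e.\ $\alpha \in \widehat{A}_\theta$.

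I do not anticipate a substantial obstacle: the only non-mechanical point is ensuring that the representer reduction interacts correctly with the feasibility set $\widehat{\mathcal{H}}(\theta)$, which is automatic because $h_r^\perp$ contributes zero to every evaluation $h_r(z_i)$ that enters the definition of $\widehat{\mathcal{H}}(\theta)$. The direct-sum structure lets the argument proceed coordinate-by-coordinate, so no cross-kernel bookkeeping is required.
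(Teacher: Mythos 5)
Your proposal is correct and follows essentially the same route as the paper: both reduce the supremum over $h\in\mathcal{H}$ to a finite-dimensional maximization via the representer theorem applied component-wise to the direct sum, then substitute $h_r(z_j)=(K_r\alpha_r)_j$ and $\|h_r\|_{\mathcal{H}_r}^2=\alpha_r^TK_r\alpha_r$. The only difference is that you spell out the orthogonal-decomposition argument (and its compatibility with the feasibility set $\widehat{\mathcal{H}}(\theta)$) explicitly, whereas the paper simply invokes the representer theorem after noting convexity of the inner problem.
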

We provide details on the optimization algorithm in Section~\ref{appendix:optimization} of the appendix. 

\subsection{Neural FGEL}
As expressed by universal approximation theorems (e.g., \citep{yarotsky2017error}), neural networks can represent arbitrarily large function classes and have shown state-of-the-art performance on related tasks \citep{deepiv,lewis2018adversarial, bennett2020deep}. As such, they provide a particularly interesting choice of instrument function class.
Let $h_\omega: \mathcal{Z} \rightarrow \mathbb{R}^m$ denote a feed-forward neural network with parameters $\omega$. Then we can define the Neural FGEL estimator as a saddle point of
\begin{align*}
    G_{\lambda_n}(\theta, \omega) :=  \frac{1}{n} \sum_{i=1}^n \phi \left( \psi(x_i;\theta)^{T}h_\omega(z_i)\right) - \frac{\lambda_n}{2n} \sum_{i=1}^n \| h_\omega(z_i) \|_{\mathbb{R}^m}^2,
\end{align*}
where the regularization term penalizes the magnitude of the output as in \citet{Dikkala20:Minimax} and \citet{bennett2020variational}. We leave the theoretical analysis of the Neural FGEL estimator for future work.
\subsection{Other Instrument Function Classes}
FGEL estimators can be defined for arbitrary instrument function classes $\mathcal{H}$ under mild conditions: 
Let $P$ denote a reference measure over $X \in \mathcal{X}$, then we can place a class $\mathcal{H}$ of functions $f: \mathcal{X} \rightarrow \mathbb{R}^m$ into the Hilbert space of square-integrable functions $L^2(\mathcal{H}, P)$ as long as any $f \in \mathcal{H}$ is bounded on any set with non-zero measure, which is a realistic assumption for many model classes. 
The corresponding norm with respect to the empirical measure is then given via $\|h \|_\mathcal{H}^2 = \frac{1}{n}\sum_{i=1}^n \|h(z_i) \|_{\mathbb{R}^m}^2$.
If the underlying problem of interest is a conditional moment restriction (instead of a general functional moment restriction), $\mathcal{H}$ additionally must be expressive enough such that an equivalence between the conditional \eqref{eq:1:conditional} and unconditional \eqref{eq:1:unconditional} formulations holds.

\subsection{Choice of Divergence Function}\label{sec:1:variants}
In this section, we discuss various choices of divergences and establish connections to existing methods.
In the finite dimensional case, it is well known that for any quadratic discrepancy function the GEL estimator coincides with the continuous updating GMM (CUE) estimator \citep{newey04}. 
An interesting special choice of divergence function is given below.
\begin{proposition}\label{prop:1:equivalence}
    Choosing the GEL function as $\phi(v) = - (1 \pm \frac{v}{2})^2$ and rescaling the regularization parameter $\tilde{\lambda}_n = 2\lambda_n$, the FGEL estimator becomes equivalent to the solution of the optimization problem
    \begin{align*}
        \min_{\theta \in \Theta} \sup_{h \in \mathcal{H}} \Big\{ E_{\hat{P}_n}[\psi(X;\theta)^{T}h(X)] - \frac{1}{4} E_{\hat{P}_n}\left[\left(\psi(X;\theta)^{T}h(X)\right)^2\right] 
          - \frac{\tilde{\lambda}_n}{4} \|h \|_{\mathcal{H}}^2 \Big\}.
    \end{align*}
\end{proposition}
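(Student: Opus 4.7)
The proof is a direct algebraic manipulation of the FGEL objective, so the plan is mostly bookkeeping rather than introducing any deep machinery. First, I would verify that $\phi(v) = -(1\pm v/2)^2$ is an admissible GEL function in the sense of Definition~\ref{def:1:gel-objective}: it is a quadratic polynomial, hence twice continuously differentiable and concave (since $\phi_2(v) \equiv -\tfrac{1}{2}$), with $\phi_1(0) = \mp 1 \neq 0$ and $\phi_2(0) = -\tfrac{1}{2}<0$. Thus the FGEL objective \eqref{eq:1:gel-objective} is well-defined with this choice.

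Next I would expand the square. For the minus sign, $\phi(v) = -1 + v - v^2/4$, so
\begin{align*}
\tfrac{1}{n}\sum_{i=1}^n \phi\bigl(\Psi(x_i,z_i;\theta)(h)\bigr)
= -1 + E_{\hat{P}_n}\bigl[\psi(X;\theta)^T h(Z)\bigr] - \tfrac{1}{4}E_{\hat{P}_n}\bigl[\bigl(\psi(X;\theta)^T h(Z)\bigr)^2\bigr].
\end{align*}
Substituting into $G_{\lambda_n}(\theta,h)$ from \eqref{eq:1:gel-objective} and setting $\tilde{\lambda}_n = 2\lambda_n$ converts the regularizer $\tfrac{\lambda_n}{2}\|h\|_\mathcal{H}^2$ into $\tfrac{\tilde{\lambda}_n}{4}\|h\|_\mathcal{H}^2$. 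The additive constant $-1$ affects neither the inner supremum over $h$ nor the outer minimization over $\theta$, so the FGEL estimator coincides with the minimizer of the displayed objective (again I am reading the argument of $h$ in the proposition as $Z$, matching the definition of $\Psi$).

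For the plus sign, the analogous expansion yields $\phi(v) = -1 - v - v^2/4$, producing the same objective except with a minus sign in front of the linear term $E_{\hat P_n}[\psi(X;\theta)^T h(Z)]$. Since $\mathcal{H}$ is a Hilbert space (hence closed under $h \mapsto -h$), and since both the quadratic term and the regularizer $\|h\|_\mathcal{H}^2$ are invariant under this reflection, the substitution $h \leftrightarrow -h$ identifies the supremum in the plus case with the supremum in the minus case. Thus both choices of sign yield the same estimator, establishing the claim.

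There is no real obstacle here; the only subtlety is ensuring that the $h \mapsto -h$ symmetry is available (which it is for any Hilbert space) and that the rescaling $\tilde{\lambda}_n = 2\lambda_n$ exactly matches the $\tfrac{1}{4}$ that arises from expanding $(1\pm v/2)^2$. Both points are immediate and require no further assumptions beyond those already in Definition~\ref{def:1:gel-objective}.
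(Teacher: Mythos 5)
Your proposal is correct and follows essentially the same route as the paper's proof: insert $\phi(v) = -(1\pm v/2)^2$ into the FGEL objective \eqref{eq:1:gel-objective}, match the regularizer via $\tilde{\lambda}_n = 2\lambda_n$, and use that $\mathcal{H}$ is a vector space so the substitution $h \mapsto -h$ makes the two sign choices equivalent. Your version is somewhat more explicit (writing out the expansion, checking admissibility of $\phi$, and noting the constant term is irrelevant), but the underlying argument is identical.
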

This resembles the objective of the VMM estimator of \citet{bennett2020variational} with the only difference that the covariance term contains the decision variable $\theta$ instead of a first-stage estimate $\tilde{\theta}$. %
In this sense, with this special choice of divergence function our FGEL estimator and the VMM estimator are related in the same way as the continuous updating estimator (CUE) \eqref{eq:1:cue} and the optimally weighted 2-step GMM estimator \eqref{eq:1:owgmm}.
With the kernel version of our FGEL estimator, we can carry out the optimization over $h \in \mathcal{H}$ in closed form and similarly obtain a continuous updating version of the Kernel VMM estimator.
A functional generalization of the original empirical likelihood estimator is retrieved by setting $\phi(v) = - \log(1-v)$. The empirical likelihood estimator has many desirable properties. 
It has been shown by \citet{newey04} that the ordinary EL estimator has the smallest higher order bias among the family of GEL estimators (including GMM). 
Further, \citet{corcoran98} shows that confidence intervals constructed from the EL-based profile likelihood admit a Bartlett correction which by a simple subtraction allows to reduce the coverage error from $O(n^{-1})$ to $O(n^{-2})$. This property of the EL framework is unique among the family of GEL estimators \citep{corcoran98}.
Using the GEL function corresponding to the Kullback-Leibler (KL) divergence $\phi(v) = - e^v$ one obtains a functional generalization of the exponential tilting estimator of \citet{kitamura} and \citet{imbens1998} which shows good empirical performance on many tasks \citep{imbens1998}. In contrast to the $\chi^2$-divergence, the KL-divergence enjoys great popularity as a distributional divergence measure in machine learning \citep{blei}. 
Therefore, a functional moment restriction estimator based on the KL-divergence instead of the dominating $\chi^2$-divergence (GMM) could be of particular interest.
\begin{figure*}[t]
    \centering
        \includegraphics[width=0.6\linewidth]{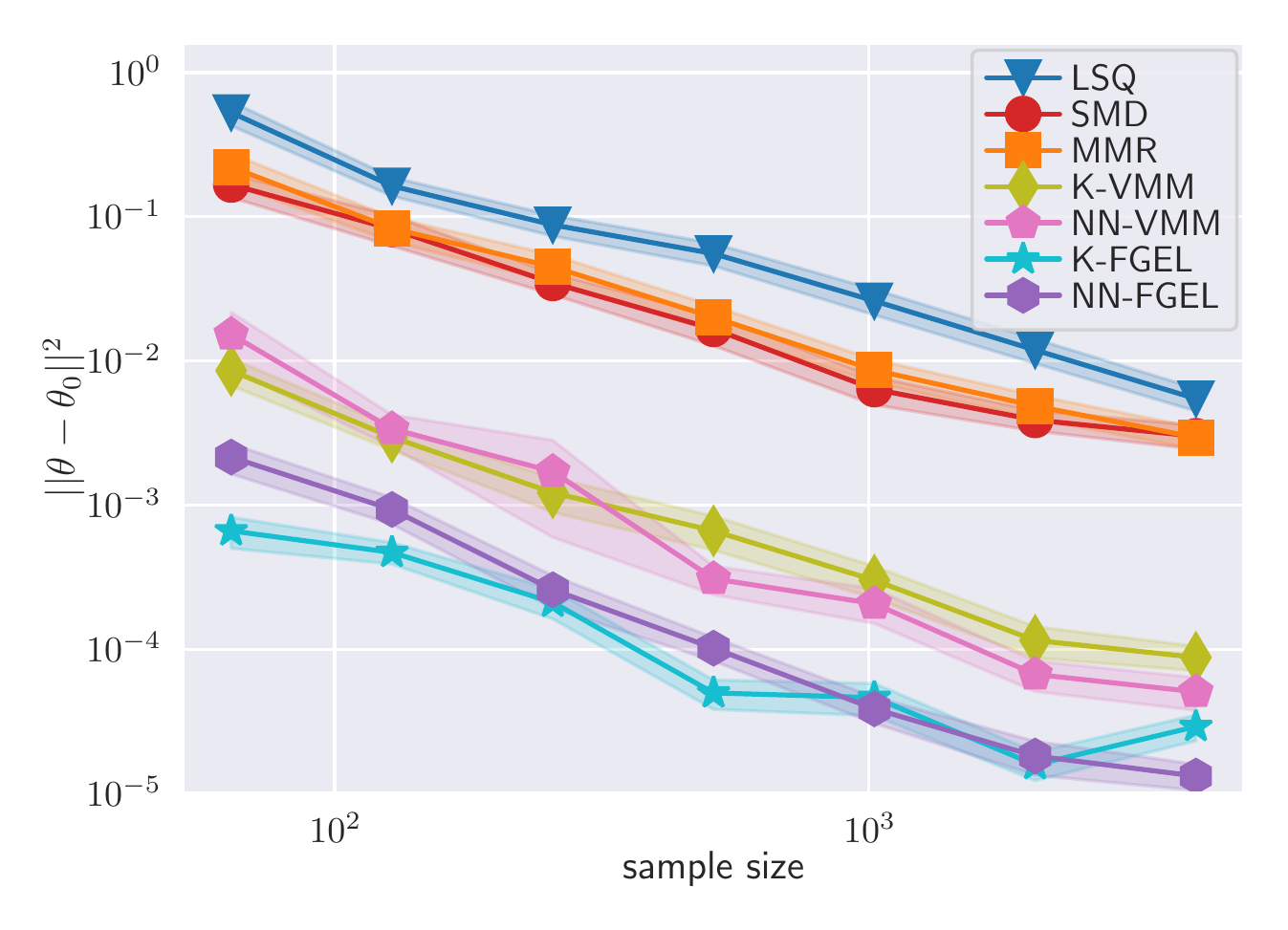}
        \caption{Estimation error over sample size for the heteroskedastic regression experiment. 
        Lines and shaded regions represent the MSE of the estimated parameters and the standard error averaged over $70$ runs respectively.}
        \label{fig:heteroscedastic}
\end{figure*}

\begin{figure}[t]
    \centering
    \includegraphics[width=0.6\linewidth]{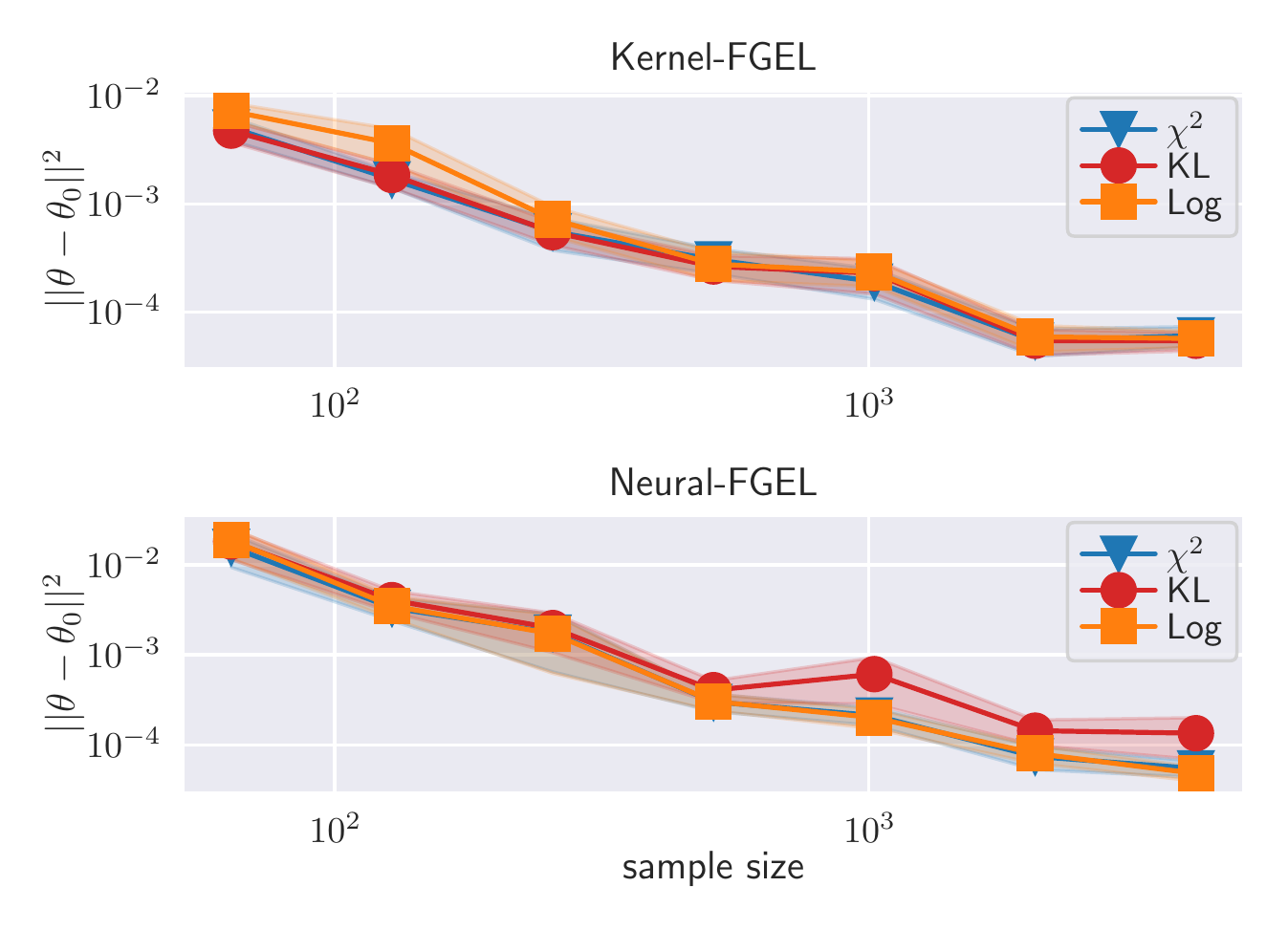}
    \caption{Comparison of different divergence functions. Lines and shaded regions represent the MSE of the estimated parameters and the standard error averaged over $70$ runs respectively.}
    \label{fig:heteroscedastic2}
\end{figure}
\section{Experiments}\label{sec:1:experiments}
For all experiments we use radial basis function kernels $k_i(x,x') = \exp(- \gamma \|x-x' \|^2)$, $i=1,\ldots,m$ and set the bandwidth parameter $\gamma$ via the common median heuristic \citep{scholkopf2002learning,garreau2018large}. If not stated otherwise, we tune the remaining hyperparameters of all methods by evaluating the MMR objective $\ell(\theta) = 1/n^2 \sum_{i,j=1}^n \psi(x_i;\theta)^TK_{ij} \psi(x_j;\theta)$ \citep{zhang2021maximum} on a validation set of the same size as the training set (refer to Section~\ref{appendix:hyperparams} of the appendix for details). We compare the performance of our kernel- and neural network-based methods with ordinary least-squares (LSQ), sieve minimum distance (SMD) \citep{ai2003efficient}, kernel maximum moment restrictions (MMR) \citep{zhang2021maximum} and the kernel- and neural network versions of the variational method of moments (K-VMM and NN-VMM) \citep{bennett2020variational,bennett2020deep} on two conditional moment restriction problems. 
Code for reproducing the experimental results is available at \url{https://github.com/HeinerKremer/Functional-GEL}.
\subsection{Linear Regression under Heteroskedastic Noise}
We define a simple data generating process for a one-dimensional estimation problem. Let $\theta=1.7 \in \mathbb{R}$ and
\begin{align*}
    y = x^T\theta + \varepsilon, \ \ \ \  x \sim \operatorname{Uniform}([-1.5, 1.5]), 
\end{align*}
where $\varepsilon$ describes heteroskedastic noise such that $\varepsilon|x \sim \mathcal{N}(0, \sigma = 5 x^2)$. We can formulate the regression task as the conditional moment restriction $E[Y - X^T\theta | X]=0 \ P_{X}\text{-a.s.}$. As $\varepsilon$ is a mean zero random variable, here, we can simply use the mean squared error on a validation set as an unbiased validation metric to tune the hyperparameters of all methods. 
Figure~\ref{fig:heteroscedastic} shows the mean-squared error (MSE) of the estimated parameters using different versions of FGEL and other state-of-the-art estimators for conditional moment restrictions in dependence on the sample size. Here we treat the choice of divergence as an additional hyperparameter.
We observe that both our methods yield the lowest parameter MSE and even slightly outperform the recently proposed state-of-the-art VMM estimator \citep{bennett2020variational}.
In Figure~\ref{fig:heteroscedastic2} we evaluate the effect of the divergence function. We observe that while the average performance is largely independent of the choice of divergence function, a comparison with the results shown in Figure~\ref{fig:heteroscedastic} reveals that for any fixed sample the different divergences yield estimators of different quality. Thus, treating the divergence as hyperparameter and choosing the estimator with the lowest validation loss, allows us to exceed the performance of the FGEL estimator with fixed divergence. 
As GMM-based methods implicitly build on the $\chi^2$-divergence, this highlights an advantage of our method which can leverage any $\varphi$-divergences. Note that for any \emph{fixed} divergence function the performance of our FGEL estimators are roughly on par with the corresponding VMM estimators.
\subsection{Instrumental Variable Regression}
We adopt a slightly modified version of the IV regression experiment of \citet{lewis2018adversarial}, which has also been used by \citet{bennett2020deep} and \citet{zhang2021maximum}.
Let the data generating process be given by
\begin{align*}
 &y = f_0(x) + e + \delta,   &x = z + e + \gamma, \\
 &z \sim \operatorname{Uniform}([-3, 3]),& \\
   & e \sim N(0,1),  & \gamma, \delta \sim N(0, 0.1),
\end{align*}
where $f_0$ is picked from the following simple functions 
\begin{align*}
    &\operatorname{sin:} f_0(x) = \sin(x), & \operatorname{abs:} f_0(x) = |x|, \\ &\operatorname{linear:} f_0(x) = x,  & \operatorname{step:} f_0(x) = I_{\{x \geq 0 \}}.
\end{align*}
We approximate $f_0$ by a shallow neural network $f_\theta(x)$ with 2 layers of $[20, 3]$ units and leaky ReLU activation functions and base the estimation on the conditional moment restrictions $E[Y-f_\theta(X)|Z] = 0 \ \  P_{Z}\text{-a.s.}$. As generally the true model is not contained in this model class, this provides a typical case of model misspecification and theoretical properties of the our method (and equally all baseline methods) for this setting have yet to be developed (see \citet{Dikkala20:Minimax} for recent progress in this direction).
We use training and validation sets of size $n=2000$ and evaluate the prediction error on a test set of $20000$ samples. The results are visualized in Table~\ref{tab:1:network-iv}.
We observe that with the exception of one task the FGEL and VMM estimators outperform all other baselines.
Compared to each other NN-FGEL seems to be preferable over NN-VMM but the kernel versions of both methods exhibit similar performance without showing a clear advantage of one over the other for this task.

\begin{table*}[t]
    \centering
    \caption{Prediction MSE for the instrumental variable task. Mean and standard deviation of the mean are computed over 50 random runs and multiplied by $10$ for ease of presentation.}
    \label{tab:1:network-iv}
    \resizebox{\textwidth}{!}{
    \begin{tabular}{lccccccc}
    \toprule
            & LSQ           & SMD           & MMR           & K-VMM         & NN-VMM        & K-FGEL        & NN-FGEL       \\
     \midrule
     abs    & $3.72\pm0.30$ & $2.97\pm0.97$ & $2.78\pm0.60$ & $0.43\pm0.15$ & $0.45\pm0.10$ & $\bf{0.17\pm0.01}$ & $0.23\pm0.06$ \\
     step   & $3.03\pm0.03$ & $0.37\pm0.04$ & $0.71\pm0.03$ & $\bf{0.31\pm0.01}$ & $0.41\pm0.01$ & $0.41\pm0.03$ & $0.34\pm0.01$ \\
     sin    & $3.28\pm0.04$ & $\bf{1.01\pm0.06}$ & $3.61\pm0.07$ & $1.55\pm0.12$ & $1.72\pm0.11$ & $1.97\pm0.16$ & $1.66\pm0.12$ \\
     linear & $2.76\pm0.06$ & $0.97\pm0.72$ & $1.98\pm0.38$ & $0.31\pm0.06$ & $0.34\pm0.05$ & $0.32\pm0.05$ & $\bf{0.20\pm0.03}$ \\
    \bottomrule
    \end{tabular}}
\end{table*}

Our experiments show that the FGEL estimator is a viable alternative to previously proposed continuum method of moments estimators for conditional moment restrictions and can surpass the previous state-of-the-art on some tasks. However, further empirical evidence needs to be collected to verify its predicted superior finite sample properties for infinitely many moment restrictions. We leave a comprehensive experimental evaluation to future work.
\section{Related Work} \label{sec:1:related-work}
Learning with conditional or infinite dimensional moment restrictions respectively has been an active field of research in econometrics and more recently in machine learning. 
In the former context, seminal work on extending the generalized method of moments to continua of moment restrictions has been carried out by \citet{Carrasco1,Carrasco2} by placing the constraints in an RKHS.
In the machine learning community, GMM-related estimators have been developed by casting the infinite dimensional moment restriction problem as a minimax game and representing the adversarial player by an RKHS function \citep{zhang2021maximum, bennett2020variational} or a flexible neural network \citep{deepiv,lewis2018adversarial,Dikkala20:Minimax,bennett2020deep}. 
While the neural network-based methods often achieve good performance in practice, they generally are computationally more expensive and lack the theoretical properties of traditional GMM estimators. 
In contrast, \citet{bennett2020variational}'s Kernel VMM estimator comes with strong theoretical guarantees but results from a 2-step procedure and thus depends on an initial parameter estimate.
As discussed in Section~\ref{sec:1:variants}, our framework contains a continuous updating version of VMM as a special case but allows for using alternative $\varphi$-divergence functions. 
As an alternative to GMM estimation, sieve-based methods \citep{newey2003instrumental,donald2003,ai2003efficient,chen2012estimation} address conditional moment restrictions by growing the number of unconditional restrictions with the sample size by manually selecting an increasing number of basis functions.
While these often come with desirable efficiency results, in practice they can be hard to tune and computationally demanding \citep{bennett2020variational}. 
Another line of work implicitly estimates optimal instrument functions via a kernel-smoothed localized empirical likelihood function \citep{tripathi2003testing,kitamura2004}. Their use of kernels is different from our approach as we do not smooth the profile divergence but use RKHS functions (and other function classes) as instrument functions.
Several works extended the generalized empirical likelihood framework to handle infinite dimensional moment restrictions and thus conditional moment restrictions \citep{donald2003,chausse2012generalized,carrasco2017regularized}.
The GEL estimator of \citet{chausse2012generalized} is based on approximately imposing a continuum of moment restrictions using a parameterized basis of functions and solving a regularized version of the GEL first order conditions. While it is theoretically closely related to our method, the regularization scheme and computational approach differs from ours.
Similarly, closely related to our method is the regularized GEL estimator of \citet{carrasco2017regularized}, which is defined via a set of optimality conditions and solved using a procedure motivated by the Three-Steps Euclidean Likelihood procedure of \citet{antoine2007efficient}. In contrast to these methods, our estimator is defined as a saddle point of an objective function and thus benefits from recent advances in mini-max optimization \citep{daskalakis2018training,lin2020gradient}. 
To the best of our knowledge, our work is the first to combine GEL estimation with modern machine learning and in particular kernel methods and neural networks.
\section{Conclusion}
Several long-established problems in machine learning can naturally be expressed as a risk minimization problem. On the other hand, emerging areas such as causal inference, algorithmic decision making, and robust learning often involve problems that are formulated as (potentially infinite) moment restrictions and require different algorithmic frameworks for estimation and inference. Recent works have advanced this development by combining classical techniques from econometrics such as generalized method of moments (GMM) with modern machine learning models such as deep neural networks and kernel machines. Likewise, our work contributes to this endeavour by equipping the more general generalized empirical likelihood (GEL) framework with such powerful models.
While the econometrics community enjoys the new class of algorithms, we believe the machine learning community will likewise benefit from new perspectives on causal inference and robust learning which will be explored in future works.
This paper laid the theoretical foundation of the functional GEL framework, but there remain open questions that impede real-world applications. Firstly, more efficient optimization procedures need to be developed that allow for large scale applications. Secondly, theoretical properties of the framework with specific function classes need to be explored. Lastly, the framework needs to be tested for the training of more complex models for real-world applications (e.g.\ robust learning). Our goal is to address some of these problems in future work.
\section*{Acknowledgements}
We thank Simon Buchholz and Yassine Nemmour for helpful discussions and feedback on an earlier version of the manuscript.
This work was supported by the German Federal Ministry of Education and Research (BMBF): T\"ubingen AI Center, FKZ: 01IS18039B.

\bibliographystyle{abbrvnat}
\bibliography{refs}

\appendix
\newpage
\section{Additional Information}
\subsection{Distributional Robustness of FGEL}\label{appendix:dro}
It is well-known that the profile divergence is a dual formulation to the distributionally robust optimization (DRO) formulation \citep{lamRecoveringBestStatistical2019,duchi2018statistics}.
In the context of this paper, one can show that $R_{\lambda_n}(\theta) \leq \rho$ if and only if
\begin{equation*}
    \lambda_n \geq  \inf_{P \in \mathcal{P}} 
	\|
	E_{P}[\Psi(X,Z;\theta)]
	\|_{\mathcal{H}^\ast}
    \;\ 
    \mathrm{s.t.} \, D_\varphi(P||\hat{P}_n) \leq \rho.
\end{equation*}
However, we do not simply rely on the divergence-ball centered at the empirical data distribution $\{P\vert \ D_\varphi(P||\hat{P}_n) \leq \rho\}$ (referred to as an ambiguity set in the DRO literature) for robustness. Since that robustness is often used to account for the statistical error due to finite samples. Instead, we are concerned with a second and stronger layer of robustness.
First, note that the quantity $E_{P}[\Psi(X,Z;\theta)]$ is used to approximate the conditional moment constraint in our original formulation \eqref{eq:1:123}.
Since the goal of FGEL is to satisfy the the conditional moment restrictions $E_{P}[\psi(X;\theta) \,|\, Z]=0$ almost everywhere in the domain, in robust optimization terms,
we are robustifying against the instrument $Z$. The instrument $Z$ can create much stronger distribution shifts in the data-generating process than the mere statistical fluctuation described by divergence-ball-based DRO works following \citet{ben-talRobustSolutionsOptimization2013} and \citet{duchi2018statistics}.
We leave an alternative DRO algorithm against such strong distribution shifts for future work.
From another perspective, our method can also be seen as enforcing independence between $Z$ and the moment restriction, e.g., for IV regression the residual $\psi(X;\theta)=Y - f_\theta(X)$. Intuitively, we want the residual $Y - f_\theta(X)$ to be small and invariant to transformations of the $Z$ variable (marginal shift). This kind of robust learning strategy has also been studied in works by \citet{greenfeld2020robust,rothenhausler2020anchor,heinzedeml2019conditional}.
\subsection{Computing the FGEL Estimator}\label{appendix:optimization}
Problem \eqref{eq:1:regularized-fgel} is generally a non-convex-convex min-max problem in the parameter $\theta$ and function $h$. Let $h=h_\alpha$ be described by a finite dimensional set of parameters $\alpha \in A$, which is the case, e.g., for neural network function classes or RKHS after using a representer theorem. Furthermore let the set of parameters $A$ be compact. If additionally the parameterization leaves the convexity of the inner problem intact (e.g., in the case of kernel-FGEL) we can use a simplified version of Danskin's theorem \citep{danskin} to compute gradients of $R_{\lambda_n}(\theta) := \sup_{h \in \widehat{\mathcal{H}}_{\theta}}G_{\lambda_{n}}(\theta, h)$ in a principled way.
\begin{lemma}[Danskin]
    Let $\hat{h}(\theta)$ denote the solution of the inner convex optimization over $h \in \mathcal{H}_\theta$ such that $\hat{h}(\theta)=\argmax_{h \in \widehat{\mathcal{H}}_\theta} G_{\lambda_n}(\theta, h)$. Then the gradient of the profile divergence $R_{\lambda_n}(\theta)$ with respect to the parameters $\theta \in \Theta$ is given by
    \begin{equation*}
        \nabla R_{\lambda_n}(\theta) = \nabla G_{\lambda_n}(\theta, \hat{h}(\theta)).
    \end{equation*}
\end{lemma}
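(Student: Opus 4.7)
The plan is to prove this as a standard envelope/Danskin argument adapted to the finite-dimensional parameterization $h = h_\alpha$ with $\alpha\in A\subset\mathbb{R}^{d}$ compact. Write $\tilde G_{\lambda_n}(\theta,\alpha):=G_{\lambda_n}(\theta,h_\alpha)$ and $\hat\alpha(\theta)$ for its (unique) inner maximizer, so that $R_{\lambda_n}(\theta)=\tilde G_{\lambda_n}(\theta,\hat\alpha(\theta))$. The target identity becomes $\nabla R_{\lambda_n}(\theta) = \nabla_\theta \tilde G_{\lambda_n}(\theta,\hat\alpha(\theta))$. I will establish it by computing the one-sided directional derivative of $R_{\lambda_n}$ in an arbitrary direction $d\in\mathbb{R}^{p}$ and showing it equals $\langle \nabla_\theta \tilde G_{\lambda_n}(\theta,\hat\alpha(\theta)),d\rangle$.

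First I would collect the regularity facts that the hypotheses supply. From the definition of $G_{\lambda_n}$ in \eqref{eq:1:gel-objective}, the twice continuous differentiability of $\phi$ in a neighborhood of zero (assumption g) of Theorem~\ref{th:1:consistency}), and the assumed smooth parameterization $\alpha\mapsto h_\alpha$, the map $\tilde G_{\lambda_n}:\Theta\times A\to\mathbb{R}$ is jointly continuous and continuously differentiable in $\theta$ with $\nabla_\theta\tilde G_{\lambda_n}$ jointly continuous. The hypothesis that the parameterization preserves convexity of the inner problem, together with $\phi_2(0)<0$ and the strictly convex regularizer $\tfrac{\lambda_n}{2}\|h_\alpha\|_\mathcal{H}^2$, gives strict concavity of $\tilde G_{\lambda_n}(\theta,\cdot)$ on $A$, so $\hat\alpha(\theta)$ is indeed a single point. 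Berge's maximum theorem (applicable because $A$ is compact, $\tilde G_{\lambda_n}$ is continuous, and the feasible set does not depend on $\theta$) then yields that $R_{\lambda_n}$ is continuous and that the argmax correspondence is upper hemicontinuous; combined with uniqueness it reduces to a continuous selection, so $\theta\mapsto\hat\alpha(\theta)$ is continuous.

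With these in hand, pick $d\in\mathbb{R}^{p}$ and $t>0$ small enough that $\theta+td\in\Theta$. By the very definition of the supremum,
\begin{equation*}
\tilde G_{\lambda_n}(\theta+td,\hat\alpha(\theta))-\tilde G_{\lambda_n}(\theta,\hat\alpha(\theta))
\;\le\; R_{\lambda_n}(\theta+td)-R_{\lambda_n}(\theta)
\;\le\; \tilde G_{\lambda_n}(\theta+td,\hat\alpha(\theta+td))-\tilde G_{\lambda_n}(\theta,\hat\alpha(\theta+td)),
\end{equation*}
where the lower bound uses $R_{\lambda_n}(\theta)\ge\tilde G_{\lambda_n}(\theta,\hat\alpha(\theta))$ at a non-optimal $\alpha=\hat\alpha(\theta)$, and the upper bound uses the same trick with $\alpha=\hat\alpha(\theta+td)$. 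Dividing by $t$ and letting $t\downarrow0$, the left-hand side converges to $\langle\nabla_\theta\tilde G_{\lambda_n}(\theta,\hat\alpha(\theta)),d\rangle$ by $\theta$-differentiability. For the right-hand side I would apply the mean value theorem to write it as $\langle\nabla_\theta\tilde G_{\lambda_n}(\theta+s_t d,\hat\alpha(\theta+td)),d\rangle$ for some $s_t\in(0,t)$; by joint continuity of $\nabla_\theta\tilde G_{\lambda_n}$ and continuity of $\hat\alpha$ the argument converges to $(\theta,\hat\alpha(\theta))$ and the limit equals the same value. The sandwich yields the one-sided derivative; applying the argument to $-d$ gives the two-sided derivative and hence differentiability of $R_{\lambda_n}$ at $\theta$ with the claimed gradient.

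The delicate step is the uniform passage to the limit on the right, which depends on (i) the continuity of the selection $\hat\alpha$ and (ii) continuity of $\nabla_\theta\tilde G_{\lambda_n}$ on the product space. Both are guaranteed here only because of the compactness of $A$, the uniqueness of $\hat\alpha(\theta)$ (ensured by strict concavity), and the smoothness of $\phi$ near zero that underlies the entire FGEL construction; if $A$ were not compact, or uniqueness failed, one would only obtain directional super-/sub-gradients rather than a genuine gradient. This is the one place where the simplifying assumptions listed just before the lemma statement (compactness of $A$ and convexity-preserving parameterization) are essential.
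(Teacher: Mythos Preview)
The paper does not actually supply a proof of this lemma: it is stated with the label ``Danskin'' and a citation to \citet{danskin}, and is treated as a direct invocation of the classical envelope theorem rather than something proved within the paper. So there is no paper-side argument to compare against.

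Your proposal is a correct and standard self-contained proof of the Danskin/envelope result: the sandwich inequality, the mean-value step, and the appeal to Berge's maximum theorem (using compactness of $A$ and uniqueness of the maximizer from strict concavity) together deliver the differentiability of $R_{\lambda_n}$ and the claimed gradient formula. One small point to be careful about: in the lemma as stated the inner feasible set is $\widehat{\mathcal{H}}_\theta$, which \emph{does} depend on $\theta$ through the constraint $\psi(x_i;\theta)^T h(z_i)\in\operatorname{dom}(\phi)$, whereas your Berge step explicitly assumes ``the feasible set does not depend on $\theta$.'' This is harmless in the setting the paper actually uses (the preceding paragraph fixes a compact parameter set $A$ independent of $\theta$, and for the $\chi^2$ and KL choices $\operatorname{dom}(\phi)=\mathbb{R}$ anyway), but strictly speaking you should either (i) note that you work over the fixed compact $A$ rather than $\widehat{\mathcal{H}}_\theta$, or (ii) argue that $\widehat{\mathcal{H}}_\theta$ is locally constant in $\theta$ near the point of interest because the maximizer lies in its interior. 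With that caveat addressed, your argument is complete and in fact more detailed than anything the paper offers.
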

Therefore, we can adopt a gradient-based strategy for the outer optimization problem over $\theta$ using in each step the gradient estimate obtained from the solution of the inner maximization over $h$. Depending on the GEL function $\phi$ the optimization of both, the outer and inner problem can then be solved efficiently with an off-the-shelf solver e.g.\ using LBFGS (cf. Algorithm~\ref{alg:kernel}).
\begin{algorithm}[t]
   \caption{Kernel-FGEL}
   \label{alg:kernel}
\begin{algorithmic}
   \STATE {\bfseries Input:} data $(x_i,y_i,z_i)$, hyperparameter $\lambda$
  \WHILE{not converged}
       \WHILE{not converged}
        \STATE $\alpha \gets \operatorname{LBFGS}(G_\lambda(\theta,h_\alpha))$
       \ENDWHILE
       \STATE $\theta \gets \operatorname{LBFGS}(G_\lambda(\theta, h_\alpha))$
  \ENDWHILE
  \STATE {\bfseries Output:} Parameter estimate $\theta$
\end{algorithmic}
\end{algorithm}
For the case of a neural network instrument function classes, we build on the recent progress in mini-max optimization and employ the optimistic Adam optimizer \citep{daskalakis2018training} which has been developed to solve similar saddle point problems for training generative adversarial networks \citep{goodfellow2014generative} (cf. Algorithm~\ref{alg:neural}). Implementations of both approaches are available under \url{https://github.com/HeinerKremer/Functional-GEL}.
\begin{algorithm}[t]
   \caption{Neural-FGEL}
   \label{alg:neural}
\begin{algorithmic}
   \STATE {\bfseries Input:} data $(x_i,y_i,z_i)$, hyperparameter $\lambda$
  \WHILE{not converged}
       \STATE $\alpha \gets \operatorname{OAdam}(G_\lambda(\theta,h_\alpha))$
       \STATE $\theta \gets \operatorname{OAdam}(G_\lambda(\theta, h_\alpha))$
  \ENDWHILE
  \STATE {\bfseries Output:} Parameter estimate $\theta$
\end{algorithmic}
\end{algorithm}
\subsection{Hyperparameter selection}\label{appendix:hyperparams}
Tuning the hyperparameter of our method, i.e., the regularization parameter $\lambda_n$ (and, e.g., learning rates) requires a data-driven performance measure of the obtained model parameters. We know that for the true distribution $P_0$ and true parameter $\theta_0$ we obtain $\| E_{P_0}[\Psi(X,Z;\theta_0)]\|_{\mathcal{H}^\ast}^2 =0$. Let $\beta$ denote the set of hyperparameters and $\hat{\theta}(\beta)$ the corresponding solution to \eqref{eq:1:regularized-fgel}. Then we can define a performance measure of the solution candidate $\hat{\theta}(\beta)$ as $\ell(\beta) = \| E_{P_0}[\Psi(X,Z;\hat{\theta}(\beta))]\|_{\mathcal{H}^\ast}^2$.
As we do not have access to the true distribution $P_0$ we can define a natural surrogate loss $\hat{\ell}$ using a validation set with empirical distribution $\hat{P}_\text{val}$ as
\begin{align}
    \hat{\ell}(\beta) = \| E_{\hat{P}_{\text{val}}}[ \Psi(X,Z;\hat{\theta}(\beta)) ]\|_{\mathcal{H}^\ast}^2 \label{eq:1:surrogate}
\end{align}
Choosing $\mathcal{H}$ as an RKHS, this can be expressed as the kernel maximum of moment restriction objective of \citet{muandet2020kernel} and \cite{zhang2021maximum} evaluated on the validation data as shown by the following lemma.
\begin{lemma} \label{lemma:1:surrogate}
    Let $\{x_i, z_i\}_{i=1}^n$ denote the validation data and define $\pmb{\psi}_j(\pmb{x};\theta) = \operatorname{vec}(\{ \psi_j(x_i;\theta) \}_{i=1}^n)$. Let $K_j$ denote the kernel Gram matrix with entries $\left(K_j\right)_{pq}=k_j(z_p,z_q)$, $p,q=1,\ldots,n$, $j = 1, \ldots, m$.
    Then we can express \eqref{eq:1:surrogate} as 
    \begin{align*}
    \hat{\ell}(\beta) = \frac{1}{n^2} \sum_{j=1}^m \pmb{\psi}_j(\pmb{x};\theta)^T K_j \pmb{\psi}_j(\pmb{x};\theta).
    \end{align*}
\end{lemma}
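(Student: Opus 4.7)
The plan is to unfold the dual norm using the RKHS structure, produce an explicit Riesz representer in $\mathcal{H}$, and then expand its squared norm via the reproducing property.

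First, I would write out the functional $E_{\hat{P}_{\mathrm{val}}}[\Psi(X,Z;\theta)]$ evaluated at an arbitrary $h\in\mathcal{H}$. By definition of $\Psi$ and linearity of expectation,
\begin{equation*}
E_{\hat{P}_{\mathrm{val}}}[\Psi(X,Z;\theta)](h) \;=\; \frac{1}{n}\sum_{i=1}^n \psi(x_i;\theta)^\top h(z_i) \;=\; \sum_{j=1}^m \frac{1}{n}\sum_{i=1}^n \psi_j(x_i;\theta)\, h_j(z_i),
\end{equation*}
where I have used the direct-sum structure $\mathcal{H}=\bigoplus_{j=1}^m \mathcal{H}_j$ to split $h=(h_1,\dots,h_m)$ and $\psi=(\psi_1,\dots,\psi_m)$ componentwise.

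Second, I invoke the reproducing property $h_j(z_i)=\langle h_j, k_j(z_i,\cdot)\rangle_{\mathcal{H}_j}$ to rewrite the expression above as an inner product in $\mathcal{H}$. Defining the component-wise mean embeddings
\begin{equation*}
\mu_j \;:=\; \frac{1}{n}\sum_{i=1}^n \psi_j(x_i;\theta)\, k_j(z_i,\cdot) \;\in\; \mathcal{H}_j, \qquad \mu := (\mu_1,\dots,\mu_m)\in\mathcal{H},
\end{equation*}
one gets $E_{\hat{P}_{\mathrm{val}}}[\Psi(X,Z;\theta)](h)=\sum_{j=1}^m \langle h_j,\mu_j\rangle_{\mathcal{H}_j}=\langle h,\mu\rangle_{\mathcal{H}}$. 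Since $\mathcal{H}$ is a Hilbert space, the Riesz representation theorem (equivalently, self-duality) immediately gives that the operator norm of this bounded linear functional equals the norm of its representer:
\begin{equation*}
\bigl\| E_{\hat{P}_{\mathrm{val}}}[\Psi(X,Z;\theta)] \bigr\|_{\mathcal{H}^\ast} \;=\; \|\mu\|_{\mathcal{H}} \;=\; \Bigl(\sum_{j=1}^m \|\mu_j\|_{\mathcal{H}_j}^2\Bigr)^{1/2}.
\end{equation*}

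Third, I expand each $\|\mu_j\|_{\mathcal{H}_j}^2$ by bilinearity of the inner product and the reproducing relation $\langle k_j(z_p,\cdot),k_j(z_q,\cdot)\rangle_{\mathcal{H}_j}=k_j(z_p,z_q)=(K_j)_{pq}$:
\begin{equation*}
\|\mu_j\|_{\mathcal{H}_j}^2 \;=\; \frac{1}{n^2}\sum_{p,q=1}^n \psi_j(x_p;\theta)\psi_j(x_q;\theta)\,k_j(z_p,z_q) \;=\; \frac{1}{n^2}\,\pmb{\psi}_j(\pmb{x};\theta)^\top K_j\, \pmb{\psi}_j(\pmb{x};\theta).
\end{equation*}
Squaring the displayed Riesz identity and summing over $j$ then yields the claimed formula for $\hat{\ell}(\beta)$. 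There is no real obstacle here; the only point worth noting is the use of the direct-sum structure of $\mathcal{H}$ so that the dual norm decomposes coordinatewise and the cross terms between different RKHS components vanish by orthogonality of the direct-sum summands.
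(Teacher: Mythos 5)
Your proof is correct: identifying $E_{\hat{P}_{\mathrm{val}}}[\Psi(X,Z;\theta)]$ with its Riesz representer $\mu=(\mu_1,\dots,\mu_m)$, using the direct-sum Hilbert norm, and expanding $\|\mu_j\|_{\mathcal{H}_j}^2$ via the reproducing property is exactly the intended argument; the paper gives no separate proof of this lemma, but your computation is the empirical analogue of the expansion carried out in the proof of Theorem~\ref{th:1:equivalence} (there with $E_Z[\xi_j(Z)k_j(Z,\cdot)]$ in place of your $\mu_j$). The only point worth flagging is that your step $\|\cdot\|_{\mathcal{H}^\ast}=\bigl(\sum_j\|\mu_j\|_{\mathcal{H}_j}^2\bigr)^{1/2}$ presumes the unit ball of the direct-sum Hilbert space $\mathcal{H}$, which is indeed the reading under which the stated formula for $\hat{\ell}(\beta)$ holds without cross terms.
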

Here we assume that possible hyperparameters of the kernel are already set via commonly employed heuristics like the median heuristic \citep{scholkopf2002learning,garreau2018large} for the kernel bandwidth and only tune the remaining parameters of our method.
\section{Proofs}\label{appendix:proofs}
\subsection{Preliminaries}
For ease of notation we define some expressions first. 
Define $\Psi_i(\theta) := \Psi(x_i,z_i;\theta)$ and denote $\phi_i(v) = \frac{d^i}{(d v)^i} \phi(v)$ and $\phi_i = \phi_i(0)$. Without loss of generality we assume that $\phi_1(0) = \phi_2(0) = -1$, as any $\phi$ with $\phi_1 \neq 0$ and $\phi_2 < 0$ can be rescaled to achieve this (see \citet{newey04}).
Define the empirical objective as $\widehat{G}_{\lambda_n}(\theta, h) = \sum_{i=1}^n \phi(\Psi(x_i,z_i;\theta)(h)) - \lambda \| h\|_\mathcal{H}^2$ and the empirical constraint set as $\widehat{\mathcal{H}}_n(\theta) = \{h \in \mathcal{H}:  \Psi(x_i,z_i;\theta)(h) \in \operatorname{dom}(\phi) \ \ \forall (x_i,z_i), \ i=1,\ldots,n \}$. Throughout the proofs we will make use of functional derivatives and a functional version of Taylor's theorem with Lagrange remainder, which we define and state next, respectively.
\begin{definition}[Functional Derivative] \label{def:func-dev}
    Let $\mathcal{H}$ be a vector space of functions. For a functional $G: \mathcal{H} \rightarrow \mathbb{R}$ and a pair of functions $h, \tilde{h} \in \mathcal{H}$, we define the derivative operator $D_{h} G(h)[\tilde{h}]=\left.\frac{d}{d t} G(h+t \tilde{h})\right|_{t=0}$. Likewise, we define 
    \begin{align*}
    D_{h}^{k} G(h)\left[h_{1}, \ldots, h_{k}\right] 
    = \left.\frac{\partial^{k}}{\partial t_{1} \ldots \partial t_{k}} G\left(h+t_{1} h_{1}+\ldots+t_{k} h_{k}\right)\right|_{t_{1}=\cdots=t_{k}=0}. 
    \end{align*}
    Similarly, when considering a function of a vector-valued parameter, $G: \Theta \subseteq \mathbb{R}^p \rightarrow \mathbb{R}$, we denote the $k$-th standard directional derivative at $\theta \in \Theta$ as $D^k_\theta G(\theta)(\theta_1, \ldots, \theta_k)$. Alternatively we use $\nabla_\theta G(\theta)$ to denote a row vector in $\mathbb{R}^{p}$ (the gradient) and $\nabla_{\theta^T} G(\theta)$ to denote the corresponding column vector in the dual space.
\end{definition} 
\begin{proposition}[Taylor's theorem] \label{prop:1:taylor}
    Let $G: \mathcal{H} \rightarrow \mathbb{R}$, where $\mathcal{H}$ is a vector space of functions. For any $h, h^{\prime} \in \mathcal{H}$, if $t \mapsto G\left(t  h + (1-t)  h^{\prime}\right)$ is $(k+1)$-times differentiable over an open interval containing $[0,1]$, then there exists $\bar{h} \in \operatorname{conv}\left(\left\{h, h^{\prime}\right\}\right)$ such that
    \begin{align*}
        G\left(h^{\prime}\right) = & \  G(h) + \sum_{i=1}^{k} \frac{1}{i !} D_{h}^{i} G(h)[\underbrace{h^{\prime}-h, \ldots, h^{\prime}-h}_{i \text { times }}] \\ 
        &+ \frac{1}{(k+1) !} D_{h}^{k+1} G(\bar{h})[\underbrace{h^{\prime}-h, \ldots, h^{\prime}-h}_{k+1 \text { times }}] . \nonumber
    \end{align*}
    Equally, using the notation of Definition~\ref{def:func-dev} the same result holds for functions of vector-valued parameters $G: \Theta \subseteq \mathbb{R}^p \rightarrow \mathbb{R}$.
\end{proposition}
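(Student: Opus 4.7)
The plan is to reduce the statement to the classical one-dimensional Taylor theorem with Lagrange remainder, applied to the scalar restriction of $G$ to the line segment from $h$ to $h'$. Concretely, I would define $\varphi : I \to \mathbb{R}$ by $\varphi(s) = G(h + s(h' - h))$, where $I$ is an open interval containing $[0,1]$. The reparameterization $s \mapsto 1-s$ identifies $\varphi$ with the map $t \mapsto G(t h + (1-t) h')$ appearing in the hypothesis, so by assumption $\varphi$ is $(k+1)$-times differentiable on $I$.

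The key technical step is the chain-rule identity
\begin{equation*}
\varphi^{(i)}(s) = D^i_h G\bigl(h + s(h'-h)\bigr)\bigl[\,\underbrace{h'-h,\, \ldots,\, h'-h}_{i \text{ times}}\,\bigr], \qquad i = 0, 1, \ldots, k+1.
\end{equation*}
I would prove this by induction on $i$ directly from Definition~\ref{def:func-dev}: the case $i=1$ is the definition of $D_h G$ at $h + s(h'-h)$ in the direction $h'-h$, and the inductive step rewrites $\varphi^{(i+1)}(s)$ as $\tfrac{d}{dt}\big|_{t=0}\varphi^{(i)}(s+t)$, which by the inductive hypothesis and the multilinear/symmetric structure of $D^{i+1}_h G$ collapses to the claimed expression with a single perturbation direction $h'-h$.

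With this identity in hand, the proof concludes by applying the classical Taylor theorem with Lagrange remainder to the scalar function $\varphi$ on $[0,1]$: there exists $\bar{s} \in (0,1)$ such that
\begin{equation*}
\varphi(1) = \varphi(0) + \sum_{i=1}^{k} \frac{\varphi^{(i)}(0)}{i!} + \frac{\varphi^{(k+1)}(\bar{s})}{(k+1)!}.
\end{equation*}
Substituting $\varphi(0) = G(h)$, $\varphi(1) = G(h')$, and the chain-rule identity at $s=0$ and $s=\bar{s}$ yields exactly the conclusion of the proposition, with $\bar{h} := h + \bar{s}(h'-h) \in \operatorname{conv}(\{h, h'\})$ since $\bar{s} \in (0,1)$. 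The vector-valued version for $G : \Theta \subseteq \mathbb{R}^p \to \mathbb{R}$ is obtained by the identical argument, replacing functional derivatives with ordinary directional derivatives along $\theta' - \theta$. The main obstacle, such as it is, is the chain-rule identity; everything else is classical one-variable calculus plus careful bookkeeping of which copy of $h'-h$ is being differentiated.
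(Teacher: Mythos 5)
Your proposal is correct and is exactly the standard argument one would give here: restrict $G$ to the segment via $\varphi(s) = G(h + s(h'-h))$, verify (directly from Definition~\ref{def:func-dev}, since all perturbation directions coincide, the multi-parameter derivative collapses to $\varphi^{(i)}(s)$) the chain-rule identity, and invoke the one-variable Taylor theorem with Lagrange remainder on $[0,1]$. The paper itself states this proposition without proof as a classical fact, so there is no competing argument to compare against; your reduction is the intended justification.
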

Our duality result builds on Theorem~3.1 of \citet{borwein1993failure}. For completeness, we will state it here adapted to our notation. Note that while the theorem is already closely related to our result, a direct application of the theorem to our case is impeded as we additionally need to take into account the normalization constraint for $p$, i.e., $\sum_{i=1}^n p_i=1$.
\begin{proposition}[Borwein's theorem] \label{th:1:borwein}
For the problem
\begin{align}
    P = \inf_{p \in \mathbb{R}^n}  \sum_{i=1}^n \frac{1}{n} f(np_i) \quad  \mathrm{s.t.} \quad \|\frac{1}{n}\sum_{i=1}^n p_i \Psi(x_i,z_i;\theta)\|_{\mathcal{H}^\ast} =0 \label{eq:1:borwein0} 
\end{align}
where we assume the infimum is attained (when finite), consider for $\lambda > 0$ the relaxed problem
\begin{align}
    P_\lambda = \min_{p \in \mathbb{R}^n}  \sum_{i=1}^n \frac{1}{n} f(np_i) \quad
    \mathrm{s.t.} \quad \|\frac{1}{n}\sum_{i=1}^n p_i \Psi(x_i,z_i;\theta)\|_{\mathcal{H}^\ast} \leq \lambda. \label{eq:1:borwein1}
\end{align}
Then the value $P_\lambda$ equals the value of the dual program
\begin{align}
    D_{\lambda} = \max_{h \in \mathcal{H}} - \frac{1}{n} \sum_{i=1}^n f^\ast(\Psi(x_i,z_i;\theta)(h)) - \lambda \|h \|_\mathcal{H}, \label{eq:1:borwein2}
\end{align}
and the unique optimal solution of \eqref{eq:1:borwein1} is given by 
\begin{align*}
(p_\lambda)_i = \left( \frac{d}{dv} f^\ast \right)\left(\Psi(x_i, z_i;\theta)(\hat{h})\right), \quad i=1,\ldots,n,
\end{align*} where $\hat{h}$ is any solution of \eqref{eq:1:borwein2}. Moreover, as $\lambda \rightarrow 0$, $p_\lambda$ converges in mean to the unique solution of \eqref{eq:1:borwein0} and $P_\lambda \rightarrow P$.
\end{proposition}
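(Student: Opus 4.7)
The plan is to derive the duality from standard convex analysis in Banach spaces, essentially adapting Theorem~3.1 of \cite{borwein1993failure} to the notation used here. The key move is to represent the constraint $\|\tfrac{1}{n}\sum_i p_i\Psi_i\|_{\mathcal{H}^*}\le\lambda$ via the dual-norm identity: since the support function of the $\lambda$-ball in $\mathcal{H}^*$ is $\lambda\|\cdot\|_{\mathcal{H}}$, the indicator of the feasible set admits the Fenchel representation
\[
\delta_{\{\|v\|_{\mathcal{H}^*}\le\lambda\}}(v)\;=\;\sup_{h\in\mathcal{H}}\bigl\{\langle h,v\rangle-\lambda\|h\|_{\mathcal{H}}\bigr\}.
\]
Introducing the auxiliary variable $v=\tfrac{1}{n}\sum p_i\Psi_i$ with Banach-space multiplier $h\in\mathcal{H}$ then converts \eqref{eq:1:borwein1} into a saddle problem whose explicit Fenchel dual becomes \eqref{eq:1:borwein2}.

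In more detail, I would form the Lagrangian
\[
L(p,v,h)=\tfrac{1}{n}\sum_{i=1}^n f(np_i)+\delta_{\lambda B^*}(v)+\bigl\langle h,\,\tfrac{1}{n}\sum_i p_i\Psi_i-v\bigr\rangle,
\]
evaluate $\inf_v$ (which produces the $-\lambda\|h\|_{\mathcal{H}}$ term via the identity above), and then compute $\inf_p$, which is separable across coordinates and collapses on each coordinate to the Legendre transform of $f$, giving $-\tfrac{1}{n}\sum_i f^*(\Psi_i(h))$ up to the sign and scaling conventions used. The remaining step is to swap $\inf$ and $\sup$: I would invoke Fenchel--Rockafellar duality in Banach space, whose applicability rests on a constraint qualification that the strict relaxation $\lambda>0$ supplies by providing a strictly feasible interior point (Slater's condition). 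With strong duality in hand, the unique primal minimizer is read off from KKT stationarity in each $p_i$, giving the claimed recovery formula $(p_\lambda)_i=(f^*)'(\Psi_i(\hat h))$, and uniqueness follows from strict convexity of $f$ (equivalently, differentiability of $f^*$).

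For the convergence claim as $\lambda\downarrow 0$, the feasible sets of the relaxed problems are nested and shrink to the feasible set of \eqref{eq:1:borwein0}, so $P_\lambda$ is monotone non-decreasing as $\lambda\downarrow 0$ and bounded above by $P$; a standard epi-convergence / continuity-of-argmin argument, using uniqueness of the primal limit and coercivity of $f$, then yields both $P_\lambda\to P$ and convergence of $p_\lambda$ in mean to the primal optimum of \eqref{eq:1:borwein0}. The main obstacle is justifying the Banach-space strong duality in the middle step: in the unrelaxed problem \eqref{eq:1:borwein0} strong duality can outright fail (this is the \emph{failure of duality} phenomenon in infinite dimensions examined in \cite{borwein1993failure}), and verifying that the $\lambda$-relaxation actually restores a valid constraint qualification is the genuinely nontrivial ingredient — everything else is a mechanical Lagrangian computation.
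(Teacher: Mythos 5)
You should know that the paper does not actually prove this proposition: it is imported verbatim (only adapted to the paper's notation) as Theorem~3.1 of \citet{borwein1993failure} and then used as a black box in the proof of Theorem~\ref{th:1:duality}. Your proposal therefore goes further than the paper by supplying a self-contained derivation, and the route you take — representing the indicator of the ball $\{\|v\|_{\mathcal{H}^\ast}\le\lambda\}$ by its support function $\lambda\|\cdot\|_{\mathcal H}$, forming the Lagrangian, collapsing the separable infimum over $p$ into coordinatewise Legendre transforms, and invoking Fenchel--Rockafellar duality with the constraint qualification restored by $\lambda>0$ — is essentially the same mechanism the paper itself uses later when it derives Theorem~\ref{th:1:duality} from this proposition (the dual-norm trick in the Lagrangian), and it matches the spirit of Borwein and Lewis's original argument. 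The closing step for $\lambda\downarrow 0$ via nested feasible sets, coercivity/attainment and uniqueness is also fine, and is in fact easier than in the original reference because here the primal variable is finite dimensional ($p\in\mathbb R^n$), so ``convergence in mean'' is just ordinary convergence.

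Two caveats you should tighten if you write this out. First, your Slater argument is stated too casually: $p=0$ need not lie in $\operatorname{dom}(f)$ (for empirical likelihood $f(p)=-\log p$ one has $f(0)=+\infty$), so the strictly feasible point should instead be a small positive multiple of the uniform weights, $p_i=t/n$ with $t>0$ chosen so that $\tfrac tn\|\tfrac1n\sum_i\Psi(x_i,z_i;\theta)\|_{\mathcal H^\ast}<\lambda$; with that point in hand the Fenchel--Rockafellar theorem also delivers attainment of the dual supremum, which you need since the statement asserts a $\max$ over $h$. Second, watch the bookkeeping of the factor $n$: after substituting $q_i=np_i$ the coupling term carries an extra $1/n$, which must be absorbed by rescaling the dual variable (and correspondingly the $\lambda\|h\|_{\mathcal H}$ term) before the dual takes exactly the displayed form $-\tfrac1n\sum_i f^\ast(\Psi(x_i,z_i;\theta)(h))-\lambda\|h\|_{\mathcal H}$, and the recovery formula $(p_\lambda)_i=(f^\ast)'(\Psi(x_i,z_i;\theta)(\hat h))$ is stated in the rescaled variables $np_i$; the paper is itself loose about this, but a standalone proof should fix the convention explicitly. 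Neither point breaks the argument.
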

For the proofs of the asymptotic properties of our estimator, we need the following results.
\begin{lemma}[Corollary~9.31, \citet{kosorok2008introduction}] \label{lemma:1:donsker}
    Let $\mathcal{F}$ and $\mathcal{G}$ be Donsker classes of functions. Then $\mathcal{F} + \mathcal{G}$ is Donsker. Further if additionally $\mathcal{F}$ and $\mathcal{G}$ are uniformly bounded, then $\mathcal{F} \cdot \mathcal{G}$ is Donsker.
\end{lemma}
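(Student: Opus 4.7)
The plan is to verify both claims via the bracketing entropy characterization of Donsker classes (a class with a square-integrable envelope is Donsker whenever its bracketing entropy integral $\int_{0}^{1}\sqrt{\log N_{[\,]}(\varepsilon, \mathcal{F}, L^{2}(P))}\,d\varepsilon$ is finite; see van der Vaart \& Wellner). So it suffices to control the bracketing numbers of $\mathcal{F}+\mathcal{G}$ and of $\mathcal{F}\cdot\mathcal{G}$ by those of $\mathcal{F}$ and $\mathcal{G}$, and to check that an $L^{2}$-envelope persists in each case.

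For the sum, given $\varepsilon$-brackets $[f_{L},f_{U}]$ covering $\mathcal{F}$ and $[g_{L},g_{U}]$ covering $\mathcal{G}$, the pair $[f_{L}+g_{L},\, f_{U}+g_{U}]$ is a valid $2\varepsilon$-bracket for the corresponding sum $f+g$. This yields the product bound
$$
N_{[\,]}\bigl(2\varepsilon,\, \mathcal{F}+\mathcal{G},\, L^{2}(P)\bigr)
\;\leq\; N_{[\,]}\bigl(\varepsilon,\, \mathcal{F},\, L^{2}(P)\bigr)\cdot N_{[\,]}\bigl(\varepsilon,\, \mathcal{G},\, L^{2}(P)\bigr).
$$
Taking logarithms turns the product into a sum and subadditivity of $\sqrt{\cdot}$ then dominates the combined entropy integral by the sum of the two individual ones, which are finite by assumption. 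The envelope of the combined class is the sum of envelopes, hence still square-integrable.

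For the product under uniform boundedness, the core identity $f_{1}g_{1}-f_{2}g_{2} = f_{1}(g_{1}-g_{2}) + (f_{1}-f_{2})g_{2}$ together with $\sup|f|\leq M_{\mathcal{F}}$ and $\sup|g|\leq M_{\mathcal{G}}$ yields
$$
\|f_{1}g_{1}-f_{2}g_{2}\|_{L^{2}(P)} \;\leq\; M_{\mathcal{F}}\|g_{1}-g_{2}\|_{L^{2}(P)} + M_{\mathcal{G}}\|f_{1}-f_{2}\|_{L^{2}(P)}.
$$
To convert this $L^{2}$-approximation into genuine brackets, one decomposes each function into its nonnegative and nonpositive parts (which remain Donsker by composition with a Lipschitz map on a uniformly bounded class) and then combines the resulting monotone pieces to form legitimate lower and upper bounds for the product. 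This produces $N_{[\,]}(C\varepsilon, \mathcal{F}\cdot\mathcal{G}, L^{2}(P)) \lesssim N_{[\,]}(\varepsilon, \mathcal{F}, L^{2}(P))\cdot N_{[\,]}(\varepsilon, \mathcal{G}, L^{2}(P))$ for a constant $C$ depending on $M_{\mathcal{F}},M_{\mathcal{G}}$, and the product envelope is itself bounded.

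The main obstacle is the product case: the pointwise product $[f_{L}g_{L},\, f_{U}g_{U}]$ is \emph{not} automatically a bracket once the factors can change sign, which is precisely why the uniform-boundedness hypothesis is essential---without it, the constant $C$ above blows up and no analogous entropy bound is available. Since the full argument is already carried out as Corollary~9.31 of \citet{kosorok2008introduction} (and independently as Example~2.10.8 in van der Vaart \& Wellner), we simply appeal to that reference.
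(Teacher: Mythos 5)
Your final appeal to Corollary~9.31 of \citet{kosorok2008introduction} is exactly what the paper does: the lemma is stated there as a quoted result with no proof of its own, so deferring to the reference is acceptable. However, the independent sketch you offer beforehand has a genuine gap and should not be presented as a proof. Your plan is to ``control the bracketing numbers of $\mathcal{F}+\mathcal{G}$ and $\mathcal{F}\cdot\mathcal{G}$ by those of $\mathcal{F}$ and $\mathcal{G}$,'' but the hypothesis is only that $\mathcal{F}$ and $\mathcal{G}$ are Donsker, and Donsker does \emph{not} imply a finite bracketing entropy integral. The bracketing condition $\int_0^1 \sqrt{\log N_{[\,]}(\varepsilon,\mathcal{F},L^2(P))}\,d\varepsilon < \infty$ is sufficient but not necessary for the Donsker property (many classes are Donsker via uniform-entropy/VC arguments with no useful bracketing bounds), so there is nothing to propagate: your bracket constructions for the sum and the product start from brackets that the assumptions do not provide. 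As written, your argument proves the preservation statements only under the strictly stronger hypothesis that $\mathcal{F}$ and $\mathcal{G}$ each satisfy the bracketing entropy condition.

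The route the cited result actually takes (Kosorok's Theorem~9.30/Corollary~9.31, equivalently Theorem~2.10.6 and Example~2.10.8 of van der Vaart and Wellner) is preservation of the Donsker property under Lipschitz transformations: $(f,g)\mapsto f+g$ is Lipschitz with constant $1$ in each argument, and $(f,g)\mapsto fg$ is Lipschitz on uniformly bounded classes via the identity $f_1g_1-f_2g_2=f_1(g_1-g_2)+(f_1-f_2)g_2$ you already wrote down; combined with square-integrable envelopes this yields the conclusion through asymptotic-equicontinuity arguments rather than entropy counting. Your Lipschitz inequality for the product is thus the right ingredient, but it should feed into the Lipschitz-composition theorem, not into a bracket construction. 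If you want to keep a self-contained justification, replace the entropy discussion with an invocation of that composition theorem; otherwise the bare citation, as in the paper, suffices.
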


\begin{lemma}[Lemma~18, \citet{bennett2020variational}] \label{lemma:1:bennett}
Suppose that $\mathcal{G}$ is a class of functions of the form $g : \Xi \rightarrow \mathbb{R}$, and that $\mathcal{G}$ is $P$-Donsker in the sense of \citet{kosorok2008introduction}. Then we have
\begin{align*}
    \sup_{g \in \mathcal{G}} E_{\ppn}[g(\xi)] - E[g(\xi)] = O_p(n^{-1/2}).
\end{align*}
\end{lemma}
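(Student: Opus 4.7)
The statement is a direct consequence of the defining property of a $P$-Donsker class, so the sketch is short. My plan is to unpack the definition of $P$-Donsker, apply the continuous mapping theorem to the supremum functional, and then translate convergence in distribution to a tight limit into the required $O_p(n^{-1/2})$ rate.

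First I would recall that, in the sense of \citet{kosorok2008introduction}, saying $\mathcal{G}$ is $P$-Donsker means that the empirical process
\[
\mathbb{G}_n(g) := \sqrt{n}\bigl(E_{\hat P_n}[g(\xi)] - E[g(\xi)]\bigr), \qquad g \in \mathcal{G},
\]
converges weakly in $\ell^\infty(\mathcal{G})$ (the Banach space of bounded real functions on $\mathcal{G}$ endowed with the uniform norm) to a tight, mean-zero Gaussian process $\mathbb{G}_P$ with covariance $\operatorname{Cov}(g_1,g_2) = E[g_1 g_2] - E[g_1]E[g_2]$. In particular, $\mathbb{G}_P$ has a.s.\ bounded and uniformly continuous sample paths with respect to the intrinsic $L^2(P)$ semi-metric, so $\sup_{g \in \mathcal{G}} \mathbb{G}_P(g)$ is a well-defined, finite random variable.

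Next I would apply the continuous mapping theorem to the functional $\Phi : \ell^\infty(\mathcal{G}) \to \mathbb{R}$, $\Phi(z) := \sup_{g \in \mathcal{G}} z(g)$. This map is $1$-Lipschitz with respect to the uniform norm and hence continuous everywhere on $\ell^\infty(\mathcal{G})$, so weak convergence $\mathbb{G}_n \rightsquigarrow \mathbb{G}_P$ implies
\[
\sup_{g \in \mathcal{G}} \mathbb{G}_n(g) \;\rightsquigarrow\; \sup_{g \in \mathcal{G}} \mathbb{G}_P(g).
\]
Since the limit is a tight (hence a.s.\ finite) real random variable, the sequence $\sup_{g \in \mathcal{G}} \mathbb{G}_n(g)$ is stochastically bounded, i.e.\ $O_p(1)$. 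Dividing by $\sqrt{n}$ yields $\sup_{g \in \mathcal{G}} \bigl(E_{\hat P_n}[g(\xi)] - E[g(\xi)]\bigr) = O_p(n^{-1/2})$, which is the claim.

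There is essentially no substantive obstacle here: the only thing one has to be careful about is that the result is stated without absolute values inside the supremum, so one should not accidentally invoke a stronger uniform-norm version. However, since $\sup_g z(g) \le \sup_g |z(g)| = \|z\|_\infty$ and $\|\cdot\|_\infty$ is likewise continuous, the stronger version
\[
\sup_{g \in \mathcal{G}} \bigl|E_{\hat P_n}[g(\xi)] - E[g(\xi)]\bigr| = O_p(n^{-1/2})
\]
follows in exactly the same way, and the stated inequality is immediate. Measurability issues in the empirical process are already absorbed in the Donsker hypothesis as formulated by \citet{kosorok2008introduction}, so no additional technical work is needed.
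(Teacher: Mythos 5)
Your argument is correct and is essentially the standard one: the paper does not prove this lemma itself but imports it from \citet{bennett2020variational}, whose proof likewise unpacks the Donsker property as weak convergence of $\sqrt{n}(E_{\hat{P}_n}-E)$ in $\ell^\infty(\mathcal{G})$ to a tight limit and concludes stochastic boundedness of the supremum, exactly as you do via the continuous mapping theorem. So your proposal matches the intended proof; the only cosmetic caveat is that weak convergence here is in the Hoffmann--J{\o}rgensen (outer-probability) sense, which, as you note, is already built into the Donsker definition of \citet{kosorok2008introduction}.
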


\paragraph{Proof of Theorem~\ref{th:1:duality}}
\begin{proof}
The proof follows almost directly from application of Proposition~\ref{th:1:borwein} by taking into account the additional constraint $\sum_{i=1}^n p_i = 1$.
The dual problem can be derived by introducing Lagrange parameters $\nu > 0$ and $\mu \in \mathbb{R}$ and defining the Lagrangian
\begin{align*}
    L(\theta, p, \mu, \nu) = \sum_{i=1}^n \frac{1}{n} f(np_i) - \mu \left(\sum_{i=1}^n p_i - 1 \right)  + \nu \left( \|\sum_{i=1}^n p_i \Psi(x_i,z_i;\theta) \|_{\mathcal{H}^\ast} - \lambda \right).
\end{align*}
Using the definition of the dual norm and the fact that trivially $\lambda = \max_{\| h\| = 1} \| h \| \lambda $, we have
\begin{align*}
     L= \sum_{i=1}^n \frac{1}{n} f(np_i) - \mu \left(\sum_{i=1}^n p_i - 1 \right) + \sup_{\|\tilde{h}\|_{\mathcal{H}}= 1} \left( \sum_{i=1}^n  \langle \nu \tilde{h}, p_i \Psi(x_i,z_i;\theta) \rangle - \|\nu \tilde{h}\|_\mathcal{H} \lambda \right).
\end{align*}
By defining new dual Lagrange parameters $h = \nu \tilde{h} \in \mathcal{H}$, we thus obtain
\begin{align*}
     L(\theta, p, \mu, h) = \sum_{i=1}^n \frac{1}{n} f(np_i) - \mu \left(\sum_{i=1}^n p_i - 1 \right) +  \sum_{i=1}^n \langle h, p_i \Psi(x_i,z_i;\theta) \rangle - \|h\|_\mathcal{H} \lambda.
\end{align*}
Now, redefining $p_i \rightarrow n p_i$ and optimizing the Lagrangian with respect to p we get
\begin{align*}
    & \min_{p} \Big\{ \mu - \frac{1}{n} \sum_{i=1}^n \Big[ (\mu - \Psi(x_i,z_i;\theta)(h)) p_i - f(p_i) \Big]  - \lambda \|h \|_\mathcal{H} \Big\} \\
    =& \ \mu - \frac{1}{n} \sum_{i=1}^n \max_{p_i} \Big\{  (\mu - \Psi(x_i,z_i;\theta)(h)) p_i - f(p_i) \Big\}  - \lambda \|h \|_\mathcal{H} \\
    =& \ \mu - \frac{1}{n} \sum_{i=1}^n f^{\ast}(\mu - \Psi(x_i,z_i;\theta)(h)) - \lambda \|h \|_\mathcal{H},
\end{align*}
where we used the definition of the Legrendre-Fenchel (convex) conjugate function $f^\ast(v) = \sup_{x} \langle v, x \rangle - f(x)$. As for any $h \in \mathcal{H}$, $-h \in \mathcal{H}$, we can redefine $h \rightarrow -h$ and finally obtain the result.
Finally from Proposition~\ref{th:1:borwein} it follows that strong duality holds and the unique minimizer of the primal problem is given by
$$p_{i} = \left(\frac{d}{dv} f^\ast\right) \left(\Psi(x_i,z_i;\theta)(\hat{h}) + \hat{\mu}\right), \quad i=1,\ldots,n,$$ where $\hat{h}$, $\hat{\mu}$ are any solutions of the dual problem.
\end{proof}
\subsection{Asymptotic Properties of FGEL}
\subsubsection{Proof of Theorem~\ref{th:1:consistency} (Consistency)}
For the proof of Lemma \ref{lemma:1:first} we will need the following result whose proof closely follows a similar result for vector-valued moment restrictions of \citet{owen90} and \citet{kitamura2004} (Lemma~D.2):
\begin{lemma}\label{lemma:1:d2}
Let X be a random variable taking values in $\mathcal{X} \subseteq \mathbb{R}^r$. For a functional $\Psi: \mathcal{X} \times \Theta \times \mathcal{H} \rightarrow \mathbb{R}$ with $E\left[ \left(\sup _{\theta \in \Theta}\|\Psi(X; \theta)\|_{\mathcal{H}^\ast}\right)^{m}\right]<\infty$, it follows that $\max _{1 \leq j \leq n} \sup _{\theta \in \Theta}\left\|\Psi\left(x_{j}; \theta\right)\right\|_{\mathcal{H}^\ast}=o\left(n^{1 / m}\right)$ with probability 1.
\end{lemma}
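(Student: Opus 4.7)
The plan is to reduce the functional-valued statement to a one-dimensional statement about iid nonnegative random variables and apply a standard Borel--Cantelli argument. Setting $Y_i := \sup_{\theta \in \Theta} \|\Psi(x_i;\theta)\|_{\mathcal{H}^\ast}$ yields an iid sequence of nonnegative random variables with $E[Y_1^m] < \infty$ by hypothesis. Measurability of $Y_i$ (a supremum over a potentially uncountable $\Theta$) is tacitly ensured by the surrounding compactness and continuity assumptions accompanying the lemma's intended use. The claim then reduces to the scalar statement $M_n := \max_{1 \le j \le n} Y_j = o(n^{1/m})$ a.s.

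First I would fix $\epsilon > 0$ and verify the summability $\sum_{n=1}^\infty P(Y_n > \epsilon n^{1/m}) < \infty$. This follows from the standard comparison $E[Y_1^m] = \int_0^\infty P(Y_1^m > t)\,dt \geq \epsilon^m \sum_{n=1}^\infty P(Y_1 > \epsilon n^{1/m})$, obtained by bounding the integrand on unit intervals using monotonicity of the tail and the identity of distributions. Applying Borel--Cantelli to the independent events $\{Y_n > \epsilon n^{1/m}\}$ yields $P(Y_n > \epsilon n^{1/m} \text{ i.o.}) = 0$, so almost surely there is a finite (random) threshold $N$ beyond which $Y_n \leq \epsilon n^{1/m}$.

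Combining this with the a.s.\ finiteness of $M_N = \max_{j \leq N} Y_j$, for every $n > N$ one has $M_n \leq \max\{M_N, \epsilon n^{1/m}\}$, so $\limsup_n M_n / n^{1/m} \leq \epsilon$ almost surely. Letting $\epsilon$ run along a countable sequence decreasing to zero, and intersecting the resulting probability-one events, gives $M_n / n^{1/m} \to 0$ a.s., which is the claim.

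This argument is essentially the functional-space analogue of Owen's classical moment bound and of Lemma~D.2 in \citet{kitamura2004}; no genuinely new ideas are required, since the only thing passing from the scalar case to the present setting demands is the observation that the dual-norm supremum over $\Theta$ still produces an iid real-valued sequence to which the scalar Borel--Cantelli argument applies. The only possible subtlety is the measurability of the sup over $\Theta$, but this is a standard technicality rather than a substantive obstacle under the continuity and compactness conditions in force.
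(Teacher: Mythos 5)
Your proof is correct and follows essentially the same route as the paper's: reduce to the iid scalar sequence $Y_i = \sup_{\theta\in\Theta}\|\Psi(x_i;\theta)\|_{\mathcal{H}^\ast}$, use the $m$-th moment bound to get summability of the tail probabilities, and apply Borel--Cantelli with an $\epsilon$-argument to conclude $\max_{j\le n} Y_j = o(n^{1/m})$ a.s. If anything, your handling of the passage from the individual events to the running maximum (via $M_n \le \max\{M_N, \epsilon n^{1/m}\}$) is slightly more explicit than the paper's, but the substance is identical.
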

\begin{proof}
For ease of notation define the random variable $Y := \sup _{\theta \in \Theta}\left\|\Psi\left(X; \theta\right) \right\|_{\mathcal{H}^\ast}$ and let for $i \in \mathbb{N}$, $Y_i$ denote independent copies of $Y$. Then as $E[Y^m] < \infty$, we must have that $\sum_{i=1}^\infty P(Y_i^m > n) < \infty$ or equivalently $\sum_{i=1}^\infty P(Y_i > n^{1/m}) < \infty$.
Hence by the Borel-Cantelli Lemma the event $Y_i > n^{1/m}$ happens only finitely often with probability $1$ which likewise implies $Z_n := \max_{1\leq i \leq n} Y_i > n^{1/m}$ happens only finitely often with probability $1$. By the same argument the event $Z_n > \epsilon n^{1/m}$ happens only finitely often for any $\epsilon > 0$ and thus
$$
\limsup Z_n / n^{1/m} \leq \epsilon
$$
with probability $1$ and thus $Z_n = o(n^{1/m})$ with probability $1$.
\end{proof}
The following Lemma shows that if we constrain the space of the dual parameter to a ball of radius $\zeta$ with $1/\nu < \zeta < 1/2 - \xi$, the largest value the empirical moment functional evaluated on the dual parameter can take converges to zero in probability. Furthermore any such ball is contained in the empirical constraint set $\widehat{\mathcal{H}}(\theta) = \{h \in \mathcal{H}: \   \psi(x_i;\theta)^{T}h(z_i) \in \operatorname{dom}(\phi), \  1\leq i \leq n \}$.
\begin{lemma}\label{lemma:1:first}
Let the assumptions of Theorem~\ref{th:1:consistency} be satisfied, then for any $\zeta$ with $1/\nu < \zeta < 1/2$ define $\mathcal{H}_n = \{h \in \mathcal{H}: \| h \|_{\mathcal{H}} \leq n^{-\zeta} \}$. Then $\sup_{\theta \in \Theta, h \in \mathcal{H}_n, 1\leq i \leq n} |\Psi(x_i,z_i;\theta)(h)| \overset{p}{\rightarrow} 0$ and w.p.a.1, $\mathcal{H}_n \subseteq \widehat{\mathcal{H}}(\theta)$ for all $\theta \in \Theta$.
\end{lemma}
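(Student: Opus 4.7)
The natural approach is to bound the inner product $|\Psi(x_i,z_i;\theta)(h)|$ via the duality pairing $|\Psi(x,z;\theta)(h)| \leq \|\Psi(x,z;\theta)\|_{\mathcal{H}^\ast} \|h\|_{\mathcal{H}}$, decouple the sup over $h$ from the sup over $(\theta,i)$, and then control the tail of the discrete maximum using the just-proved Lemma~\ref{lemma:1:d2}.

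Concretely, the first step is to observe that uniformly in $h \in \mathcal{H}_n$ we have $\|h\|_{\mathcal{H}} \leq n^{-\zeta}$, so
\begin{align*}
\sup_{\theta \in \Theta,\, h \in \mathcal{H}_n,\, 1 \leq i \leq n} |\Psi(x_i,z_i;\theta)(h)|
\;\leq\; n^{-\zeta}\, \max_{1 \leq i \leq n} \sup_{\theta \in \Theta} \|\Psi(x_i,z_i;\theta)\|_{\mathcal{H}^\ast}.
\end{align*}
Assumption~(d) of Theorem~\ref{th:1:consistency} provides $E\!\left[\left(\sup_{\theta \in \Theta} \|\Psi(X,Z;\theta)\|_{\mathcal{H}^\ast}\right)^\nu\right] < \infty$ with $\nu > 2$, which is exactly the hypothesis required to apply Lemma~\ref{lemma:1:d2} with $m=\nu$. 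That lemma yields $\max_{1 \leq i \leq n} \sup_{\theta \in \Theta}\|\Psi(x_i,z_i;\theta)\|_{\mathcal{H}^\ast} = o(n^{1/\nu})$ almost surely. Combining this with the displayed inequality gives the overall bound $o(n^{1/\nu - \zeta})$, which tends to zero almost surely (and hence in probability) since the hypothesis $\zeta > 1/\nu$ makes the exponent negative. This proves the first claim.

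The second claim follows immediately from the first together with the assumption (part of Definition~\ref{def:1:gel-objective}) that $\operatorname{dom}(\phi)$ is an open interval containing zero. Pick any $\varepsilon > 0$ such that $(-\varepsilon, \varepsilon) \subseteq \operatorname{dom}(\phi)$. By the first part, the event
\begin{align*}
A_n := \Big\{ \sup_{\theta \in \Theta,\, h \in \mathcal{H}_n,\, 1 \leq i \leq n} |\Psi(x_i,z_i;\theta)(h)| < \varepsilon \Big\}
\end{align*}
has probability tending to one. On $A_n$, every $h \in \mathcal{H}_n$ satisfies $\Psi(x_i,z_i;\theta)(h) \in (-\varepsilon, \varepsilon) \subseteq \operatorname{dom}(\phi)$ for all $i=1,\dots,n$ and all $\theta \in \Theta$, which is precisely the defining condition for $h$ to belong to $\widehat{\mathcal{H}}(\theta)$. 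Hence $\mathcal{H}_n \subseteq \widehat{\mathcal{H}}(\theta)$ for every $\theta$ w.p.a.\,1.

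There is no real obstacle in this argument; the role of the range $1/\nu < \zeta < 1/2$ is that $\zeta > 1/\nu$ forces the moment bound to dominate the shrinking radius $n^{-\zeta}$, while the upper bound $\zeta < 1/2$ is not needed for this lemma itself but will be used later (for example, when combining $\|h\|_{\mathcal{H}} \leq n^{-\zeta}$ with $n^{-1/2}$-rate fluctuations of empirical averages in the consistency proof).
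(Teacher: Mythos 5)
Your proof is correct and follows essentially the same route as the paper's: bound $|\Psi(x_i,z_i;\theta)(h)|$ by $\|\Psi(x_i,z_i;\theta)\|_{\mathcal{H}^\ast}\|h\|_{\mathcal{H}} \leq n^{-\zeta}\max_i\sup_\theta\|\Psi(x_i,z_i;\theta)\|_{\mathcal{H}^\ast}$, invoke Lemma~\ref{lemma:1:d2} with Assumption~d) to get a rate $o(n^{1/\nu-\zeta})$ that vanishes since $\zeta>1/\nu$, and then use that $\operatorname{dom}(\phi)$ is an open interval containing zero to conclude $\mathcal{H}_n\subseteq\widehat{\mathcal{H}}(\theta)$ w.p.a.1. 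Your additional remarks (spelling out the $\varepsilon$-event for the second claim and noting that $\zeta<1/2$ is only needed later) are accurate refinements of the same argument.
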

\begin{proof}
Using the Cauchy-Schwarz inequality together with Lemma~\ref{lemma:1:d2} we have
\begin{align*}
    &\sup_{\theta \in \Theta, h \in \mathcal{H}_n, 1\leq i \leq n} |\Psi(x_i,z_i;\theta)(h)| \\
    &\leq \sup_{\theta \in \Theta, h \in \mathcal{H}_n, 1\leq i \leq n} \left( \| h \|_\mathcal{H} \cdot \| \Psi(x_i,z_i;\theta) \|_{\mathcal{H}^\ast}  \right) \\
    & \leq n^{-\zeta}  \sup_{\theta \in \Theta, 1\leq i \leq n}  \| \Psi(x_i,z_i;\theta) \|_{\mathcal{H}^\ast} \\
    & = O_p(n^{-\zeta + 1/\nu}) \overset{p}{\rightarrow} 0. 
\end{align*}
As $V = \operatorname{dom}(\phi)$ is an open interval containing zero it follows that $\Psi(x_i,z_i;\theta)(h) \in \operatorname{dom}(\phi)$ w.p.a.1 for all $\theta \in \Theta$ and $h \in \mathcal{H}_n$.
\end{proof}

\begin{lemma} \label{lemma:1:clt}
    Let the assumptions of Theorem~\ref{th:1:consistency} be satisfied and assume $\bar{\theta} \overset{p}{\rightarrow} \theta_0$ with $E[\|\psi(X;\bar{\theta}) - \psi(X;\theta_0) \|_\infty] = O_p(n^{-\rho})$ with $0 < \rho < 1/2$. Define the operators
    \begin{align*}
        \Omega(\theta) &= E[\Psi(X,Z;{\theta}) \otimes \Psi(X,Z;{\theta})]
        \\ 
        \widehat{\Omega}(\theta) &= E_{\ppn}[\Psi(X,Z;{\theta}) \otimes \Psi(X,Z;{\theta})].
    \end{align*}
    Then we have $\| \widehat{\Omega}(\bar{\theta}) - \Omega(\theta_0) \| = O_p(n^{-\rho})$.
\end{lemma}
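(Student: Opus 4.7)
The plan is to apply the triangle inequality, splitting
\begin{equation*}
\|\widehat{\Omega}(\bar\theta) - \Omega(\theta_0)\| \le \|\widehat{\Omega}(\bar\theta) - \widehat{\Omega}(\theta_0)\| + \|\widehat{\Omega}(\theta_0) - \Omega(\theta_0)\|,
\end{equation*}
and controlling the two pieces separately: a purely stochastic fluctuation at the fixed parameter $\theta_0$, and a drift induced by moving from $\theta_0$ to $\bar\theta$. Both pieces will be handled via the variational characterization of the operator norm, $\|A\|_{\mathrm{op}} = \sup_{h,h'\in\mathcal{H}_1}|\langle h, A h'\rangle|$, which for covariance operators reduces to suprema over products $\Psi(\cdot;\theta)(h)\,\Psi(\cdot;\theta)(h')$.

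For the fluctuation term, I would write
\begin{equation*}
\|\widehat{\Omega}(\theta_0) - \Omega(\theta_0)\| = \sup_{h,h'\in\mathcal{H}_1} \bigl|(E_{\hat P_n} - E)\bigl[\Psi(X,Z;\theta_0)(h)\,\Psi(X,Z;\theta_0)(h')\bigr]\bigr|
\end{equation*}
and argue that the product class $\{\Psi(\cdot;\theta_0)(h)\,\Psi(\cdot;\theta_0)(h') : h,h'\in\mathcal{H}_1\}$ is $P_0$-Donsker. The single-factor class is Donsker by assumption~(h); products of uniformly bounded Donsker classes are Donsker by Lemma~\ref{lemma:1:donsker}, and the envelope condition~(d) with $\nu>2$ provides the needed integrability, with a truncation at level $n^{1/\nu}$ handling the unboundedness (the truncated tail contributing $o_p(n^{-1/2})$ by an argument analogous to Lemma~\ref{lemma:1:d2}). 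Applying Lemma~\ref{lemma:1:bennett} then yields $\|\widehat{\Omega}(\theta_0) - \Omega(\theta_0)\| = O_p(n^{-1/2})$.

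For the drift term, I would use the rank-one factorisation
\begin{equation*}
\Psi(\bar\theta)\otimes\Psi(\bar\theta) - \Psi(\theta_0)\otimes\Psi(\theta_0) = (\Psi(\bar\theta)-\Psi(\theta_0))\otimes\Psi(\bar\theta) + \Psi(\theta_0)\otimes(\Psi(\bar\theta)-\Psi(\theta_0)),
\end{equation*}
apply $\|f\otimes g\|_{\mathrm{op}} \le \|f\|\,\|g\|$ pointwise and average via Cauchy--Schwarz in $E_{\hat P_n}$. Since $\Psi(X,Z;\theta)(h) = \psi(X;\theta)^{T} h(Z)$, the Hölder-type inequality $|(\psi(X;\bar\theta)-\psi(X;\theta_0))^{T} h(Z)| \le \|\psi(X;\bar\theta)-\psi(X;\theta_0)\|_\infty\,\sum_j|h_j(Z)|$ converts differences of $\Psi$ in dual norm into differences of $\psi$ in the $\|\cdot\|_\infty$ norm, using that $\sup_{h\in\mathcal{H}_1}\|h(Z)\|$ is bounded (as it must be for $\mathcal{H}_1$ to be Donsker with a finite envelope). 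Combined with the hypothesis $E[\|\psi(X;\bar\theta)-\psi(X;\theta_0)\|_\infty] = O_p(n^{-\rho})$ and the moment bound~(d) to control $\|\Psi(\cdot;\bar\theta)\|_{\mathcal{H}^\ast}$ on average, this gives $\|\widehat{\Omega}(\bar\theta)-\widehat{\Omega}(\theta_0)\| = O_p(n^{-\rho})$.

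Combining the two bounds and using $\rho < 1/2$ yields the claimed rate $\|\widehat{\Omega}(\bar\theta)-\Omega(\theta_0)\| = O_p(n^{-\rho})$. The main obstacle I anticipate is the bookkeeping around the Donsker property of the \emph{product} class under only the $L^\nu$-envelope condition rather than uniform boundedness; the truncation argument is standard but requires carefully showing the tail contribution is negligible at the $n^{-1/2}$ scale. A secondary technicality is making precise the pointwise bound on $\|h(Z)\|$ over $\mathcal{H}_1$; this is immediate for RKHS with bounded kernel (the main case of interest) but must be formulated as part of the abstract hypotheses on $\mathcal{H}$ in the general statement.
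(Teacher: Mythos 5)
Your decomposition is the mirror image of the paper's, and that is where the trouble lies. The paper splits $\|\widehat{\Omega}(\bar\theta)-\Omega(\theta_0)\| \le \|\widehat{\Omega}(\bar\theta)-\Omega(\bar\theta)\| + \|\Omega(\bar\theta)-\Omega(\theta_0)\|$: the first term is an empirical fluctuation handled \emph{uniformly over} $\theta\in\Theta$ (the product class built from $\{\Psi(\cdot;\theta)(h):\theta\in\Theta,\,h\in\mathcal{H}_1\}$ is Donsker and uniformly bounded, so Lemma~\ref{lemma:1:bennett} gives $O_p(n^{-1/2})$ even at the random $\bar\theta$), and the second term is a purely \emph{population} quantity, which is exactly what the hypothesis $E[\|\psi(X;\bar\theta)-\psi(X;\theta_0)\|_\infty]=O_p(n^{-\rho})$ controls. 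Your split instead puts the drift at the \emph{empirical} level: after your rank-one factorisation and Cauchy--Schwarz you are left bounding $E_{\hat P_n}[\|\psi(X;\bar\theta)-\psi(X;\theta_0)\|_\infty]$ (times a uniform bound on $\|\Psi(\cdot;\bar\theta)\|_{\mathcal{H}^\ast}$ and $\sup_{h\in\mathcal{H}_1}\|h(Z)\|$), but the hypothesis only bounds the \emph{population} expectation $E[\|\psi(X;\bar\theta)-\psi(X;\theta_0)\|_\infty]$. Since $\bar\theta$ is itself a function of the sample, you cannot simply replace the empirical average by the population one; you would need an additional stochastic-equicontinuity or uniform-LLN step for the class $\{x\mapsto\|\psi(x;\theta)-\psi(x;\theta_0)\|_\infty:\theta\in\Theta\}$ (or, equivalently, you would add and subtract $\Omega(\bar\theta)$ and $\Omega(\theta_0)$, which collapses your argument back into the paper's decomposition). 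As written, this bridge is missing, and it is the one nontrivial point of the lemma.

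Two smaller remarks. First, your worry about Donskerness of the product class under only the $L^\nu$ envelope (d), and the proposed truncation at level $n^{1/\nu}$, is unnecessary: assumptions (b) and (c) give continuity of $\Psi$ on a compact domain, hence uniform boundedness, so Lemma~\ref{lemma:1:donsker} applies directly --- this is exactly how the paper argues, and it also supplies the constants $C_\psi$, $C_h$ you need for the drift term. Second, your fluctuation bound $\|\widehat{\Omega}(\theta_0)-\Omega(\theta_0)\|=O_p(n^{-1/2})$ at the fixed $\theta_0$ is fine, but note that if you repair the drift term by reverting to the paper's split, you will instead need the fluctuation bound at $\bar\theta$, which is why the Donsker class must be indexed by all of $\Theta$ and not just $\theta_0$.
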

\begin{proof}
    The proof follows the proof of Lemma~20 of \citet{bennett2020variational}.
    Using the triangle inequality we have,
    \begin{align*}
        \|\widehat{\Omega}(\bar{\theta}) - \Omega_0 \| \leq \| \widehat{\Omega}(\bar{\theta}) - \Omega(\bar{\theta}) \| + \| \Omega(\bar{\theta}) - \Omega(\theta_0) \|.
    \end{align*}
    For the first term we have 
    \begin{align*}
        \| \widehat{\Omega}(\bar{\theta}) - \Omega(\bar{\theta}) \| &= \sup_{h,h' \in \mathcal{H}_1} E_{\ppn}[h(Z)^T \psi(X;\bar{\theta}) \psi(X;\bar{\theta})^T h'(Z) ] - E[h(Z)^T \psi(X;\bar{\theta}) \psi(X;\bar{\theta})^T h'(Z)] \\
        &= \sup_{g \in \mathcal{G}^2} E_{\ppn}[g(X,Z)] - E[g(X,Z)],
    \end{align*}
    where 
    \begin{align*}
        \mathcal{G} &= \{g: g(x,z) = h(z)^T \psi(x;\bar{\theta}), \ h \in \mathcal{H}_1\}, \\
        \mathcal{G}^2 &= \{g: g(x,z) = g_1(x,z) g_2(x,z), \ g_1, g_2 \in \mathcal{G} \}.
    \end{align*}
    Now by Assumption~h), $\mathcal{G}$ is $P_0$-Donsker and uniformly bounded by continuity of $\Psi$ and compactness of its domain. Therefore $\mathcal{G}^2$ is $P_0$-Donsker by Lemma~\ref{lemma:1:donsker} which lets us employ Lemma~\ref{lemma:1:bennett} to conclude that $ \| \widehat{\Omega}(\bar{\theta}) - \Omega(\bar{\theta}) \| = O_p(n^{-1/2})$.

    For the second term we have
    \begin{align*}
        \| \Omega(\bar{\theta}) - \Omega(\theta_0) \| =& \sup_{h,h' \in \mathcal{H}} E[h(Z)^T \left(\psi(X;\bar{\theta}) \psi(X;\bar{\theta})^T - \psi(X;\theta_0) \psi(X;\theta_0)^T  \right) h'(Z)] \\
        \leq& \sum_{i,j=1}^m \sup_{h,h' \in \mathcal{H}} E[h_i(Z) h'_j(Z) \psi_i(X;\bar{\theta}) \left( \psi_j(X;\bar{\theta}) - \psi_j(X;\theta_0) \right)] \\ 
        & + \sum_{i,j=1}^m \sup_{h,h' \in \mathcal{H}} E[h_i(Z) h'_j(Z) \psi_j(X;\theta_0) \left( \psi_i(X;\bar{\theta}) - \psi_i(X;\theta_0) \right)]  \\
        \leq & 2 m^2 C_h^2 C_\psi E[\|\psi(X;\bar{\theta}) - \psi(X;\theta_0) \|_\infty] \\
        \leq & O_p(n^{-\rho}).
    \end{align*}
    Putting things together we get $\|\widehat{\Omega}(\bar{\theta}) - \Omega_0 \| = O_p(n^{-\rho})$. 
\end{proof}

\begin{lemma}\label{lemma:1:non-singular}
    Let the assumptions of Theorem~\ref{th:1:consistency} be satisfied and assume $\bar{\theta} \overset{p}{\rightarrow} \theta_0$ with $E[\|\psi(X;\bar{\theta}) - \psi(X;\theta_0) \|_\infty] = O_p(n^{-\rho})$.  
    Then for $\lambda_n = O_p(n^{-\xi})$ with $0< \xi< \rho$ and any $\Dot{h} \in \operatorname{conv}(\{0, \bar{h} \})$ with $\bar{h} \in \mathcal{H}_n$, as defined in Lemma~\ref{lemma:1:first}, the operator
    \begin{align*}
        \widehat{\Omega}_{\lambda_n}(\Dot{h}, \bar{\theta}) =  - \frac{1}{n}\sum_{i=1}^n \phi_2(\bar{\Psi}_i(\Dot{h})) (\bar{\Psi}_i \otimes \bar{\Psi}_i) + \lambda_n I \otimes I
    \end{align*}
    is non-singular w.p.a.1 with largest eigenvalue $C < \infty$.
\end{lemma}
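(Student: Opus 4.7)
The strategy is to reduce the operator $\widehat{\Omega}_{\lambda_n}(\dot{h},\bar\theta)$ to a small perturbation of $\widehat{\Omega}(\bar\theta) + \lambda_n I\otimes I$ and then invoke Lemma~\ref{lemma:1:clt} to compare the latter to $\Omega_0 + \lambda_n I\otimes I$.

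First, I would use Lemma~\ref{lemma:1:first} to control the argument of $\phi_2$. Since $\dot h\in\operatorname{conv}(\{0,\bar h\})$ with $\bar h\in\mathcal{H}_n$, we have $\|\dot h\|_{\mathcal{H}}\le\|\bar h\|_{\mathcal{H}}\le n^{-\zeta}$, so
\begin{equation*}
\max_{1\le i\le n}|\bar\Psi_i(\dot h)|\;\le\;\|\dot h\|_{\mathcal{H}}\cdot\max_i\|\bar\Psi_i\|_{\mathcal{H}^\ast}\;\overset{p}{\to}\;0
\end{equation*}
exactly as in Lemma~\ref{lemma:1:first}. By continuity of $\phi_2$ at $0$ (Assumption~g) together with $\phi_2(0)=-1$, this forces $\max_i|\phi_2(\bar\Psi_i(\dot h))-(-1)|\overset{p}{\to}0$. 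In particular, w.p.a.1 one has $-\phi_2(\bar\Psi_i(\dot h))\in[1/2,2]$ uniformly in $i$.

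For the lower bound (non-singularity), note that each rank-one operator $\bar\Psi_i\otimes\bar\Psi_i$ is positive semi-definite, so the sign bound $-\phi_2(\bar\Psi_i(\dot h))\ge 1/2$ yields the operator inequality
\begin{equation*}
-\frac{1}{n}\sum_{i=1}^n\phi_2(\bar\Psi_i(\dot h))\,(\bar\Psi_i\otimes\bar\Psi_i)\;\succeq\;\tfrac{1}{2}\,\widehat\Omega(\bar\theta)\;\succeq\;0,
\end{equation*}
hence $\widehat{\Omega}_{\lambda_n}(\dot h,\bar\theta)\succeq \lambda_n I\otimes I$ w.p.a.1. Since $\lambda_n>0$, this gives invertibility together with a lower bound $\lambda_n$ on the smallest spectral value. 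For the upper bound I would write
\begin{equation*}
\|\widehat{\Omega}_{\lambda_n}(\dot h,\bar\theta)\|\;\le\;\max_i|-\phi_2(\bar\Psi_i(\dot h))|\cdot\|\widehat\Omega(\bar\theta)\|+\lambda_n\;\le\;2\,\|\widehat\Omega(\bar\theta)\|+\lambda_n
\end{equation*}
w.p.a.1, using positive semi-definiteness of the rank-one summands together with the triangle inequality in operator norm. Lemma~\ref{lemma:1:clt} applied to $\bar\theta\overset{p}{\to}\theta_0$ gives $\|\widehat\Omega(\bar\theta)-\Omega_0\|\overset{p}{\to}0$, and Assumption~d) ($\nu>2$) bounds $\|\Omega_0\|\le E[\sup_\theta\|\Psi(X,Z;\theta)\|_{\mathcal{H}^\ast}^2]<\infty$. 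Combining with $\lambda_n\overset{p}{\to}0$ produces the uniform bound $C:=2\|\Omega_0\|+1$ (say), giving the claim.

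The argument is almost routine; the only real care is needed in two places. First, one must observe that because the operator is the sum of a compact PSD piece and $\lambda_n I\otimes I$, its spectrum lies in $[\lambda_n,\,\|A\|+\lambda_n]$, so the phrase ``largest eigenvalue'' is well-defined and coincides with the operator norm. Second, the pointwise inequality $-\phi_2(\bar\Psi_i(\dot h))\ge 1/2$ needs to hold simultaneously for all $i=1,\dots,n$ and all $\dot h$ in the convex hull; this is where Lemma~\ref{lemma:1:first} is essential, since it delivers the bound uniformly over the entire ball $\mathcal{H}_n$ (and hence over its convex hull with $0$) and over all sample points at once. No other obstacle arises.
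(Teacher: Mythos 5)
Your proposal does prove the sentence as literally stated, and the upper bound is sound (though heavier than necessary: the paper gets it directly from $\langle h,\widehat{\Omega}_{\lambda_n}(\dot h,\bar\theta)h\rangle_{\mathcal H}\le \frac{1}{n}\sum_i\|\bar\Psi_i\|_{\mathcal{H}^\ast}^2\|h\|_{\mathcal H}^2$, using only that $\Psi$ is continuous on a compact domain, with no appeal to Lemma~\ref{lemma:1:clt}). The problem is the non-singularity half, where your route misses the substantive content of the lemma. You obtain invertibility solely from the ridge term, i.e.\ $\widehat{\Omega}_{\lambda_n}(\dot h,\bar\theta)\succeq\lambda_n I\otimes I$, so your lower spectral bound is $\lambda_n=O_p(n^{-\xi})\to 0$. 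The paper instead argues that, after the uniform control $\max_i|\phi_2(\bar\Psi_i(\dot h))+1|\overset{p}{\rightarrow}0$, the weighted empirical covariance coincides asymptotically with $\widehat{\Omega}(\bar\theta)$, which by Lemma~\ref{lemma:1:clt} converges in operator norm at rate $O_p(n^{-\rho})$ to $\Omega_0$, non-singular by Assumption~e); together with $\xi<\rho$ this makes the smallest eigenvalue of $\widehat{\Omega}_{\lambda_n}(\dot h,\bar\theta)$ bounded away from zero \emph{uniformly in $n$} w.p.a.1.

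That stronger conclusion is exactly what is used downstream: the proof of Lemma~\ref{lemma:1:second} invokes this lemma to get a smallest eigenvalue $C>0$ bounded away from zero, which is what turns $\tfrac12 C\|\tilde h\|_{\mathcal H}\le\|\bar\Psi\|_{\mathcal{H}^\ast}$ into $\|\tilde h\|_{\mathcal H}=O_p(n^{-1/2})$ and $\widehat{G}_{\lambda_n}(\bar\theta,\bar h)\le\phi(0)+O_p(n^{-1})$; with your bound the constant degrades like $n^{-\xi}$, yielding only $\|\tilde h\|_{\mathcal H}=O_p(n^{\xi-1/2})$ and breaking the subsequent rate arguments (Lemma~\ref{lemma:1:third}, Theorem~\ref{th:1:consistency}). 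A tell-tale sign is that your lower-bound step never uses Assumption~e), the hypothesis $E[\|\psi(X;\bar\theta)-\psi(X;\theta_0)\|_\infty]=O_p(n^{-\rho})$, or the condition $\xi<\rho$ — these are present precisely to run the convergence-to-$\Omega_0$ argument. So the fix is to replace the ``$\succeq\lambda_n I$'' shortcut by the comparison of the weighted covariance with $\Omega_0$ via Lemma~\ref{lemma:1:clt}, keeping your (correct) uniform control of the $\phi_2$ weights as the first step.
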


\begin{proof}
    Define $\widehat{\Omega}(\bar{\theta}) = - \frac{1}{n} \sum_{i=1}^n \phi_2(\Psi(x_i, z_i;\bar{\theta})(\Dot{h})) \left( \Psi(x_i,z_i;\bar{\theta}) \otimes \Psi(x_i,z_i;\bar{\theta}) \right)$.
    As $\Dot{h}$ lies in between $0$ and $\bar{h}$ and $\bar{h} \in \mathcal{H}_n$, we have $\Dot{h} \in \mathcal{H}_n$ and thus by Lemma~\ref{lemma:1:first} we have $\sup_{\theta \in \Theta} \max_{1\leq i \leq n} |\phi_2(\Psi_i(\theta)(\Dot{h})) + 1| \overset{p}{\rightarrow}0$ and we have $\widehat{\Omega}(\bar{\theta}) = \frac{1}{n} \sum_{i=1}^n  \Psi(x_i,z_i;\bar{\theta}) \otimes \Psi(x_i,z_i;\bar{\theta})$ w.p.a.1.
    By Lemma~\ref{lemma:1:clt}, $\widehat{\Omega}(\bar{\theta})$ converges to the non-singular operator  $\Omega_0$ at rate $O_p(n^{-\rho})$ and as $\lambda_n = O_p(n^{-\xi})$ with $0< \xi< \rho$, we have that $\widehat{\Omega}_{\lambda_n}(\Dot{h}, \bar{\theta})$ is non-singular w.p.a.1.

    To bound the largest eigenvalue of $\widehat{\Omega}_{\lambda_n}: = \widehat{\Omega}_{\lambda_n}(\Dot{h}, \bar{\theta})$ consider any $h \in \mathcal{H}$ and 
    \begin{align*}
        \langle h,  \widehat{\Omega}_{\lambda_n} h \rangle_\mathcal{H} = \frac{1}{n} \sum_{i=1}^n \|\Psi(x_i,z_i;\bar{\theta})(h) \|^2 \leq \frac{1}{n} \sum_{i=1}^n \|\Psi(x_i,z_i;\bar{\theta}) \|_{\mathcal{H}^\ast}^2 \| h\|_\mathcal{H}^2 = C_\psi^2 \| h\|_\mathcal{H}^2,
    \end{align*}
    where we used that by assumption for any $\theta \in \Theta$ and $h \in \mathcal{H}_1$, $(x,z) \mapsto \psi(x;\theta)^T h(z)$ is a continuous function on a compact domain and therefore bounded. Thus the largest eigenvalue of $\widehat{\Omega}_{\lambda_n}$ must be bounded by some constant $C < \infty$. 
\end{proof}

The following lemma generalizes Lemma~A2 of \citet{newey04} to our regularized continuum formulation. It is different from a similar proof of \citet{chausse2012generalized} (Lemma~2), as our regularization procedure differs from theirs.

\begin{lemma}\label{lemma:1:second}
  Let the assumptions of Theorem~\ref{th:1:consistency} be satisfied. Additionally assume there is a consistent parameter estimate $\bar{\theta} \in \Theta$, $\bar{\theta} \overset{p}{\rightarrow} \theta_{0}$, with $\|E_{\hat{P}_n}[\Psi(X,Z;\bar{\theta})] \|_{\mathcal{H}^\ast} = O_{p}\left(n^{-1 / 2}\right)$ as well as $E[\|\psi(X;\bar{\theta}) - \psi(X;\theta_0) \|_\infty] = O_p(n^{-1/2})$. Further let $\lambda_n = O_p(n^{-\xi})$ where $0 < \xi < 1/2 - 1/\nu$. 
  Then $\bar{h}=$ $\argmax_{h \in \widehat{\mathcal{H}}(\bar{\theta})} \widehat{G}_{\lambda_n}(\bar{\theta}, h)$ exists w.p.a.1, $\| \bar{h}\|_\mathcal{H}=O_{p}\left(n^{-1/2}\right)$, and $\widehat{G}_{\lambda_n}(\bar{\theta},\bar{h}) \leq \phi(0) + O_p\left(n^{-1}\right)$.
\end{lemma}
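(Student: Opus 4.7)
My plan is to adapt the classical Newey--Smith argument (Lemma~A2 in \citet{newey04}) to the regularized functional setting, using localization of $\bar{h}$ to a small Hilbert ball where a second-order Taylor expansion of $\phi$ is controlled via Lemma~\ref{lemma:1:first}. Existence of $\bar{h}$ will follow from strict concavity and coercivity of $G_{\lambda_n}(\bar{\theta},\cdot)$: concavity of $\phi$ gives the pointwise bound $\phi(v)\leq \phi(0)-v$, so setting $\hat{\Psi}:=E_{\hat{P}_n}[\Psi(X,Z;\bar{\theta})]$ and applying duality,
\begin{align*}
G_{\lambda_n}(\bar{\theta},h)\leq \phi(0)+\|\hat{\Psi}\|_{\mathcal{H}^\ast}\|h\|_{\mathcal{H}}-\tfrac{\lambda_n}{2}\|h\|_\mathcal{H}^2\longrightarrow -\infty\ \text{as}\ \|h\|_\mathcal{H}\to\infty,
\end{align*}
so maximizing sequences are norm-bounded. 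Weak compactness of balls in $\mathcal{H}$, weak lower semicontinuity of $\|\cdot\|_\mathcal{H}^2$, and the Donsker continuity of $h \mapsto \Psi(x_i,z_i;\bar{\theta})(h)$ then yield a unique maximizer $\bar{h}\in\widehat{\mathcal{H}}(\bar{\theta})$ w.p.a.1.

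Next, I would extract a preliminary norm bound using only first-order concavity. From $G_{\lambda_n}(\bar{\theta},\bar{h})\geq G_{\lambda_n}(\bar{\theta},0)=\phi(0)$ and the above upper bound, one obtains $\tfrac{\lambda_n}{2}\|\bar{h}\|_\mathcal{H}^2\leq \|\hat{\Psi}\|_{\mathcal{H}^\ast}\|\bar{h}\|_\mathcal{H}$, hence $\|\bar{h}\|_\mathcal{H}=O_p(\lambda_n^{-1}n^{-1/2})=O_p(n^{\xi-1/2})$. The standing hypothesis $\xi<1/2-1/\nu$ permits selecting $\zeta$ with $1/\nu<\zeta<1/2-\xi$, and Lemma~\ref{lemma:1:first} then guarantees $\bar{h}\in\mathcal{H}_n:=\{h:\|h\|_\mathcal{H}\leq n^{-\zeta}\}$ w.p.a.1 and $\max_{i}|\Psi(x_i,z_i;\bar{\theta})(\bar{h})|\overset{p}{\to}0$, opening the door to a quadratic expansion.

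To reach the $n^{-1/2}$ rate, I would invoke Taylor's formula (Proposition~\ref{prop:1:taylor}) applied to $\phi$ about zero inside $G_{\lambda_n}$: there exists $\tilde{h}\in\operatorname{conv}\{0,\bar{h}\}\subseteq \mathcal{H}_n$ with
\begin{align*}
G_{\lambda_n}(\bar{\theta},\bar{h})=\phi(0)-\hat{\Psi}(\bar{h})+\tfrac{1}{2n}\sum_{i=1}^n \phi_2(\Psi(x_i,z_i;\bar{\theta})(\tilde{h}))\,\Psi(x_i,z_i;\bar{\theta})(\bar{h})^2-\tfrac{\lambda_n}{2}\|\bar{h}\|_\mathcal{H}^2.
\end{align*}
Continuity of $\phi_2$ with $\phi_2(0)=-1$ combined with Lemma~\ref{lemma:1:first} forces $\phi_2(\Psi_i(\tilde{h}))\leq -1/2$ uniformly in $i$ w.p.a.1, while Lemma~\ref{lemma:1:clt} (with $\rho=1/2$, using $E[\|\psi(X;\bar{\theta})-\psi(X;\theta_0)\|_\infty]=O_p(n^{-1/2})$) together with non-singularity of $\Omega_0$ from assumption (e) yields $\widehat{\Omega}(\bar{\theta})\succeq (c/2)I$ w.p.a.1 for some $c>0$. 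Substituting and applying $|\hat{\Psi}(\bar{h})|\leq\|\hat{\Psi}\|_{\mathcal{H}^\ast}\|\bar{h}\|_\mathcal{H}$,
\begin{align*}
G_{\lambda_n}(\bar{\theta},\bar{h})\leq \phi(0)+\|\hat{\Psi}\|_{\mathcal{H}^\ast}\|\bar{h}\|_\mathcal{H}-\tfrac{c}{8}\|\bar{h}\|_\mathcal{H}^2.
\end{align*}
Combining with $G_{\lambda_n}(\bar{\theta},\bar{h})\geq\phi(0)$ gives $\|\bar{h}\|_\mathcal{H}\leq 8\|\hat{\Psi}\|_{\mathcal{H}^\ast}/c=O_p(n^{-1/2})$, and maximizing the quadratic upper bound over $\|\bar{h}\|_\mathcal{H}\geq 0$ yields $G_{\lambda_n}(\bar{\theta},\bar{h})\leq \phi(0)+2\|\hat{\Psi}\|_{\mathcal{H}^\ast}^2/c=\phi(0)+O_p(n^{-1})$.

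The main obstacle is upgrading the non-singularity assumption on $\Omega_0$ into a genuine spectral lower bound $\widehat{\Omega}(\bar{\theta})\succeq (c/2)I$, since in infinite dimensions injectivity of a positive self-adjoint operator does not automatically imply boundedness below. Once interpreted as such a spectral gap, perturbation stability via Lemma~\ref{lemma:1:clt} is the clean translation to the empirical covariance operator, after which the argument is a mechanical combination of Taylor remainder control, Cauchy--Schwarz, and the two-sided sandwich of $G_{\lambda_n}(\bar{\theta},\bar{h})$ by $\phi(0)$ from below and the quadratic upper envelope.
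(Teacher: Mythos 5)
Your overall strategy---second-order Taylor expansion of $\widehat{G}_{\lambda_n}(\bar{\theta},\cdot)$ about $h=0$, curvature control via $\phi_2\approx -1$ together with $\widehat{\Omega}(\bar{\theta})\to\Omega_0$ (Lemma~\ref{lemma:1:clt}), Cauchy--Schwarz, and the two-sided sandwich of $\widehat{G}_{\lambda_n}(\bar{\theta},\bar{h})$ by $\phi(0)$---is the same as the paper's, and your final computations giving $\|\bar{h}\|_\mathcal{H}=O_p(n^{-1/2})$ and $\widehat{G}_{\lambda_n}(\bar{\theta},\bar{h})\leq\phi(0)+O_p(n^{-1})$ are correct once the expansion is licensed. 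The genuine gap is in your localization step. You first take the global maximizer $\bar{h}$ over $\widehat{\mathcal{H}}(\bar{\theta})$ and bound it via $\tfrac{\lambda_n}{2}\|\bar{h}\|_\mathcal{H}^2\leq\|\hat{\Psi}\|_{\mathcal{H}^\ast}\|\bar{h}\|_\mathcal{H}$, i.e.\ $\|\bar{h}\|_\mathcal{H}\leq 2\lambda_n^{-1}\|\hat{\Psi}\|_{\mathcal{H}^\ast}$, and then conclude $\|\bar{h}\|_\mathcal{H}=O_p(n^{\xi-1/2})\leq n^{-\zeta}$ w.p.a.1. That requires a \emph{lower} bound on the regularizer, $\lambda_n^{-1}=O_p(n^{\xi})$, whereas the hypothesis only gives the upper rate $\lambda_n=O_p(n^{-\xi})$. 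If $\lambda_n$ decays faster than $n^{\zeta-1/2}$, your preliminary bound does not place $\bar{h}$ in $\mathcal{H}_n$, and then Lemma~\ref{lemma:1:first} cannot be applied along $\operatorname{conv}\{0,\bar{h}\}$: neither the uniform bound $\phi_2(\Psi_i(\tilde{h}))\leq -1/2$ nor even the twice differentiability of $t\mapsto\frac{1}{n}\sum_i\phi(\Psi(x_i,z_i;\bar{\theta})(t\bar{h}))$ on the whole segment (needed for your Lagrange-remainder expansion, since $\phi$ is only $C^2$ near zero and $\operatorname{dom}(\phi)$ may be a strict interval) is then available. In effect your argument covers only two-sided schedules $\lambda_n\asymp n^{-\xi}$, which is not what is assumed.

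The paper avoids this by reversing the order: it maximizes first over the shrinking ball $\mathcal{H}_n=\{h:\|h\|_\mathcal{H}\leq n^{-\zeta}\}$, on which Lemma~\ref{lemma:1:first} applies automatically, obtains $\|\tilde{h}\|_\mathcal{H}=O_p(n^{-1/2})$ from the curvature of $\widehat{\Omega}_{\lambda_n}(\dot{h},\bar{\theta})$ (Lemma~\ref{lemma:1:non-singular}) rather than from the $\lambda_n$-term, observes that $\tilde{h}$ is therefore an interior stationary point of $\mathcal{H}_n$, and then uses concavity of $\widehat{G}_{\lambda_n}(\bar{\theta},\cdot)$ on the convex set $\widehat{\mathcal{H}}(\bar{\theta})$ to conclude that $\tilde{h}$ is the global maximizer. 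This simultaneously delivers existence of $\bar{h}$, so the coercivity/weak-compactness argument you open with is not needed (and it has its own loose ends: weak limits of maximizing sequences could in principle exit the open set $\widehat{\mathcal{H}}(\bar{\theta})$ for divergences with restricted domain). Your proof is repaired by adopting this ball-first ordering; the curvature bound you already derive then yields the rate with no lower bound on $\lambda_n$. Finally, your reading of assumption e) as a spectral lower bound $\Omega_0\succeq cI$ is exactly what the paper itself implicitly uses, so that point is not a gap relative to the paper's own argument.
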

\begin{proof}
    Let $\bar{\Psi}_i := \Psi_i(\bar{\theta}) := \Psi(x_i,z_i;\bar{\theta})$  and $\bar{\Psi} = \frac{1}{n} \sum_{i=1}^n \bar{\Psi}_i $.
    By Lemma~\ref{lemma:1:first} and twice continuous differentiability of $\phi(v)$ in a neighborhood of zero, $\widehat{G}_{\lambda_n}(\bar{\theta}, h)$ is twice continuously differentiable on $\mathcal{H}_n  = \{h: \| h \|_{\mathcal{H}} \leq n^{-\zeta} \}$ w.p.a.1. Then $\tilde{h} = \argmax_{h \in \mathcal{H}_n} \widehat{G}_{\lambda_n}(\bar{\theta}, h)$ exists w.p.a.1.
    Using Taylor's theorem (Proposition~\ref{prop:1:taylor}) we can expand the regularized GEL objective about $h = 0$ and obtain
    \begin{align*}
        \phi_0 &= \widehat{G}_{\lambda_n}(\bar{\theta}, 0) \\
        &\leq \widehat{G}_{\lambda_n}(\bar{\theta},\tilde{h}) \\
        &= \phi_0 - \bar{\Psi}(\tilde{h}) - \frac{1}{2} \underbrace{ \left[ - \frac{1}{n}\sum_{i=1}^n \phi_2(\bar{\Psi}_i(\Dot{h})) (\bar{\Psi}_i \otimes \bar{\Psi}_i) + \lambda_n I \otimes I \right]}_{=: \widehat{\Omega}_{\lambda_{n}}(\Dot{h}, \bar{\theta})}(\tilde{h}, \tilde{h})
    \end{align*}
    for some $\Dot{h}$ on the line between $0$ and $\tilde{h}$. 
    Now, by Lemma~\ref{lemma:1:non-singular} the regularized covariance operator $\widehat{\Omega}_{\lambda_{n}}(\Dot{h}, \bar{\theta})$ is positive definite with smallest eigenvalue $C > 0$ bounded away from zero w.p.a.1.
    Using this and subtracting $\phi_0$ on both sides yields
    \begin{align*}
        0 = \bar{\Psi}(\tilde{h}) -  \frac{1}{2} \langle \tilde{h}, \widehat{\Omega}_{\lambda_n}(\bar{\theta}) \tilde{h} \rangle_{\mathcal{H}}  \leq \|\bar{\Psi} \|_{\mathcal{H}^\ast} \|\tilde{h} \|_\mathcal{H} -  \frac{1}{2}  C \| \tilde{h} \|_\mathcal{H}^2,
    \end{align*}
    where in the second line we used the Cauchy-Schwarz inequality for the first term. This means we have $\frac{1}{2} C \| \tilde{h} \|_\mathcal{H} \leq  \|\bar{\Psi} \|_{\mathcal{H}^\ast}$ w.p.a.1.
    As by assumption $\| \bar{\Psi} \| = O_p(n^{-1/2})$ it follows that $\| \tilde{h} \| = O_p(n^{-1/2})$.
    Now, as $n^{-1/2} \leq n^{-\zeta}$ we have $\tilde{h} \in \operatorname{int}(\mathcal{H}_n)$ w.p.a.1.
    Then, as $\tilde{h}$ is a maximizer contained in the interior of the domain $\mathcal{H}_n$, it must correspond to a stationary point of $\widehat{G}_{\lambda_n}$, i.e., $\left(\partial \widehat{G}_{\lambda_n}/\partial h \right) (\bar{\theta}, \tilde{h}) = 0$. 
    However, from Lemma~\ref{lemma:1:first} it follows that w.p.a.1 $\tilde{h} \in \widehat{\mathcal{H}}(\bar{\theta})$ and as $\widehat{G}_{\lambda_n}(\bar{\theta}, h)$ is concave and $\widehat{\mathcal{H}}(\bar{\theta})$ is convex we must have $\widehat{G}_{\lambda_n}(\bar{\theta}, \tilde{h}) = \sup_{h \in \widehat{\mathcal{H}}(\bar{\theta})} \widehat{G}_{\lambda_n}(\bar{\theta}, h)$, which directly implies $\bar{h} = \tilde{h}$ and proves the first conclusion.
    The second conclusion follows directly as $\bar{h} \in \operatorname{int}(\mathcal{H}_n)$ and thus $\bar{h} = O_p(n^{-\zeta})$.
    Finally as $\|\bar{\Psi} \|_{\mathcal{H}^\ast} = O_p(n^{-1/2})$ by assumption, we have $\widehat{G}_{\lambda_n}(\bar{\theta}, \bar{h}) \leq \phi_0 + \| \bar{\Psi} \|_{\mathcal{H}^\ast} \|\bar{h} \|_\mathcal{H} - \frac{1}{2} C \|\bar{h} \|_\mathcal{H}^2  = \phi_0 + O_p\left(n^{-1}\right)$, which completes the proof.
\end{proof}
The following lemma uses Lemmas~\ref{lemma:1:first} and \ref{lemma:1:second} to show that the empirical moment functional $E_{\hat{P}_n}[\Psi(X,Z;\hat{\theta})]$ evaluated at the FGEL estimator $\hat{\theta}$ converges to zero in the dual norm. The proof closely follows the proof of Lemma~A3 of \citet{newey04}. 
\begin{lemma}\label{lemma:1:third}
Let the assumptions of Theorem~\ref{th:1:consistency} be satisfied and denote $\hat{\theta}$ the corresponding FGEL estimator $\hat{\theta}= \argmin_{\theta \in \Theta}\sup_{h \in \widehat{\mathcal{H}}(\theta)} \widehat{G}_{\lambda_n}({\theta}, h)$.
Then $\| E_{\hat{P}_n} [ \Psi(X,Z;\hat{\theta}) ] \|_{\mathcal{H}^\ast} = O_p\left(n^{-1/2}\right)$.
\end{lemma}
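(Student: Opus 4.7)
The strategy is classical for GEL estimators (mirroring Lemma~A3 of \citet{newey04}): sandwich $\sup_{h}\widehat{G}_{\lambda_n}(\hat\theta,h)$ between an upper bound coming from the optimality of $\hat\theta$ and a lower bound obtained by probing the supremum with a specific direction $h^\ast$ that isolates $\|E_{\ppn}[\Psi(X,Z;\hat\theta)]\|_{\mathcal{H}^\ast}$ in a first-order term.

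\emph{Upper bound.} First I would take $\bar\theta = \theta_0$ as the preliminary estimator required by Lemma~\ref{lemma:1:second}. Then $\bar\theta\overset{p}{\to}\theta_0$ and $\psi(X;\bar\theta)=\psi(X;\theta_0)$ trivially, and the required rate $\|E_{\ppn}[\Psi(X,Z;\theta_0)]\|_{\mathcal{H}^\ast} = O_p(n^{-1/2})$ follows from the Donsker hypothesis (Assumption~h) via Lemma~\ref{lemma:1:bennett}, since the dual norm is the supremum of an empirical process over the $P_0$-Donsker class $\{\Psi(\cdot;\theta_0)(h):h\in\mathcal{H}_1\}$. Lemma~\ref{lemma:1:second} then gives $\sup_{h\in\widehat{\mathcal{H}}(\theta_0)}\widehat G_{\lambda_n}(\theta_0,h)\leq\phi_0+O_p(n^{-1})$, and because $\hat\theta$ minimizes the inner supremum over $\theta\in\Theta$,
$$\sup_{h\in\widehat{\mathcal{H}}(\hat\theta)}\widehat G_{\lambda_n}(\hat\theta,h) \leq \sup_{h\in\widehat{\mathcal{H}}(\theta_0)}\widehat G_{\lambda_n}(\theta_0,h) \leq \phi_0+O_p(n^{-1}).$$

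\emph{Lower bound via probe.} Let $\bar\Psi:=E_{\ppn}[\Psi(X,Z;\hat\theta)]\in\mathcal{H}^\ast$ with Riesz representer $\bar\Psi^\ast\in\mathcal{H}$, so that $\|\bar\Psi^\ast\|_{\mathcal{H}}=\|\bar\Psi\|_{\mathcal{H}^\ast}$ and $\bar\Psi(\bar\Psi^\ast)=\|\bar\Psi\|_{\mathcal{H}^\ast}^2$. Define the probe $h^\ast:=-t\,\bar\Psi^\ast/\|\bar\Psi^\ast\|_{\mathcal{H}}$ with $t=n^{-1/2}$. Since $\zeta<1/2$, for large $n$ we have $t\leq n^{-\zeta}$, hence $h^\ast\in\mathcal{H}_n\subseteq\widehat{\mathcal{H}}(\hat\theta)$ w.p.a.1 by Lemma~\ref{lemma:1:first}; moreover $\bar\Psi(h^\ast)=-t\|\bar\Psi\|_{\mathcal{H}^\ast}$. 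Taylor expanding $\widehat G_{\lambda_n}(\hat\theta,\cdot)$ around $h=0$ via Proposition~\ref{prop:1:taylor} with $\phi_1=-1$ yields, for some $\dot h\in\operatorname{conv}(\{0,h^\ast\})\subset\mathcal{H}_n$,
$$\widehat G_{\lambda_n}(\hat\theta,h^\ast) \;=\; \phi_0 \,+\, t\|\bar\Psi\|_{\mathcal{H}^\ast} \,+\, \tfrac{1}{2}\,D_h^2\widehat G_{\lambda_n}(\hat\theta,\dot h)[h^\ast,h^\ast].$$
The quadratic remainder is $|D_h^2\widehat G_{\lambda_n}(\hat\theta,\dot h)[h^\ast,h^\ast]|\leq C t^2$ w.p.a.1, using only (i) the uniform boundedness $\sup_{\theta\in\Theta}\|\Psi(x,z;\theta)\|_{\mathcal{H}^\ast}\leq C_\psi$ from compactness of $\mathcal{X}\times\mathcal{Z}\times\Theta$ and continuity of $\Psi$, and (ii) $\phi_2(\Psi_i(\hat\theta)(\dot h))\to-1$ uniformly over $i$ and $\theta\in\Theta$ by Lemma~\ref{lemma:1:first} together with continuity of $\phi_2$ at $0$.

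\emph{Combining.} Stacking the two bounds gives $\phi_0+t\|\bar\Psi\|_{\mathcal{H}^\ast}-\tfrac{C}{2}t^2\leq\widehat G_{\lambda_n}(\hat\theta,h^\ast)\leq\phi_0+O_p(n^{-1})$, so with $t=n^{-1/2}$ we conclude $n^{-1/2}\|\bar\Psi\|_{\mathcal{H}^\ast}\leq O_p(n^{-1})$, i.e.\ $\|\bar\Psi\|_{\mathcal{H}^\ast}=O_p(n^{-1/2})$ as claimed. The main obstacle I anticipate is justifying the quadratic-remainder bound at the data-dependent $\hat\theta$: Lemma~\ref{lemma:1:non-singular} is stated for a \emph{consistent} plug-in $\bar\theta$, but at this point in the argument we do not yet have consistency of $\hat\theta$. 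Resolving this requires observing that only the \emph{upper} eigenvalue bound of $\widehat\Omega_{\lambda_n}(\dot h,\cdot)$ is needed, and its proof relies purely on uniform boundedness of $\|\Psi\|_{\mathcal{H}^\ast}$ and on $\dot h\in\mathcal{H}_n$ driving $\phi_2$ to $-1$, both of which hold uniformly in $\theta\in\Theta$ and thus in particular at $\hat\theta$.
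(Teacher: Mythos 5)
Your proposal is correct and follows the same strategy as the paper's proof: sandwich $\sup_h \widehat{G}_{\lambda_n}(\hat\theta,h)$ between the upper bound $\phi_0+O_p(n^{-1})$ obtained from the minimizing property of $\hat\theta$ together with Lemma~\ref{lemma:1:second} applied at $\bar\theta=\theta_0$ (justified exactly as you say, via Assumption~h and Lemma~\ref{lemma:1:bennett}), and a lower bound obtained by probing with a multiple of the Riesz representer of $E_{\ppn}[\Psi(X,Z;\hat\theta)]$ and Taylor expanding about $h=0$ with the quadratic term controlled by the largest eigenvalue of $\widehat{\Omega}_{\lambda_n}(\dot h,\hat\theta)$. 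The one genuine difference is the final step: the paper runs the argument twice, first with a normalized probe of norm $n^{-\zeta}$ to obtain the crude rate $\|\hat\Psi\|_{\mathcal{H}^\ast}=O_p(n^{-\zeta})$, and then with the unnormalized probe $-\epsilon_n\mu(\hat\Psi)$ for arbitrary $\epsilon_n\to 0$ to bootstrap this to $O_p(n^{-1/2})$; you instead take the normalized probe directly at scale $t=n^{-1/2}$, which is admissible since $n^{-1/2}\leq n^{-\zeta}$ places it in $\mathcal{H}_n$ regardless of the (unknown) magnitude of $\hat\Psi$, and this collapses the two-step refinement into one inequality, $n^{-1/2}\|\hat\Psi\|_{\mathcal{H}^\ast}\leq \tfrac{C}{2}n^{-1}+O_p(n^{-1})$. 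Your closing remark is also well taken: the paper cites Lemma~\ref{lemma:1:non-singular} at $\hat\theta$, whose hypotheses formally include consistency of the plug-in, whereas only the upper eigenvalue bound is needed here and, as you note, that part rests solely on uniform boundedness of $\|\Psi(x,z;\theta)\|_{\mathcal{H}^\ast}$ over the compact domain and on Lemma~\ref{lemma:1:first} driving $\phi_2$ to $\phi_2(0)$ uniformly over $\theta\in\Theta$, so it holds at $\hat\theta$ without circularity. (A cosmetic point shared with the paper's own probe: the normalized direction is undefined on the event $\|\hat\Psi\|_{\mathcal{H}^\ast}=0$, where the claim is trivial anyway.)
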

\begin{proof}
Define $\hat{\Psi}_{i} := \Psi_i(\hat{\theta}) :=\Psi(x_i,z_i;\hat{\theta})$ and $\hat{\Psi}= \frac{1}{n} \sum_{i=1}^n \hat{\Psi}_i$.
Let $\mu(\hat{\Psi})$ be the Riesz representer of $\hat{\Psi} \in \mathcal{H}^\ast$ in $\mathcal{H}$.
Further, let $1/\nu < \zeta < 1/2$ be defined as in Lemma~\ref{lemma:1:first} and consider $\tilde{h} = -n^{-\zeta} \mu(\hat{\Psi})/\| \mu(\hat{\Psi}) \|_{\mathcal{H}}$, which implies $\tilde{h} \in \mathcal{H}_n$ and therefore by Lemma~\ref{lemma:1:first} $\max_{1\leq i \leq n} \|\hat{\Psi}_i(\tilde{h}) \| \overset{p}{\rightarrow} 0$ and $\tilde{h} \in \widehat{\mathcal{H}}(\hat{\theta})$ w.p.a.1.
Using the same steps as in the proof of Lemma~\ref{lemma:1:second} we can Taylor expand the empirical FGEL objective about $h = 0$,
\begin{align*}
    & \widehat{G}_{\lambda_n}(\hat{\theta}, \tilde{h}) =  \phi(0) - \hat{\Psi}(\tilde{h}) -  \frac{1}{2} \langle \tilde{h}, \widehat{\Omega}_{\lambda_n}(\Dot{h},\hat{\theta}) \tilde{h} \rangle_\mathcal{H},
\end{align*}
for some $\Dot{h}$ on the line between 0 and $\tilde{h}$. 
Now by Lemma~\ref{lemma:1:non-singular} the largest eigenvalue of the regularized covariance operator $\widehat{\Omega}_{\lambda_n}(\Dot{h}, \hat{\theta})$ can be bounded by a constant $\tilde{C} > 0$ w.p.a.1.
Therefore, we have w.p.a.1,
\begin{align*}
    \widehat{G}_{\lambda_n}(\hat{\theta}, \tilde{h}) \geq  \phi(0) + n^{-\zeta} \| \hat{\Psi} \|_{\mathcal{H}^\ast} - C n^{-2\zeta},
\end{align*}
where for the second term we have used the definition of $\tilde{h}$ and the notation $C = \frac{1}{2} \tilde{C}$.
Consider $\bar{\theta} = \theta_0$ in Lemma~\ref{lemma:1:second}, for which the requirements are fulfilled as $\|E_{\hat{P}_n}[\Psi(X,Z;{\theta_0})] \|_{\mathcal{H}^\ast} = O_{p}\left(n^{-1 / 2}\right)$ by Lemma~\ref{lemma:1:bennett} using Assumption~h).
Moreover, being the solution to the mini-max problem, $(\hat{\theta}, \hat{h})$ correspond to a saddle point of the \emph{empirical} FGEL objective $\widehat{G}_{\lambda_n}$. Using this and Lemma~\ref{lemma:1:second} we have
\begin{align*}
    & \phi(0) + n^{-\zeta} \| \hat{\Psi} \|_{\mathcal{H}^\ast} - C n^{-2\zeta}
    \leq \widehat{G}_{\lambda_n}(\hat{\theta}, \tilde{h})
    \leq  \widehat{G}_{\lambda_n} (\hat{\theta}, \hat{h})
    \leq \sup_{h \in \widehat{\mathcal{H}}(\theta_0)} \widehat{G}_{\lambda_n}(\theta_0, h) 
    \leq \phi(0) + O_p(n^{-1}).
\end{align*}
Now, subtracting $\phi(0)$ on both sides and solving for $\|\hat{\Psi} \|_{\mathcal{H}^\ast}$, we obtain 
\begin{align}
    \|\hat{\Psi} \|_{\mathcal{H}^\ast} \leq O_p(n^{\zeta - 1}) + C n^{-\zeta} = O_p(n^{-\zeta}),
    \label{eq:1:19} 
\end{align}
which follows as $1 / \nu < \zeta < 1/2$ and therefore $\zeta - 1 <  - 1/2 < - \zeta$.
Consider any $\epsilon_n \rightarrow 0$ and let $\tilde{h} = - \epsilon_n \mu(\hat{\Psi})$. Then by \eqref{eq:1:19}, $\tilde{h} = o_p(n^{-\zeta})$ and therefore $\tilde{h} \in \mathcal{H}_n$ w.p.a.1. Then as previously we have
\begin{align*}
    \phi(0) - \hat{\Psi}(\tilde{h}) - C \| h \|_{\mathcal{H}}^2 
    =  \  \phi(0) + \epsilon_n \| \hat{\Psi} \|_{\mathcal{H}^\ast}^2 - C \epsilon_n^2 \| \hat{\Psi} \|_{\mathcal{H}^\ast}^2 
    \leq \  \phi(0) + O_p(n^{-1}).
\end{align*}
As $1-\epsilon_n C$ is bounded away from zero, for all $n$ large enough, we have $\epsilon_n  \|\hat{\Psi} \|_{\mathcal{H}^\ast}^2 = O_p(n^{-1})$. As this holds for all $\epsilon_n \rightarrow 0$, it follows that $ \|\hat{\Psi} \|_{\mathcal{H}^\ast} = O_p(n^{-1/2})$.
\end{proof}
\paragraph{Proof of Theorem~\ref{th:1:consistency}}
\begin{proof}
Define $\hat{\Psi}_i = \Psi(x_i,z_i;\hat{\theta})$ and $\hat{\Psi} = \frac{1}{n}\sum_{i=1}^n \hat{\Psi}_i$. By Assumption~h) and Lemma~\ref{lemma:1:bennett}, we have $\| \hat{\Psi}(\theta) - E [\Psi(X,Z;\theta)] \|_{\mathcal{H}^\ast} = O_p(n^{-1/2})$ for any $\theta \in \Theta$. From Lemma~\ref{lemma:1:third} we also have $\| \hat{\Psi} \|_{\mathcal{H}^\ast} = O_p(n^{-1/2})$ and thus using the triangle inequality we get
\begin{align*}
    \left\| E[\Psi(X,Z;\hat{\theta})] \right\|_{\mathcal{H}^\ast} &= \left\| E[\Psi(\hat{\theta})] -\hat{\Psi}  + \hat{\Psi} \right\|_{\mathcal{H}^\ast} \\
    & \leq \left\| E[\Psi(X,Z;\hat{\theta})] -\hat{\Psi}  \right\|_{\mathcal{H}^\ast} + \left\|  \hat{\Psi} \right\|_{\mathcal{H}^\ast} \\
    &= O_p(n^{-1/2}) \overset{p}{\rightarrow} 0.
\end{align*}
As by Assumption~a) $\theta_0$ is the unique parameter for which $ \| E[\Psi(X,Z;\theta)] \|_{\mathcal{H}^\ast} = 0$ it follows that $\hat{\theta} \overset{p}{\rightarrow} \theta_0$. 

Following the proof of Theorem~A.1 of~\citet{pmlr-v202-kremer23a} we can use this result to translate the convergence rate of the moment functional to a convergence rate of the FGEL estimator $\hat{\theta}$ using Assumptions~i)-k). The proof is identical to the one provided by \citet{pmlr-v202-kremer23a} and we state it here merely for completeness.

By the mean value theorem, there exists $\bar{\theta} \in \operatorname{conv}(\{\theta_0, \hat{\theta} \})$ such that 
\begin{align*}
    \Psi(X,Z;\hat{\theta}) = \Psi(X,Z;\theta_0) + (\hat{\theta} - \theta_0)^T \nabla_\theta \Psi(X,Z;\bar{\theta}).
\end{align*}
Using this we have
\begin{align*}
    \|E[\Psi(X,Z;\hat{\theta})] \|^2_{\mathcal{H}^\ast} & = \| \underbrace{E[\Psi(X,Z;\theta_0)]}_{=0} + (\hat{\theta} - \theta_0)^T E[ \nabla_\theta \Psi(X,Z;\bar{\theta})]\|^2_{\mathcal{H}^\ast} \\
    &= \left\langle (\hat{\theta} - \theta_0)^T E[ \nabla_\theta \Psi(X,Z;\bar{\theta})], (\hat{\theta} - \theta_0)^T E[ \nabla_\theta \Psi(X,Z;\bar{\theta})] \right\rangle_{\mathcal{H}^\ast} \\
    &= (\hat{\theta} - \theta_0)^T \underbrace{\left\langle  E[ \nabla_\theta \Psi(X,Z;\bar{\theta})], E[ \nabla_{\theta^T} \Psi(X,Z;\bar{\theta})] \right\rangle_{\mathcal{H}^\ast}}_{=: \Sigma(\bar{\theta})} (\hat{\theta} - \theta_0) \\
    &\geq \lambda_\text{min}\left(\Sigma(\bar{\theta})\right) \| \hat{\theta} - \theta_0 \|^2_2
\end{align*}
Now as $\hat{\theta} \overset{p}{\rightarrow} \theta_0$ and $\bar{\theta} \in \operatorname{conv}(\{\theta_0, \hat{\theta} \})$ we have $\bar{\theta} \overset{p}{\rightarrow} \theta_0$ and thus $\Sigma(\bar{\theta}) \overset{p}{\rightarrow} \Sigma(\theta_0) =: \Sigma_0$ by the continuous mapping theorem. By the non-negativity of the norm $\Sigma_0$ is positive-semi definite and non-singular by Assumption~k), thus the smallest eigenvalue of $\Sigma(\bar{\theta})$, $\lambda_\text{min}(\Sigma(\bar{\theta}))$, is positive and bounded away from zero w.p.a.1. Finally as $\|E[\Psi(X,Z;\hat{\theta})] \| = O_p(n^{-1/2})$ taking the square-root on both sides we have $\|\hat{\theta} - \theta_0 \| = O_p(n^{-1/2 })$.

\end{proof}
\subsubsection{Proof of Theorem~\ref{th:1:asymptotic-normality} (Asymptotic Normality)}

\begin{lemma}\label{lemma:1:invertible}
    Let $\Omega_0 = E[\Psi(X,Z;\theta_0)^\ast \Psi(X,Z;\theta_0)] \in \mathcal{H} \times \mathcal{H}$ and $\Sigma_0 =  \left\langle E[\nabla_\theta \Psi(X,Z;\theta_0)], E[  \nabla_{\theta^T} \Psi(X,Z;\theta_0)] \right\rangle_{\mathcal{H}^\ast} \in \mathbb{R}^{p\times p}$ be non-singular.
    Then the matrix
    \begin{align*}
    M = - \begin{pmatrix}
        0 & \nabla_\theta \Psi_0 \\
        \nabla_{\theta^T} \Psi_{0}^\ast & \Omega_0
    \end{pmatrix}.
    \end{align*}
    is invertible and its inverse is given by
    \begin{align*}
        M^{-1} = \begin{pmatrix}
            \Xi & - \Xi  \left( \nabla_\theta \Psi_0 \right) \Omega_0^{-1} \\ 
            -\Omega_0^{-1} \left( \nabla_{\theta^T} \Psi_0^\ast \right) \Xi & \Omega_0^{-1} + \Omega_{0}^{-1} \left( \nabla_{\theta^T} \Psi_0^\ast \right) \Xi \left( \nabla_\theta \Psi_0 \right) \Omega_0^{-1}
        \end{pmatrix}
    \end{align*}
    with $\Xi = \left(\left( \nabla_\theta \Psi_0 \right) \Omega_0^{-1} \left( \nabla_{\theta^T} \Psi_0^\ast \right) \right)^{-1} \in \mathbb{R}^{p\times p}$.
\end{lemma}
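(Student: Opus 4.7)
The plan is to prove the lemma as a direct application of the Schur complement block-inversion formula, combined with a short argument showing that the relevant Schur complement is in fact invertible under the stated hypotheses. Writing $B := \nabla_\theta \Psi_0$ (a row of $p$ elements of $\mathcal{H}^\ast$) and $C := \nabla_{\theta^T}\Psi_0^\ast$ (a column of $p$ elements of $\mathcal{H}$), the matrix of interest is $M = -\begin{pmatrix} 0 & B \\ C & \Omega_0 \end{pmatrix}$, and the bottom-right block $\Omega_0$ is non-singular by the hypothesis carried over from Theorem~\ref{th:1:consistency} (assumption~e)). The natural choice is therefore the Schur complement with respect to $\Omega_0$, namely $S = 0 - B\Omega_0^{-1}C = -\Xi^{-1}$.

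First I would verify that $S$ is invertible, i.e.\ that $\Xi = (B\Omega_0^{-1}C)^{-1}$ exists in $\mathbb{R}^{p\times p}$. Since $\Omega_0$ is a non-singular (positive) covariance operator on $\mathcal{H}$, the form $(a,b)\mapsto a\,\Omega_0^{-1} b^\ast$ on $\mathcal{H}^\ast\times \mathcal{H}^\ast$ is positive definite, so the $p\times p$ Gram-like matrix $B\Omega_0^{-1}C$ is non-singular if and only if the entries of $B$ are linearly independent in $\mathcal{H}^\ast$. The same linear-independence condition is what makes $\Sigma_0 = \langle B, B^\ast \rangle_{\mathcal{H}^\ast}$ non-singular, so the assumption of the lemma guarantees invertibility of $\Xi^{-1}$.

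Next I would apply the standard block inversion formula
\begin{equation*}
\begin{pmatrix} A & B \\ C & D \end{pmatrix}^{-1} = \begin{pmatrix} S^{-1} & -S^{-1} B D^{-1} \\ -D^{-1} C S^{-1} & D^{-1} + D^{-1} C S^{-1} B D^{-1} \end{pmatrix}
\end{equation*}
with $A=0$, $D=\Omega_0$, $S=-\Xi^{-1}$, and then multiply the whole result by $-1$ to account for the overall sign in $M$. Substituting $S^{-1} = -\Xi$ yields exactly the block structure of $M^{-1}$ claimed in the statement, with the off-diagonal blocks $-\Xi B\Omega_0^{-1}$ and $-\Omega_0^{-1} C \Xi$ and a bottom-right block assembled from $\Omega_0^{-1}$ and the cross term $\Omega_0^{-1} C \Xi B \Omega_0^{-1}$. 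The verification is essentially bookkeeping; to make the proof self-contained I would also perform a direct check $M M^{-1} = I$ block by block, using the key identity $B\Omega_0^{-1} C\,\Xi = \Xi^{-1}\Xi = I_p$ to collapse the cross terms.

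The only genuine subtlety, and the place I would slow down, is the operator/dual bookkeeping: $B$ is valued in $\mathcal{H}^\ast$ while $C$ is valued in $\mathcal{H}$, and products such as $\Omega_0^{-1} C \Xi B \Omega_0^{-1}$ are operators $\mathcal{H}\to\mathcal{H}$, whereas $\Xi$ itself is a $p\times p$ real matrix. Making sure that each product is well-typed under the Riesz identification of $\mathcal{H}$ with $\mathcal{H}^\ast$, and that all four blocks of the product $M M^{-1}$ land in the correct space ($\mathbb{R}^{p\times p}$, $\mathbb{R}^p\otimes \mathcal{H}^\ast$, $\mathcal{H}\otimes \mathbb{R}^p$, and operators on $\mathcal{H}$, respectively), is the only place where a slip can occur. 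Once that is in place, the formula is forced.
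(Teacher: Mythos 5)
Your proposal is correct and follows essentially the same route as the paper: invert $M$ via the Schur complement with respect to the non-singular block $\Omega_0$, establish invertibility of that Schur complement $-\,(\nabla_\theta\Psi_0)\Omega_0^{-1}(\nabla_{\theta^T}\Psi_0^\ast)$ from the non-singularity of $\Omega_0$ and $\Sigma_0$, and then read off the blocks from the standard block-inversion formula. The only cosmetic difference is that you argue the Schur complement's invertibility via a linear-independence/Gram-matrix argument, whereas the paper makes the same point through an explicit eigenvalue bound showing the Schur complement is negative definite.
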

\begin{proof}
    In order to find the inverse of $M$ we resort to standard blockmatrix algebra. Note that the inverse of a matrix 
    \begin{align*}
        P = \begin{pmatrix}
            A & B \\ C & D
        \end{pmatrix}
    \end{align*}
    can be derived as
    \begin{align*}
        P^{-1} = \begin{pmatrix}
        (A - B D^{-1} C)^{-1} & - (A - B D^{-1} C)^{-1} B D^{-1} \\
        - D^{-1} C (A - B D^{-1} C)^{-1} & D^{-1} + D^{-1} C (A - B D^{-1} C)^{-1} B D^{-1}
        \end{pmatrix}
    \end{align*}
    whenever $D$ and its Schur complement in $P$, $P/D = A - B D^{-1}C$ are invertible (see e.g., \citet{bernstein2009matrix}).

    The Schur complement of $\Omega_0$ in $M$ is a matrix $\Gamma \in \mathbb{R}^{p\times p}$ defined as
    \begin{align*}
        \Gamma := M / \Omega_0 = - \left( \nabla_\theta \Psi_0 \right) \Omega_{0}^{-1} \left( \nabla_{\theta^T} \Psi_0^\ast  \right).
    \end{align*}
    Now, as $\Omega_0$ is a positive definite operator by assumption, its smallest eigenvalue is bounded away from zero. It immediately follows that the smallest eigenvalue of $\Omega^{-1}_0$, $\lambda_\mathrm{min}(\Omega_0^{-1}) > 0$ is bounded away from zero and thus we have
    for any $\theta \in \Theta$ with $\|\theta \|_2 >0$,
    \begin{align*}
        \theta^T \Gamma \theta &= - \theta^T \left( \nabla_\theta \Psi_0 \right) \Omega_0^{-1} \left( \nabla_{\theta_T} \Psi_0^\ast \right) \theta \\
        &\leq - \lambda_\mathrm{min}(\Omega_0^{-1}) \theta^T \underbrace{\left\langle E[\nabla_\theta \Psi(X,Z;\theta_0)], E[  \nabla_{\theta^T} \Psi(X,Z;\theta_0)] \right\rangle_{\mathcal{H}^\ast}}_{=\Sigma_0} \theta \\
        & \leq - \lambda_\mathrm{min}(\Omega_0^{-1}) \lambda_{\mathrm{min}}(\Sigma_0) \|\theta \|_2^2 \\
        &< 0,
    \end{align*}
    where we used that $\Sigma_0$ is symmetric by construction and non-singular by assumption and therefore positive definite so its smallest eigenvalue $\lambda_\mathrm{min}(\Sigma_0) >0$. Now as $\Gamma \in \mathbb{R}^{p\times p }$ is a strictly negative definite matrix it is non-singular and thus invertible.
    Finally, as $\Omega_0$ and its Schur complement in $M$, $\Gamma$, are invertible, we can employ the standard blockmatrix inversion formula to arrive at the result.
\end{proof}

\paragraph{Proof of Theorem~\ref{th:1:asymptotic-normality}}
The proof generalizes Theorem~3.2 of \citet{newey04} to our regularized continuum estimator.
\begin{proof}
Define $\Psi_{i}(\theta) := \Psi_i(x_i,z_i;\theta)$ and $\hat{\Psi} = \frac{1}{n} \sum_{i=1}^n \Psi_i(\theta)$ and analogous $\Psi^\ast_i(\theta) = \Psi^\ast_i(x_i, z_i; \theta)$ and $\Psi^\ast(\theta) =  \frac{1}{n} \sum_{i=1}^n \Psi^\ast_i(\theta)$.
Let $\hat{\theta}, \hat{h}$ denote the FGEL estimates of the parameters $\theta$ and Lagrange multiplier function $h$. 
The first order optimality conditions of the saddle point objective \eqref{eq:1:gel-objective} are given by
\begin{align}
    D_h \widehat{G}_{\lambda_n}(\hat{\theta}, \hat{h}) &= \frac{1}{n}\sum_{i=1}^n \phi_1(\Psi_i(\hat{\theta})(\hat{h})) \Psi_i^\ast(\hat{\theta}) - \lambda_n \hat{h} = 0 \label{eq:1:first} \\ 
    \nabla_\theta \widehat{G}_{\lambda_n}(\hat{\theta}, \hat{h}) &= \frac{1}{n}\sum_{i=1}^n \phi_1(\Psi_i(\hat{\theta})(\hat{h})) (\nabla_\theta \Psi_i(\hat{\theta}))(\hat{h}) = 0, \label{eq:1:second}
\end{align}
where $\nabla_\theta \Psi_i(\theta) \in \Theta \times \mathcal{H}^\ast$ is the gradient of the function $\theta \mapsto \Psi_i(\theta)$  w.r.t.\ $\theta$.
Define $\beta = (\theta, h)$ then using Taylor's theorem (Proposition~\ref{prop:1:taylor}) we can linearize the first order conditions about the true parameters $\beta_0 = (\theta_0, 0)$ which yields for the first condition \eqref{eq:1:first}
\begin{align*}
    0 =&  - \frac{1}{n} \sum_{i=1}^n \Psi_i^\ast(\theta_0) + \frac{1}{n}\sum_{i=1}^n \phi_2(\Psi_i(\Dot{\theta})(\Dot{h})) \Psi_i^\ast(\Dot{\theta}) \Psi_i(\Dot{\theta})(\hat{h}) \\
    &- \lambda_n \hat{h} + \frac{1}{n}\sum_{i=1}^n \phi_1(\Psi_i(\Dot{\theta})(\Dot{h})) \nabla_{\theta^T} \Psi_i^\ast(\Dot{\theta}) (\hat{\theta} - \theta_0) \\
    &  + \frac{1}{n}\sum_{i=1}^n \phi_2(\Psi_i(\Dot{\theta})(\Dot{h})) \Psi_i^\ast(\Dot{\theta})(\nabla_{\theta^T} \Psi_i(\Dot{\theta}))(\Dot{h}) (\hat{\theta} - \theta_0),
\end{align*}
where $(\Dot{\theta}, \Dot{h})$ lies on the line between $(\hat{\theta}, \hat{h})$ and $(\theta_0, 0)$. For the second condition \eqref{eq:1:second} we obtain
\begin{align*}
    0 =& \frac{1}{n}\sum_{i=1}^n \phi_1(\Psi_i(\bar{\theta})(\bar{h})) (\nabla_\theta \Psi_i(\bar{\theta}))(\hat{h})  \\
    & + \frac{1}{n}\sum_{i=1}^n \phi_2(\Psi_i(\bar{\theta})(\bar{h})) (\nabla_\theta \Psi_i(\bar{\theta}))(\bar{h})  \Psi_i(\bar{\theta})(\hat{h}) \\
    & + \Bigg\{ \frac{1}{n}\sum_{i=1}^n \phi_2(\Psi_i(\bar{\theta})(\bar{h})) (\nabla_\theta \Psi_i(\bar{\theta}))(\bar{h}) (\nabla_\theta \Psi_i(\bar{\theta}))(\bar{h}) \\
    & + \frac{1}{n}\sum_{i=1}^n \phi_1(\Psi_i(\bar{\theta})(\bar{h})) D^2_\theta(\Psi_i(\bar{\theta}))(\bar{h}) \Bigg\} (\hat{\theta} - \theta_0),
\end{align*}
where again $(\bar{\theta}, \bar{h})$ lies on the line between $(\hat{\theta}, \hat{h})$ and $(\theta_0, 0)$.
Now as $\hat{h} = o_p(1)$ we have $\bar{h} = o_p(1)$ and  $ \Dot{h} = o_p(1)$. Therefore for $n\rightarrow \infty$ most terms go to zero and we are left with the first condition \eqref{eq:1:first} reducing to
\begin{align*}
    0 =&  - \frac{1}{n} \sum_{i=1}^n \Psi_i^\ast(\theta_0) + \frac{1}{n}\sum_{i=1}^n \phi_2(\Psi_i(\Dot{\theta})(\Dot{h})) \Psi_i^\ast(\Dot{\theta}) \Psi_i(\Dot{\theta})(\hat{h}) \\
    &- \lambda_n \hat{h} + \frac{1}{n}\sum_{i=1}^n \phi_1(\Psi_i(\Dot{\theta})(\Dot{h})) \nabla_{\theta^T} \Psi_i^\ast(\Dot{\theta}) (\hat{\theta} - \theta_0) \\
    &+ o_p(1) \\
\end{align*}
and the second \eqref{eq:1:second} to
\begin{align*}
    0 =& \frac{1}{n}\sum_{i=1}^n \phi_1(\Psi_i(\bar{\theta})(\bar{h})) (\nabla_\theta \Psi_i(\bar{\theta}))(\hat{h}) + o_p(1).\\
\end{align*}
As $\hat{h} = O_p(n^{-1/2})$ and $\bar{h}, \Dot{h}$ lie between $\hat{h}$ and $0$, the conditions of Lemma~\ref{lemma:1:first} are fulfilled, i.e., $\bar{h}, \Dot{h} \in \mathcal{H}_n$, and hence $\max_{1\leq i \leq n} |\Psi_i(\bar{\theta})(\bar{h})| \overset{p}{\rightarrow} 0$ and $\max_{1\leq i \leq n} |\phi_1(\Psi_i(\bar{\theta})(\bar{h})) + 1| \overset{p}{\rightarrow}0$ as well as $\max_{1\leq i \leq n} |\phi_2(\Psi_i(\bar{\theta})(\bar{h})) + 1| \overset{p}{\rightarrow}0$ and the same equivalently holds for $\Dot{\theta}$ and $\Dot{h}$. 

Define $\hat{\beta} = (\hat{\theta}, \hat{h})$ and $\beta_0 = (\theta_0, 0)$, then, we can write the conditions in matrix form as
\begin{align}
    \begin{pmatrix}
        0 \\ 0
    \end{pmatrix} = \begin{pmatrix} 0 \\ - \frac{1}{n} \sum_{i=1}^n \Psi_i^\ast(\theta_0) \end{pmatrix} + 
    \underbrace{
    \begin{pmatrix}
        0 & - \frac{1}{n} \sum_{i=1}^n \nabla_\theta \Psi_i(\Dot{\theta}) \\ 
        - \frac{1}{n} \sum_{i=1}^n \nabla_{\theta^T} \Psi_i^\ast(\bar{\theta}) &
        - \widehat{\Omega}_{\lambda_n}(\Dot{h}, \Dot{\theta})
    \end{pmatrix} 
    }_{=: M_n}
    (\hat{\beta} - \beta_0) + o_p(1)
    \label{eq2}
\end{align}
Now by the weak law of large numbers and the continuous mapping theorem, we have $\widehat{\Omega}_{\lambda_n}(\Dot{h}, \Dot{\theta}) \overset{p}{\rightarrow} \Omega(\theta_0) =: \Omega_{0}$, which is a non-singular and thus invertible operator. Further for the off-diagonal elements, as $\hat{\theta} \overset{p}{\rightarrow} \theta_0$ and by continuity of $\theta \mapsto \nabla_\theta \Psi(x,z;\theta)$ we have by the continuous mapping theorem and the weak law of large numbers $- \frac{1}{n} \sum_{i=1}^n \nabla_\theta \Psi_i(\Dot{\theta}) \overset{p}{\rightarrow} - E[\nabla_{\theta}\Psi(X,Z;\theta_0)] =: - \nabla_\theta\Psi_0$. Therefore, as $n \rightarrow \infty$ we have $M_n \rightarrow M$ with
\begin{align*}
    M = - \begin{pmatrix}
        0 & \nabla_\theta \Psi_0 \\
        \nabla_{\theta^T} \Psi^\ast_{0} & \Omega_0
    \end{pmatrix}.
\end{align*}
Using Assumptions~e) and k) of Theorem~\ref{th:1:consistency}, it follows from Lemma~\ref{lemma:1:invertible} that the blockoperator $M$ is non-singular and thus invertible with inverse
\begin{align*}
    M^{-1} = -\begin{pmatrix}
        - B & C \\ C^\ast & D
    \end{pmatrix}
\end{align*}
with $B = ((\nabla_{\theta}\Psi_0) \Omega_0^{-1} (\nabla_{\theta^T}  \Psi_0^\ast))^{-1}$, $C = B (\nabla_{\theta}  \Psi_0) \Omega_{0}^{-1} $ and $D = \Omega_{0}^{-1} -   \Omega_{0}^{-1} (\nabla_{\theta^T}  \Psi_0^\ast) B  (\nabla_{\theta}  \Psi_0) \Omega_{0}^{-1} $.

With this at hand we can solve \eqref{eq2} for $\hat{\beta} - \beta_0$ which yields
\begin{align*}
    \sqrt{n} (\hat{\theta} - \theta) &= - \sqrt{n} C  \left( \frac{1}{n} \sum_{i=1}^n \Psi_i^\ast(\theta_0) \right)  + o_p(1) \\
    \sqrt{n} (\hat{h} - h) &= -\sqrt{n} D  \left( \frac{1}{n} \sum_{i=1}^n \Psi_i^\ast(\theta_0) \right) + o_p(1)
\end{align*}
Finally by the Donsker property of $\Psi$ we have $\sqrt{n} \left( \frac{1}{n} \sum_{i=1}^n \Psi(X,Z;\theta_0) \right) \sim \mathcal{N}(0, \Omega_0)$ and thus we get
\begin{align*}
    \sqrt{n} (\hat{\theta} - \theta_0) \sim \mathcal{N}(0, C \Omega_0 C^\ast)
\end{align*}
where 
\begin{align*}
    C \Omega_0 C^\ast &= ((\nabla_{\theta}\Psi_0) \Omega_0^{-1} (\nabla_{\theta^T}  \Psi_0^\ast))^{-1} (\nabla_{\theta}\Psi_0) \Omega_0^{-1} \Omega_0 \Omega_0^{-1} (\nabla_{\theta^T}\Psi_0^\ast) ((\nabla_{\theta}\Psi_0) \Omega_0^{-1} (\nabla_{\theta^T}  \Psi_0^\ast))^{-1} \\ 
    &= (\nabla_{\theta}\Psi_0) \Omega_0^{-1} (\nabla_{\theta^T}  \Psi_0^\ast))^{-1}
\end{align*}

\end{proof}
\subsubsection{Asymptotic Properties for CMR}
The asymptotic properties of the FGEL estimator for conditional moment restrictions follow by expressing the conditional moment restrictions \eqref{eq:1:conditional} in terms of the equivalent variational/functional formulation \eqref{eq:1:unconditional} and translating the assumptions of Theorems~\ref{th:1:consistency-cmr} and \ref{th:1:asymptotic-normality-cmr} into the conditions required for Theorems~\ref{th:1:consistency} and \ref{th:1:asymptotic-normality} respectively. The proofs are almost identical to the ones provided by \citet{pmlr-v202-kremer23a} for translating the results on their KMM estimator for functional moment restrictions to the conditional moment restriction case. Therefore, we only state the differences here and refer to the proofs of Theorems~3.4-3.6 of \citet{pmlr-v202-kremer23a} for details.

\paragraph{Proof of Theorem~\ref{th:1:consistency-cmr}}
The proof follows directly from the proof of Theorem~3.4 of \citet{pmlr-v202-kremer23a}, ignoring their assumption f) which is not required here and instead using that the additional Assumption~g) of the FGEL estimator is fulfilled by the identical Assumption~g) in Theorem~\ref{th:1:consistency}. Further the Donsker property h) of Theorem~\ref{th:1:consistency} is fulfilled by the corresponding Assumption~h) and Lemma~\ref{lemma:1:donsker} using that both $\psi$ and $h$ are uniformly bounded as continuous functions on compact domains.

\paragraph{Proof of Theorem~\ref{th:1:asymptotic-normality-cmr}}
The proof is identical to the one provided by \citet{pmlr-v202-kremer23a} for their Theorem~3.5.

\paragraph{Proof of Theorem~\ref{efficiency-cmr}}
Efficiency follows immediately from Theorem~\ref{th:1:asymptotic-normality-cmr} as the asymptotic variance of the FGEL estimator agrees with the semi-parametric efficiency bound for CMR estimators by \citet{CHAMBERLAIN1987305}.

\subsection{Kernel FGEL}
\paragraph{Proof of Theorem~\ref{th:1:equivalence}}
\begin{proof}
The proof follows the proof of Theorem~3.2 in \citet{muandet2020kernel}.
Equation~\eqref{eq:1:unconditional-rkhs} follows from \eqref{eq:1:conditional-rkhs} directly by the law of iterated expectation. To see this, assume $E[\psi(X;\theta)|Z] = 0 \  P_Z\textrm{-a.s.}$, then $\forall h \in \mathcal{H}$
\begin{align*}
    E[\Psi(X,Z;\theta)(h)] &=  E [ \psi(X;\theta) h(Z) ] \\
    &=  E [ E[ \psi(X;\theta) h(Z) |Z ]] \\
    &= E [ E[ \psi(X;\theta) |Z] h(Z) ] \\
    &= 0.
\end{align*}
For the other direction, 
note that $E[\Psi(X,Z;\theta)(h)]=0$ $\forall h \in \mathcal{H}$ implies $\sup_{h \in \mathcal{H}} E[\Psi(X,Z;\theta)(h)] = 0$ and thus
\begin{align*}
    0 & = \sup_{h \in \mathcal{H}} E[\Psi(X,Z;\theta)(h)]  \\
    &= \sum_{j=1}^m \sup_{\|h_j\|_{\mathcal{H}} \leq 1} E[\psi_j(X;\theta) h_j(Z)]  \\
    &= \sum_{j=1}^m \sup_{\|h_j \|_{\mathcal{H}} \leq 1}   \langle E[\psi_j(X;\theta) k_{j}(Z,\cdot)], h_j\rangle \\
    &= \sum_{j=1}^m \| E[\psi_j(X;\theta) k_{j}(Z,\cdot)]\|_{\mathcal{H}} \\
    &= \sum_{j=1}^m \| E_Z[ \underbrace{E_X[ \psi_j(X;\theta)| Z]}_{:= \xi_j(Z)}  k_j(Z,\cdot) ] \|_{\mathcal{H}}  \\
    &= \sum_{j=1}^m \| \int_{\mathcal{Z}} \xi_j(z) k_j(z,\cdot) p(z) dz \|_{\mathcal{H}}
\end{align*}
As each element of the sum is non-negative, we must have for $j=1,\ldots,m$,
\begin{align*}
0 &= \| \int_{\mathcal{Z}} \xi_j(z) k_j(z,\cdot) p(z) \mathrm{d}z \|_{\mathcal{H}} \\
&= \| \int_{\mathcal{Z}} \xi_j(z) k_j(z,\cdot) p(z) \mathrm{d}z \|_{\mathcal{H}}^2 \\
&= \int_{\mathcal{Z} \times \mathcal{Z}} \xi_j(z) \langle k_j(z,\cdot), k_j(z',\cdot)\rangle_{\mathcal{H}} \xi_j(z')  p(z) p(z') \mathrm{d}z \mathrm{d}z' \\
&= \int_{\mathcal{Z} \times \mathcal{Z}} \xi_j(z) k_j(z,z') \xi_j(z') p(z) p(z') \mathrm{d}z \mathrm{d}z'. 
\end{align*}
By definition of ISPD kernels (see Section~\ref{sec:1:FGEL}) this directly implies  $\| \xi_j(z) p(z)\|_2^2= 0$. It follows that $\xi_j(z)=0 \ \ \mathrm{a.e.}$ on the support of $p(z)$ and thus $P_Z(\{ z \in \mathcal{Z} : \xi_j(z)=0 \}) = 1$. Finally this implies 
$$\xi_j(Z) = E[ \psi_j(X;\theta)| Z] = 0 \ \ P_{Z}\textrm{-a.s.}, \ \ j=1,\ldots,m,$$
which completes the equivalence between \eqref{eq:1:conditional-rkhs} and \eqref{eq:1:unconditional-rkhs}.
\end{proof}
\paragraph{Proof of Corollary~\ref{cor:consistency-rkhs}}
\begin{proof}
    Equivalence between \eqref{eq:1:conditional} and \eqref{eq:1:unconditional} holds for any RKHS corresponding to a universal ISPD kernels by Theorem~\ref{th:1:equivalence}. Moreover, the local Lipschitz property is fulfilled as in any RKHS the evaluation functional is bounded, which implies that for any $h \in \mathcal{H}$
    \begin{align*}
        \| h(z_1) - h(z_2) \|  &= \| \langle h, k(z_1, \cdot) \rangle - \langle h, k(z_2, \cdot) \rangle \| \\
        &\leq \| h \| \|k(z_1,\cdot) - k(z_2,\cdot) \| \\
        &\leq C \left( \|k(z_1,\cdot) \|_\mathcal{H} + \|k(z_2, \cdot)  \|_\mathcal{H}\right) \\
        &\leq L
    \end{align*}
    where we used the Cauchy-Schwarz and triangle inequalities. Finally the Donsker property of $\mathcal{H}_1$ follows from Lemma~17 of \citet{bennett2020variational}.
\end{proof}
\paragraph{Proof of Lemma~\ref{th:1:optimization-problem}}
\begin{proof}
    The profile divergence can be written as 
    \begin{align*}
        R_{\lambda_n}(\theta) = \inf_{h \in \widehat{\mathcal{H}}} - \sum_{i=1}^n \phi( \Psi(x_i,z_i;\theta)(h) + \frac{\lambda_n}{2} \|h \|_{\mathcal{H}}.
    \end{align*}
    As $-\phi$ is a convex function and $\widehat{\mathcal{H}}$ is convex it follows that this is a convex optimization problem. Therefore, we can employ the representer theorem \citet{Schoelkopf01:Representer} and express each component $r$ of the $m$-dimensional vector of RKHS functions as $h_r (\cdot)=\sum_{i=1}^n (\alpha_{r})_i k_r(z_i,\cdot)$, with $\alpha_r \in \mathbb{R}^n$. Therefore, we get 
    \begin{align*}
        &\Psi(x_i,z_i;\theta)(h) = \sum_{r=1}^m \sum_{i,j=1}^n (\alpha_r)_j \left(K_r\right)_{ji} \psi_r(x_i,z_i;\theta)\\
        &\| h \|_{\mathcal{H}}^2 = \sum_{r=1}^m \sum_{i,j=1}^n (\alpha_{r})_i \langle k_r(z_i,\cdot) , k_r(z_j,\cdot) \rangle  (\alpha_{r})_j = \sum_{r=1}^m \alpha_r^T K_r \alpha_r
    \end{align*}
    Inserting this back into $R_{\lambda_n}(\theta)$ yields the result.
\end{proof}
\subsection{Additional Proofs}
\paragraph{Proof of Proposition~\ref{prop:1:equivalence}}
\begin{proof}
The result follows directly by inserting $\phi(v)= \left(1 \pm \frac{v}{2}\right)^2$ into \eqref{eq:1:gel-objective} and using that as $\mathcal{H}$ is a vector space, for every $h \in \mathcal{H}$, its negative $-h$ is also contained in $\mathcal{H}$. Therefore the first order conditions agree for the positive and negative sign in $\phi$.
\end{proof}
\end{document}